%% file: preprint.tex
\documentclass{article} 
\usepackage{iclr2027_conference,times}
\usepackage[T1]{fontenc}  

\input{math_commands.tex}

\usepackage{appendix}
\usepackage{url}
\usepackage{amsthm}
\usepackage{graphicx}
\usepackage{booktabs}
\usepackage{xcolor}   
\usepackage{colortbl} 
\usepackage{listings}
\usepackage{tcolorbox}
\tcbuselibrary{breakable,skins}
\definecolor{verylightgrey}{rgb}{0.95,0.95,0.95}
\definecolor{darkgrey}{rgb}{0.25,0.25,0.25}
\newtcolorbox{promptbox}[2][]{%
  colback=verylightgrey,
  colframe=black,
  coltitle=white,
  colbacktitle=darkgrey,
  boxrule=0.5pt,
  arc=5mm,
  outer arc=5mm,
  leftrule=1pt,
  rightrule=1pt,
  toprule=1pt,
  bottomrule=1pt,
  left=10pt,
  right=10pt,
  top=10pt,
  bottom=10pt,
  boxsep=0pt,
  enhanced,
  title={\centering\strut#2\strut},
  fonttitle=\bfseries\large,
  #1
}
\lstdefinestyle{promptstyle}{%
  basicstyle=\ttfamily\scriptsize,
  breaklines=true, breakatwhitespace=false,
  breakindent=0pt, breakautoindent=false,
  columns=fixed, keepspaces=true, showstringspaces=false,
  aboveskip=0pt, belowskip=0pt,
}
\lstnewenvironment{promptlisting}{\lstset{style=promptstyle}}{}
\usepackage{hyperref}

\usepackage{placeins}
\definecolor{darkblue}{rgb}{0,0,0.5}
\definecolor{rpgblue}{HTML}{0072B2}    
\definecolor{rpggreen}{HTML}{097D05}   
\definecolor{rpgamber}{HTML}{C88A00}   
\definecolor{rpgorange}{HTML}{D55E00}  
\definecolor{rpgtint}{HTML}{E8F2F9}
\hypersetup{
    colorlinks = true,
    citecolor  = brown,      
    linkcolor  = blue,       
    urlcolor   = darkblue,   
}
\usepackage{subcaption}  
\usepackage{cleveref}
\usepackage{bbm}
\usepackage{titletoc}
\usepackage[acronym,nowarn]{glossaries}[=v4.46]
\glsdisablehyper

\theoremstyle{plain}
\newtheorem{theorem}{Theorem}
\newtheorem{lemma}{Lemma}
\newtheorem{proposition}{Proposition}
\newtheorem{corollary}{Corollary}
\theoremstyle{definition}
\newtheorem{assumption}{Assumption}

\theoremstyle{remark}

\theoremstyle{plain}
\crefname{theorem}{Theorem}{Theorems}\Crefname{theorem}{Theorem}{Theorems}
\crefname{lemma}{Lemma}{Lemmas}\Crefname{lemma}{Lemma}{Lemmas}
\crefname{proposition}{Proposition}{Propositions}\Crefname{proposition}{Proposition}{Propositions}
\crefname{corollary}{Corollary}{Corollaries}\Crefname{corollary}{Corollary}{Corollaries}
\crefname{definition}{Definition}{Definitions}\Crefname{definition}{Definition}{Definitions}
\crefname{remark}{Remark}{Remarks}\Crefname{remark}{Remark}{Remarks}

\usepackage{fontawesome5}
\newcommand{\fnicon}[1]{\makebox[1.3em][l]{#1}}
\makeatletter
\newcommand{\blfootnote}[1]{%
  \begingroup
  \renewcommand{\thefootnote}{}%
  \renewcommand{\thempfootnote}{}%
  \long\def\@makefntext##1{\leftskip=1.8em\noindent##1}%
  \footnotetext{#1}%
  \endgroup}
\makeatother
  
\iclrfinalcopy
\title{Reward-rate Policy Gradient for Efficient Machine Learning Engineering Agents}

\author{Muhang Tian \quad Sherry Yang \\
New York University \\
\texttt{mt4193@nyu.edu} \\
}

\newlength{\algsetuplabel}
\algrenewcommand\algorithmicrequire{\makebox[\algsetuplabel][l]{\textbf{Require:}}}
\algrenewcommand\algorithmicensure{\makebox[\algsetuplabel][l]{\textbf{Ensure:}}}
\algnewcommand\algorithmicinitialize{\makebox[\algsetuplabel][l]{\textbf{Initialize:}}}
\algnewcommand\Initialize{\item[\algorithmicinitialize]}

\crefname{assumption}{Assumption}{Assumptions}\Crefname{assumption}{Assumption}{Assumptions}
\input{glossary}

\begin{document}

\maketitle
\ificlrfinal\blfootnote{\fnicon{\faGlobe}Website: \url{https://muhang-tian.com/reward-rate/}\\ \fnicon{\faGithub}Code: \url{https://github.com/MuhangTian/Reward-rate-Policy-Gradient}}\fi

\begin{abstract}
\input{contents/abstract}
\end{abstract}

\section{Introduction \label{intro}}
\input{contents/intro}

\section{Preliminaries \label{prelim}}
\input{contents/prelim}

\section{Reward-rate Policy Gradient \label{method}}
\input{contents/method}

\section{Reward-rate for MLE Agent \label{results}}
\input{contents/results}

\section{Related work \label{related}}
\input{contents/related}

\section{Conclusion}
\input{contents/conclusion}

\subsection*{AI Use Statement}
\input{contents/ai}

\bibliography{iclr2027_conference}
\bibliographystyle{iclr2027_conference}

\newpage
\appendix
\crefname{appendix}{appendix}{appendices}
\Crefname{appendix}{Appendix}{Appendices}
\begin{appendices}
    \startcontents[appendices]
    \printcontents[appendices]{l}{1}{%
        \section*{Appendix}%
        \setcounter{tocdepth}{3}%
    }
    \vspace{1em}
    \newpage
    \section{Derivation}
    \input{appendix/derivation}
    \newpage
    \section{Theoretical Analysis}\label{appendix:theory}
    \input{appendix/theory}
    \newpage
    \section{Implementation}
    \input{appendix/implementation}
    \newpage
    \section{Experiments}
    \input{appendix/experiments}
    \newpage
\end{appendices}

\end{document}

%% file: math_commands.tex
\usepackage{amsmath,amsfonts,bm}
\usepackage{algorithm}      
\usepackage[noEnd, commentColor=rpggreen]{algpseudocodex}

\def\1{\bm{1}}

\def\vone{{\bm{1}}}

\def\vpi{{\bm{\pi}}}

\def\va{{\bm{a}}}

\def\vs{{\bm{s}}}

\def\vz{{\bm{z}}}

\def\mF{{\mathbf{F}}}

\DeclareMathAlphabet{\mathsfit}{\encodingdefault}{\sfdefault}{m}{sl}
\SetMathAlphabet{\mathsfit}{bold}{\encodingdefault}{\sfdefault}{bx}{n}

\def\gA{{\mathcal{A}}}
\def\gB{{\mathcal{B}}}

\def\gN{{\mathcal{N}}}

\def\gS{{\mathcal{S}}}

\def\gU{{\mathcal{U}}}

\def\gX{{\mathcal{X}}}

\newcommand{\diag}{\mathrm{diag}}
\newcommand{\E}{\mathbb{E}}

\newcommand{\R}{\mathbb{R}}

\DeclareMathOperator*{\argmax}{arg\,max}

\newcommand{\indic}{\mathbbm{1}}

%% file: glossary.tex
\newacronym{rl}{RL}{reinforcement learning}
\newacronym{lm}{LM}{language model}
\newacronym{ml}{ML}{machine learning}

\newacronym{smdp}{SMDP}{Semi-Markov Decision Process}
\newacronym{mle}{MLE}{machine learning engineering}
\newacronym{ppo}{PPO}{Proximal Policy Optimization}
\newacronym{ours}{RPG}{Reward-rate Policy Gradient}
\newacronym{niw}{NIW}{Normal-Inverse-Wishart}
\newacronym{npg}{NPG}{natural policy gradient}
\newacronym{spg}{SPG}{standard policy gradient}
\newacronym{c-ucb}{C-UCB}{continuous-time upper confidence bound}

%% file: contents/abstract.tex
Traditional \gls{rl} techniques focus on maximizing expected cumulative reward, where each action assumes to take a constant unit of time.
However, this assumption does not hold for agentic \gls{rl} tasks such as \gls{mle} agents, where actions involve data loading, feature engineering, and model training that take variable durations.
Efficiency matters in modern agentic \gls{rl} where actions are costly.
To address this limitation, we adapt from continuous-time \gls{rl} and \gls{smdp} formulation and propose \gls{ours}, where we focus on optimizing the \textit{reward rate} --- the long-term reward per unit of time.
\gls{ours} estimates the reward rate from off-policy samples, then charges each action for the time it consumes at that rate.
We first conduct theoretical analysis in the bandit setting to establish that \gls{ours} approximates the optimal reward rate and empirically demonstrate it outperforms baselines while avoiding enumeration over the policy space, a known issue for an existing method.
We then further apply \gls{ours} on a small language model (Qwen3.5-4B) with self-improvement loops and empirically show it obtains higher rewards within a fixed time budget than vanilla \gls{rl} on MLE-Bench and NanoGPT, with a 19.2\% and 85.7\% margin, respectively.
Our method provides a practical solution for optimizing performance under wait time considerations in modern agentic \gls{rl} tasks, where actions interact with external environments and cost time.


%% file: contents/intro.tex
\gls{rl} has been used extensively to train \glspl{lm} and agents, where the standard approach is to maximize expected cumulative reward \citep{christiano_deep_2017,zhang2025landscape}.
Recently, there has been interest in \gls{mle} agents, where the goal is to automate the \gls{mle} pipeline to achieve best performance on held-out data \citep{chan2025mle,yang_reinforcement_2025,nam2026mle}.
Current approaches mostly apply a strong \gls{lm} in a search procedure: AIDE explores the space of candidate programs as a tree \citep{jiang_aide_2025}, while MLE-STAR retrieves an initial solution with a search engine and then refines its targeted components \citep{nam2026mle}.
A separate line trains the model itself with \gls{rl} on performance score \citep{yang_reinforcement_2025}.
However, none of these methods perform gradient updates that fundamentally change agents' behavior to \emph{learn} to generate more efficient solutions.
In \gls{mle}, each action takes variable time and resources, so it is important to consider samples' costs for practicality --- we want agents that are not only good, but also \emph{efficient}.


One solution is to use multi-objective \gls{rl} and search for the Pareto frontier between performance and time \citep{roijers_survey_2013,hayes_practical_2022}, but computing the set of optimal solutions is often infeasible and compute-heavy \citep{hayes_practical_2022}.
These limitations motivate an objective that \emph{intrinsically} aims for long-run efficiency.
One natural formulation is the long-run reward per unit of time studied in \glsreset{smdp}\gls{smdp} \citep{sutton_between_1999,das_solving_1999,gyorgy_continuous_2007}.
Intuitively, the method uses \gls{rl} to maximize relative rewards, calculated from total rewards minus the amount that it would have earned if time were spent optimally.
However, performing this computation requires knowing the optimal reward rate \textit{apriori}, which is often not the case in practice, giving a chicken-or-egg dilemma.
Existing approaches mainly use two ways to tackle this.
Directly optimizing the reward rate with \gls{smdp} policy gradient objective \citep{sutton_policy_1999} requires estimating the rate of the current policy from on-policy rollouts, an operation that is still expensive when actions are costly.
The continuous-time bandit approach \citep{gyorgy_continuous_2007} estimates the optimal reward rate from historical samples, but it requires an exhaustive search over the policy space, infeasible in modern agentic \gls{rl} settings where action space is large.

To that end, we propose \glsreset{ours}\gls{ours}, which uses off-policy historical samples to estimate the reward rate under the greedy policy and maximizes for relative reward (\Cref{fig:overview}).
\gls{ours} avoids the limitations of the two previous methods.
It does not need time-costly on-policy rollouts for each gradient step, and does not require searching over the policy space.
Additionally, \gls{ours} is simple to adopt in the current \gls{rl} pipeline, since it only requires a reward-rate estimator and uses the relative reward as the learning signal.
We first study \gls{ours} in the bandit setting and theoretically prove that, in expectation, if the greedy policy's reward rate is estimated well, then \gls{ours} is equivalent to the Dinkelbach iteration \citep{dinkelbach_nonlinear_1967,schaible_fractional_1976}, and \gls{npg} \citep{amari_natural_1998} is suited for reward-rate optimization due to preconditioning on Fisher information.
Experiments on the bandit setting confirm that \gls{ours} obtains lower regret than the reward-rate upper-confidence-bound algorithm by \cite{gyorgy_continuous_2007} without the need to search over the policy space.
These findings motivate us to apply \gls{ppo}, a first-order approximation of \gls{npg} methods \citep{kakade_natural_2001,schulman_trust_2015}, on MLE-Bench and NanoGPT.
Compared against vanilla \gls{rl} baselines on the same time budget, \gls{ours} shows a 19.2\% average improvement in performance across 20 out of 22 MLE-Bench tasks and an 85.7\% margin on NanoGPT.
Our findings demonstrate that \gls{ours} is a practical solution for future agentic \gls{rl} frameworks where actions are costly.

\begin{figure}[t!]
    \centering
    \includegraphics[width=\textwidth]{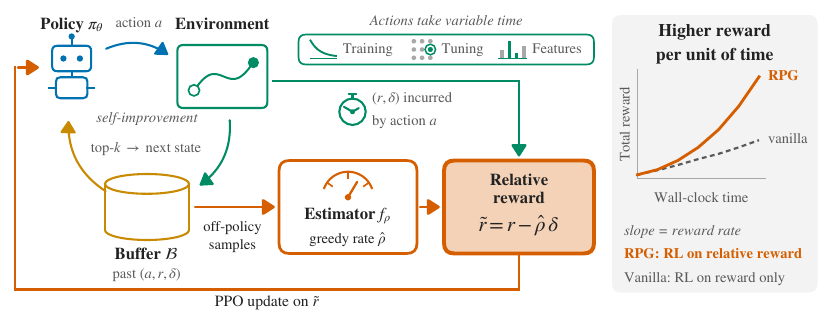}
    \caption{\textbf{Overview of \gls{ours} in \gls{mle}, where our goal is to optimize for reward per unit of time in agentic \gls{rl} contexts}.
    Our agent under policy $\pi_{\theta}$ is asked to self-improve upon its previous \gls{mle} scripts from buffer $\gB$, and each action $a$ is a proposed script that executes in variable time $\delta$ and admits a performance score $r$ such as AUC. 
    $\hat\rho$ is estimated from past samples in buffer $\gB$ and captures the cost per unit of time.
    Relative reward $\tilde r$ is then used in \gls{ppo} to perform \gls{rl} training.
    Maximizing the relative reward optimizes for reward rate, giving higher reward per unit of time and more efficient \gls{rl} agents.
    }
    \label{fig:overview}
\end{figure}

%% file: contents/prelim.tex
\subsection{Semi-MDP Policy Gradient\label{prelim:smdp}}
We introduce the notation and setup for the \gls{smdp} setting where each action takes a variable amount of time \citep{sutton_between_1999,das_solving_1999}. 
Let states be $s \in \gS$  and actions be $a \in \gA$. At each step $t$, the policy $\pi(a_t \mid s_t)$ is a conditional distribution on the actions. 
Taking the action $a_t$ gives reward $r_{t+1} := r(s_t, a_t)$ and time $\delta_{t+1} := \delta(s_t,a_t)$, and the transition model is a joint conditional distribution $p(s_{t+1},\delta_{t+1},r_{t+1} \mid s_t,a_t)$. Our objective is to maximize the long-term reward per unit of time:
\begin{equation}
    \rho(\pi) = \lim_{t \to \infty} \frac{\E_\pi \big[\sum_{j=1}^{t} r_j\big]}{\E_\pi \big[\sum_{j=1}^{t} \delta_{j}\big]}, \quad \pi^* = \argmax_{\pi} \rho(\pi).
    \label{eq:rr}
\end{equation}
For a parameterized policy $\pi_\theta$, taking the gradient gives
\begin{equation}
    \nabla_{\theta} \, \rho(\pi_{\theta}) = \frac{\sum_{s} d^{\pi_{\theta}}(s) \sum_{a} \nabla_{\theta} \pi_{\theta}(a \mid s) Q^{\pi_{\theta}}(s,a)}{\E_{\pi_{\theta}}[\delta]},
    \label{prelim:pg thm}
\end{equation}
where $Q^{\pi_\theta}(s,a) = \E_{\pi_\theta}\big[r - \rho(\pi_\theta)\,\delta + V^{\pi_\theta}(s') \mid s,a \big]$ is the \textit{differential} action-value function (see full derivation in \Cref{supp:method:smdp}). 
Empirically, the division by $\E_{\pi_{\theta}}[\delta]$ can be dropped since modern adaptive optimizers like Adam \citep{kingma_adam_2015} and gradient clipping reduce its impact on the gradients.
Thus, the implementation is a straight-forward plug-in using the \textit{relative reward} $\tilde r_t$ into existing \gls{rl} algorithms
\begin{align}
    \tilde{r}_t &:= r_t - \hat{\rho}_t\,\delta_t,
    \qquad
    e_t = \tilde{r}_t + V_\phi(s_{t+1}) - V_\phi(s_t),
    \qquad
    \hat{A}_t = \sum_{l \ge 0} \lambda^{l}\, e_{t+l},
     \\
    \mathcal{L}(\theta) &= \E_t\Big[\min\big\{w_t(\theta)\hat{A}_t,\;
    \mathrm{clip}\big(w_t(\theta), 1-\epsilon, 1+\epsilon\big)\hat{A}_t\big\}\Big],
    \qquad
    w_t(\theta) = \frac{\pi_\theta(a_t \mid s_t)}{\pi_{\theta_{\text{old}}}(a_t \mid s_t)},
    \label{eq:ppo}
\end{align}
where $\hat\rho_t$ is a Monte Carlo approximation of $\rho(\pi)$ from on-policy rollouts.
Intuitively, the relative reward $\tilde{r}_t$ is the gain obtained for the selected action when $\delta_t$ units of time are used at a cost of $\rho(\pi)$, so actions are ranked by their relative performance per unit of time.

\subsection{Continuous Time Bandit\label{prelim:bandit}}
Reward-rate \gls{rl} has also been studied in the continuous-time bandit setting \citep{gyorgy_continuous_2007}. 
The agent observes context $x_t$ drawn i.i.d. from a fixed distribution on a finite set $\gX$, and each action $a_t \in \gA$ yields a reward $r_{t+1}$ and time $\delta_{t+1}$, with $r \in [r_{\min}, r_{\max}]$ and $\delta \in [\delta_{\min},\delta_{\max}]$, $\delta_{\min} > 0$.
The policy $u \in \gU = \{u: \gX \to \gA\}$ is assumed to be stationary and deterministic. 
Rewards $\{r_j\}_{j=1}^{t}$ and durations $\{\delta_j\}_{j=1}^{t}$ are drawn i.i.d. across steps from a fixed distribution given $(x,a)$.
The objective is to maximize the \textit{relative value} under the optimal reward rate:
\begin{equation}
    q^*(x,a) \;=\; \E\big[r(x,a)\big] - \rho^*\,\E\big[\delta(x,a)\big],
    \qquad
    \rho^* = \max_{u \in \gU} \rho(u),
    \label{bandit:obj}
\end{equation}
so that a policy is optimal when it selects $a^* = \argmax_{a\in\gA} q^*(x,a)$ at every context.
When the reward rate is not optimal, $\rho \neq \rho^*$, we denote the relative value as $q_{\rho}(x,a)$.
Regret is thus defined as
\begin{equation}
    R_t = \rho^* \sum_{j=1}^{t} \delta_j - \sum_{j=1}^{t} r_j.
    \label{bandit:regret}
\end{equation}
To minimize regret, the algorithm starts by estimating $\rho^*$ with $\hat\rho_t$ using the past history $\{(r_j,\delta_j)\}_{j=1}^{t}$ and performs an exhaustive search over all possible policies
\begin{equation}
    \hat{\rho}_t = \max_{u \in \gU}\,\big\{\bar{\rho}_t(u) - b_t(u)\big\},
    \qquad
    \bar{\rho}_t(u) = \frac{\sum_{j=1}^{t} \indic\{a_{j-1} = u(x_{j-1})\}\, r_j}
                           {\sum_{j=1}^{t} \indic\{a_{j-1} = u(x_{j-1})\}\, \delta_j},
    \label{bandit:rate}
\end{equation}
which subtracts a confidence width $b_t(u)$, giving a lower bound satisfying $\hat{\rho}_t \le \rho^*$ with high probability. 
Actions are then selected optimistically with the estimated relative value, where $\bar{r}_t$ and $\bar{\delta}_t$ are sample averages and $c_t$ is a confidence bonus
\begin{equation}
    a_t = \argmax_{a \in \gA}\;\big\{\bar{r}_t(x_t,a) - \hat{\rho}_t\, \bar{\delta}_t(x_t,a) + c_t(x_t,a)\big\}.
    \label{bandit:ucb}
\end{equation}
This algorithm yields $O(\log T)$ expected regret bound but requires enumerating $|\gU| = |\gA|^{|\gX|}$ policies at each step.
More details on the algorithm's construction of $c_t$ and $b_t(u)$ are in \Cref{supp:bandit:bonus}.

%% file: contents/method.tex
In this section, we propose \gls{ours} for optimizing the reward rate practically.
We provide an outline for \gls{ours} in \Cref{method:algo} that shows how we avoid previous algorithms' limitations.
\Cref{method:theory} shares our theoretical insights, which were validated through experiments in \Cref{method:bandit}.

\subsection{Main Algorithm \label{method:algo}}
\paragraph{Current limitations.}
We first describe limitations of current approaches and then share the intuition behind our algorithm design.
The \gls{smdp} policy gradient theorem in \Cref{prelim:pg thm} tells us it is viable to do gradient-based optimization on a parameterized policy to maximize reward rate.
However, the core constraint in practice is that it requires multiple rollouts of the on-policy trajectory $\{(r_j,\delta_j)\}_{j=1}^{T}$ to estimate $\rho(\pi)$.
Since the time $\delta_j$ for each step is variable and costly, this makes the estimation take a long time, at least $\sum_{j\le T} \delta_j$, for each gradient update, making it impractical for training efficient \gls{mle} agents.
On the other hand, the continuous-time bandit approach is an alternative by optimizing for the relative reward using an off-policy reward rate estimate $\hat\rho_t$ (\Cref{bandit:rate}), but the limitation being that it involves enumeration over all possible policies.

\paragraph{Proposed algorithm.}
Given the limitations discussed above, our design principle for creating a practical reward-rate \gls{rl} algorithm is as follows.
We first use a predictor $f_{\rho}$ fitted on the historical buffer $\gB$ to approximate the reward rate and use the predicted rate $\hat\rho$ to calculate relative reward $\tilde r$.
This design allows us to avoid sampling on-policy long-horizon rollouts.
Unlike the continuous-time bandit setting where policies are deterministic \citep{gyorgy_continuous_2007}, \glspl{lm} have stochastic policies, so we apply policy gradient updates on expected relative reward to train \glspl{lm} agents.
Our algorithm is shown in \Cref{method:algo:algorithm}. 
Ideally, the reward-rate predictor $f_\rho$ should approximate the optimal $\rho^*$, which then serves as $\hat\rho$ for the relative reward.
However, directly estimating $\hat\rho \approx \rho^*$ is difficult, even in the bandit setting, since it requires enumeration over the policy space.
Thus, to make the approach practical, we set $f_\rho$ to estimate the greedy policy's reward rate.
This choice is supported by our results in the bandit setting (\Cref{method:theory,method:bandit}), where we demonstrate that a softmax policy with \gls{npg} updates converges to the optimal $\rho^*$ through both theory and experiments.
Empirically on MLE-Bench and NanoGPT, we show that PPO also performs well despite it being a first-order approximation of \gls{npg}.
The estimator $f_\rho$ can be anything that estimates the greedy reward rate, such as Bayesian inference, Robbins-Monro iteration, or neural networks.

\begin{algorithm}[t]
\normalcolor 
\caption{\gls{ours}}
\label{method:algo:algorithm}
\begin{algorithmic}[1]
\Require Policy $\pi_\theta$, greedy reward rate estimator $f_{\rho}$, buffer $\gB$, $\hat\rho_1 \leftarrow 0$.
\LComment {\footnotesize $f_{\rho}$ is a \gls{niw} posterior, but it can also be Robbins-Monro or neural networks.}
\For{$t = 0$ \textbf{to} $T-1$}
    \State Sample actions $a_{t} \sim \pi_{\theta_t}(\cdot \mid s_{t})$, execute $a_t$ and observe reward $r_{t+1}$ and time $\delta_{t+1}$.
    \State Store $a_t$, $\delta_{t+1}$ and $r_{t+1}$ in buffer $\gB$.
    \State Fit $f_{\rho}$ on $\gB$, then get $\hat\rho_{t+1} = f_{\rho}(\gB) $. \Comment {\footnotesize Fit \acrshort{niw} posterior and draw samples.}
    \State Calculate relative reward $\tilde{r}_{t+1} = r_{t+1} - \hat{\rho}_{t+1}\,\delta_{t+1}$ and advantage $\hat{A}_{t+1}$.
    \State $\theta_{t+1} \leftarrow \texttt{PPO}(\theta_t, \hat A_{t+1})$.
\EndFor
\end{algorithmic}
\end{algorithm}

\subsection{Theoretical Analysis\label{method:theory}}
\paragraph{Setup.}
We share our findings in the bandit setting, which provides motivation for approximating the greedy reward rate for $\hat\rho$ and using \gls{ppo} in our algorithm design.
Our analysis is focused on the expected setting with a softmax policy $\pi_{\theta}(a\mid x) = \exp(\theta_{a,x}) / \sum_{b} \exp(\theta_{b,x})$.
Our assumptions are as follows: (1) boundedness in reward and time, (2) positive gap condition for relative value $\min_x \big[q^*(x,a^*(x)) - \max_{a \ne a^*(x)} q^*(x,a)\big] > 0$, and (3) contexts are drawn i.i.d. from $p(x)$.
Details of the analysis are in \Cref{appendix:theory}.

\paragraph{Approximate Dinkelbach iteration.}
We first analyze under which conditions an arbitrary sequence of reward rates $\{\rho_k\}_{k=1}^{\infty}$ would be near the optimal $\rho^*$.
\Cref{results:theory:contraction} shows that if $\rho_{k+1}$ is within a asymmetric $\gamma$-tolerance band around the greedy reward rate $\rho(u_{\rho_k})$ at every step $k \ge 2$, then the sequence would be contained within an interval around the optimal reward rate $\rho_k \in [(1 - R\gamma/ (1+\gamma)) \rho^*, (1+\gamma)\rho^*]$ as $k \to \infty$, which suggests we are approximately doing a Dinkelbach iteration \citep{dinkelbach_nonlinear_1967}.
When the estimation is exact, $\gamma = 0$, the sequence reaches $\rho^*$ in finite steps.
\Cref{results:theory:contraction} suggests that we can use an estimation of the greedy reward rate as $\hat\rho$ for relative reward calculation and iterate greedily to approximate $\rho^*$.
This finding holds tremendous value in practice, since predicting the greedy reward rate at the current $\rho_k$ is an easier task than estimating $\rho^*$, where the latter requires exhaustive search over the policy space but the former can be approximated with historical samples in $\gB$ by selecting top solutions according to $\tilde r (\rho_k) = r - \rho_k \, \delta$.

\begin{theorem}\label{results:theory:contraction}
    Let $\gamma \in [0,1]$ be a tolerance term, $\rho_1 \in [0, \rho_{\max}]$, and $\chi := 1 - \delta_{\min} / \delta_{\max} \in (0,1)$.
    For every $k \ge 1$, let $u_{\rho_k}(x) \in \argmax_a q_{\rho_k}(x,a)$ be a greedy policy under any tie-breaking and $\rho(u_{\rho_k})$ is the reward rate of the greedy policy, let $\tilde\rho_{k+1}$ be any value with the sequence satisfying the condition $\rho(u_{\rho_k}) / (1+\gamma) \le \tilde\rho_{k+1} \le (1+\gamma)\,\rho(u_{\rho_k})$, $\rho_{k+1} = \mathrm{clip}_{[0,\rho_{\max}]}(\tilde\rho_{k+1})$
    where $\mathrm{clip}_{[0,\rho_{\max}]}(z) := \min\{\max\{z, 0\},\, \rho_{\max}\}$, $\rho_{\max} = r_{\max} / \delta_{\min}$.
    Then, for every $k \ge 2$,
    \begin{equation}
        \rho_k \in \Big[\big(1 - R\gamma/ (1+\gamma)\big) \rho^* - \big(\chi / (1+\gamma)\big)^{k-2} \rho_{\max}, \, (1+\gamma)\rho^*\Big],
        \label{supp:theory:eq:thm1:bound}
    \end{equation}
    where $R = \delta_{\max} / \delta_{\min} > 1$. If $\gamma = 0$, then the sequence increases monotonically $\rho_2 \le \rho_3 \le \cdots \le \rho^*$ and the iteration reaches $\rho_k = \rho^*$ after finite steps.
\end{theorem}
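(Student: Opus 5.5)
The plan is to reduce everything to properties of the Dinkelbach function
$F(\rho) := \max_{u\in\gU}\big\{\E[r(u)] - \rho\,\E[\delta(u)]\big\}$,
where $\E[r(u)] = \sum_x p(x)\E[r(x,u(x))]$ and similarly for $\delta$.
The greedy policy $u_\rho$ attains this maximum context-by-context, since the maximisation decouples over $x$.
Dividing by $\E[\delta(u_\rho)]\in[\delta_{\min},\delta_{\max}]$ gives the identity
$\rho(u_\rho) = \rho + F(\rho)/\E[\delta(u_\rho)]$.
I assume $r_{\min}\ge 0$, so that $0\le \rho^* \le \rho_{\max}$; this is needed for clipping to be harmless.

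I will use two facts.
(i) $\rho(u_\rho)\le \rho^*$ for every $\rho$, by definition of $\rho^*$.
(ii) Plugging $u^*$ into the maximum gives $F(\rho) \ge \E[\delta(u^*)](\rho^*-\rho)$.
From (ii) I get two one-step bounds on the greedy rate.
\begin{itemize}
\item If $\rho\le\rho^*$, then $F(\rho)\ge \delta_{\min}(\rho^*-\rho)$. Hence $\rho(u_\rho)\ge \rho + (\rho^*-\rho)/R$, i.e. $\rho^*-\rho(u_\rho)\le \chi(\rho^*-\rho)$, using $1-\chi = 1/R$.
\item If $\rho>\rho^*$, then $F(\rho)\ge -\delta_{\max}(\rho-\rho^*)$ and $\E[\delta(u_\rho)]\ge\delta_{\min}$. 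Hence $\rho(u_\rho)\ge \rho^* - (R-1)(\rho-\rho^*)$.
\end{itemize}

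\textbf{Upper bound.} By (i), $\tilde\rho_{k+1}\le (1+\gamma)\rho(u_{\rho_k})\le(1+\gamma)\rho^*$. Clipping to $[0,\rho_{\max}]$ cannot push the value above $(1+\gamma)\rho^*$, because $\rho^*\le\rho_{\max}$ and the lower clip at $0$ does not exceed it. So $\rho_k\le(1+\gamma)\rho^*$ for all $k\ge2$.

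\textbf{Lower bound.} Set $L := 1-R\gamma/(1+\gamma) = (1-(R-1)\gamma)/(1+\gamma)$ and $e_k := \max\{0, L\rho^*-\rho_k\}$. I will prove $e_{k+1}\le \tfrac{\chi}{1+\gamma}e_k$ by splitting on the position of $\rho_k$.
\begin{itemize}
\item Case $\rho^*<\rho_k\le(1+\gamma)\rho^*$. The second bullet gives $\rho(u_{\rho_k})\ge \rho^*(1-(R-1)\gamma)$, so $\rho_{k+1}\ge \rho(u_{\rho_k})/(1+\gamma)\ge L\rho^*$ and $e_{k+1}=0$. The lower clip at $0$ only helps.
\item Case $\rho_k\le\rho^*$. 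The first bullet gives $\rho_{k+1}\ge \big[(1-\chi)\rho^*+\chi\rho_k\big]/(1+\gamma)$. The affine map $\rho\mapsto[(1-\chi)\rho^*+\chi\rho]/(1+\gamma)$ has slope $\chi/(1+\gamma)<1$ and fixed point $\rho^*(1-\chi)/(1+\gamma-\chi)=\rho^*/(1+\gamma R)$.
\end{itemize}
The one step I must verify is that this fixed point dominates $L\rho^*$, i.e. $1/(1+\gamma R)\ge (1+\gamma-R\gamma)/(1+\gamma)$. Cross-multiplying reduces it to $1+\gamma+\gamma^2R(1-R)\le 1+\gamma$, which holds because $R>1$. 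Writing $\rho_k \ge L\rho^* - e_k$ and applying the map then yields $e_{k+1}\le \tfrac{\chi}{1+\gamma}e_k$.

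For the base case, $\rho_2\ge0$ gives $e_2\le L\rho^*\le\rho_{\max}$. Induction then gives $e_k\le(\chi/(1+\gamma))^{k-2}\rho_{\max}$, which is exactly the lower end of \Cref{supp:theory:eq:thm1:bound}.

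\textbf{Exact case $\gamma=0$.} Here $\rho_{k+1}=\rho(u_{\rho_k})\le\rho^*$ for $k\ge1$, so every $\rho_k$ with $k\ge2$ lies in $[0,\rho^*]$. On that range $F(\rho_k)\ge0$, and the identity gives $\rho_{k+1}\ge\rho_k$, so the sequence is monotone. Moreover the increase is strict unless $F(\rho_k)=0$, and $F(\rho)=0$ with $\rho\le\rho^*$ forces $\rho=\rho^*$ by the first bullet.

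For finite termination, each $\rho_{k+1}$ with $k\ge1$ is the rate of one of the finitely many deterministic policies in $\gU$. A strictly increasing sequence in a finite set must stop, and it can only stop at $\rho^*$.

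\textbf{Main obstacle.} I expect the delicate part to be the lower-bound recursion. The two regimes $\rho_k\le\rho^*$ and $\rho_k>\rho^*$ contract differently, so the target constant $L$ must be simultaneously reached in one step from above and preserved by the affine contraction from below. The fixed-point comparison above is the check that makes both cases compatible.
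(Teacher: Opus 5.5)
Your proof is correct and follows essentially the same route as the paper: the identity $\rho(u_\rho)=\rho+F(\rho)/\E[\delta(u_\rho)]$, the split into $\rho_k\le\rho^*$ (contraction by $\chi$) versus $\rho^*<\rho_k\le(1+\gamma)\rho^*$ (one-step recovery to $L\rho^*$), and the finite-policy argument for $\gamma=0$. Your shifted error $\max\{0,L\rho^*-\rho_k\}$ repackages the paper's bound $\rho^*-\rho_k\le(\chi/(1+\gamma))^{k-2}\rho_{\max}+B$ with $B=(1-L)\rho^*$, and your fixed-point comparison is equivalent to its inequality $(\chi B+\gamma\rho^*)/(1+\gamma)\le B$; one small slip is that the base case should read $e_2\le\max\{0,L\rho^*\}\le\rho_{\max}$, since $L<0$ whenever $(R-1)\gamma>1$.
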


\paragraph{\Gls{npg} and reward-rate \gls{rl}.}
\Cref{results:theory:contraction} suggests that if the predicted rate approximates the greedy rate, then it would be contained within an interval around the optimal $\rho^*$, but it does not make any statements about the training dynamics, so we further analyze the behavior of $\pi_\theta$.
Define $A_{\theta}(\rho, x) = (q_{\rho}(x, \cdot) - \langle \pi_{\theta}(\cdot \mid x), q_{\rho}(x, \cdot) \rangle \, \vone)$, where $\vone = [1\ldots 1] \in \R^{|\gA|}$.
The gradient update rules are
\begin{equation*}
    \theta'_x \leftarrow \theta_{x} + \eta \, A_{\theta}(\rho, x) \quad \text{(NPG)},\quad \theta'_x \leftarrow \theta_{x} + \eta \, \pi_{\theta}(\cdot \mid x) \odot A_{\theta}(\rho, x) \quad \text{(Standard PG)}.
    \label{method:updates}
\end{equation*}
We thus hypothesize that \gls{npg} is more suited for reward-rate \gls{rl} since it has less coupling between $\pi_\theta$ and the magnitude of $A_\theta$.
Coupling would be undesirable since $\theta_x$ is updated proportionally to the policy weight $\pi_{\theta}(\cdot \mid x)$, reducing the signal for actions with large advantage $A_{\theta}$ but low $\pi_{\theta}(\cdot \mid x)$\footnote{In standard \gls{rl} setting, this finding has already been observed \cite{kakade_natural_2001}, but in reward-rate \gls{rl}, $A_{\theta}$ is a function of $\rho$ which is estimated from policy $\pi_{\theta}$, so the amount of coupling is stronger.}.
We establish the validity of \gls{npg} updates in \Cref{results:theory:npg-dynamics}.
It shows that the natural gradient updates are equivalent to a Boltzmann policy on the relative value $q_{\bar\rho_{k}}(x,a)$ with a running average rate $\bar\rho_{k}$.
If $\rho_1$ is initialized within $[0, \rho^*]$, and greedy reward rate estimation is exact $\rho_{k+1} = \rho(u_{\bar\rho_k})$, then the error between $\bar\rho_k$ and $\rho^*$ shrinks in $O(k^{-(1-\chi)})$.
Furthermore, when $\bar\rho_k$ is within $\bar\varepsilon$ of $\rho^*$, which \Cref{eq:thm2:km-rate} guarantees after finitely many steps, the greedy action at $\bar\rho_k$ becomes the optimal one, and the policy increases the optimal arm's probability towards $1$.
This finding suggests that \gls{npg} methods bring performance benefits in reward-rate \gls{rl} and motivates us to apply \gls{ppo} in practice, a first-order approximation of \gls{npg} \citep{schulman_proximal_2017}.

\begin{theorem}\label{results:theory:npg-dynamics}
    Let $\rho_1,\rho_2,\ldots$ be a sequence of reward rates and let $\theta^{(k)}$ be the softmax
    logits after $k$ \gls{npg} update steps with learning rate $\eta$ from the zero initialization, with
    $\bar\rho_k = \sum_{n \le k}\rho_n / k$.
    Then for every $x$ and $k \ge 1$, $\pi_{\theta^{(k)}}$ is the Boltzmann policy of $q_{\bar\rho_k}(x,a)$
    with inverse temperature $\eta k$.
    If moreover $\rho_1 \in [0,\rho^*]$ and $\rho_{k+1} = \rho(u_{\bar\rho_k})$, then $\bar\rho_k$
    increases to $\rho^*$ in $O(k^{-(1-\chi)})$
    \begin{equation}
        \rho^* - \bar\rho_k \;\le\; (\rho^* - \bar\rho_1)\,\exp(1-\chi)\,(k+1)^{-(1-\chi)}.
        \label{eq:thm2:km-rate}
    \end{equation}
    Furthermore, let $\delta_{\mathrm{spr}} \in (0, \delta_{\max} - \delta_{\min}]$ be the maximum time gap, and $\Delta^*$ be the minimum relative value gap under $\rho^*$, define $\bar\varepsilon := \Delta^*/(2\delta_{\mathrm{spr}})$, for every $k \ge k_0$, the greedy policy is optimal
    \begin{equation}
        \pi_{\theta^{(k)}}(a^* \mid x) \ge 1 - (|\gA| - 1)\,\exp\big({-\eta\,\Delta^* k/2}\big), \quad k_0 := \Big\lceil \big(\exp(1-\chi)\,(\rho^* - \bar\rho_1)/\bar\varepsilon\big)^{1/(1-\chi)} \Big\rceil.
        \label{eq:thm2:tail}
    \end{equation}
\end{theorem}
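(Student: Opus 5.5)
We prove the three claims in turn. The first is an exact unrolling of the \gls{npg} recursion. The second treats the averaged iteration as a Krasnoselskii--Mann-type iteration on the Dinkelbach map. The third combines the rate bound with the gap condition.

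\textbf{Boltzmann form.} Under the \gls{npg} rule, $A_\theta(\rho,x)$ differs from $q_\rho(x,\cdot)$ only by a multiple of $\vone$. Softmax is invariant to adding constants to the logits, so we may drop the baseline. Starting from $\theta^{(0)}=0$ and using the update $\theta^{(n)}=\theta^{(n-1)}+\eta A_{\theta^{(n-1)}}(\rho_n,x)$, we get
\[
\theta^{(k)}_x \equiv \eta\sum_{n\le k} q_{\rho_n}(x,\cdot) \pmod{\vone}.
\]
Since $q_\rho(x,a)=\E[r(x,a)]-\rho\,\E[\delta(x,a)]$ is affine in $\rho$, this sum equals $\eta k\, q_{\bar\rho_k}(x,\cdot)$. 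Hence $\pi_{\theta^{(k)}}(\cdot\mid x)\propto\exp(\eta k\, q_{\bar\rho_k}(x,\cdot))$, which is the first claim.

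\textbf{Rate for $\bar\rho_k$.} Define the Dinkelbach map $T(\rho):=\rho(u_\rho)$. The key Dinkelbach facts for $\rho\le\rho^*$ are:
\begin{itemize}
\item[(i)] $\rho\le T(\rho)\le\rho^*$;
\item[(ii)] $\rho^*-T(\rho)\le \chi(\rho^*-\rho)$.
\end{itemize}
For (ii), let $F(\rho)=\max_u \E[r_u]-\rho\E[\delta_u]$, so $F(\rho)\ge F(\rho^*)+(\rho^*-\rho)\E[\delta_{u^*}]$ because $F(\rho^*)=0$. Then
\[
T(\rho)-\rho = F(\rho)/\E[\delta_{u_\rho}] \ge (\rho^*-\rho)\,\delta_{\min}/\delta_{\max}.
\]
These are the same estimates that underlie the contraction factor $\chi$ in \Cref{results:theory:contraction} with $\gamma=0$, so we reuse them.

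Now write the averaged recursion $\bar\rho_{k+1}=\bar\rho_k+\frac{1}{k+1}(T(\bar\rho_k)-\bar\rho_k)$, and let $e_k=\rho^*-\bar\rho_k$. By induction $\bar\rho_k\le\rho^*$, because we average values that lie in $[0,\rho^*]$ by (i). Fact (ii) then gives
\[
e_{k+1}\le e_k\Bigl(1-\frac{1-\chi}{k+1}\Bigr).
\]
Taking the product and using $\log(1-x)\le -x$ together with $\sum_{n=2}^{k+1}1/n\ge\log(k+1)-1$, we obtain
\[
e_k\le e_1\exp\Bigl(-(1-\chi)(\log(k+1)-1)\Bigr)=e_1\,e^{1-\chi}(k+1)^{-(1-\chi)}.
\]
This is \eqref{eq:thm2:km-rate}, up to how the harmonic sum's indexing is handled. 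Monotone increase of $\bar\rho_k$ follows because each $T(\bar\rho_k)\ge\bar\rho_k$.

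\textbf{Tail bound.} We first need the bound $e_k\le\bar\varepsilon$ for $k\ge k_0$. Setting $e_1 e^{1-\chi}(k+1)^{-(1-\chi)}\le\bar\varepsilon$ and solving for $k$ gives exactly the threshold $k_0$ in \eqref{eq:thm2:tail}.

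Next we use $|q_{\bar\rho}(x,a)-q^*(x,a)|=(\rho^*-\bar\rho)\E[\delta(x,a)]$. The difference in $\rho$-dependence between two arms is at most $\delta_{\mathrm{spr}}(\rho^*-\bar\rho)$. Hence the gap between $a^*$ and any $a\neq a^*$ under $q_{\bar\rho_k}$ is at least
\[
\Delta^*-\delta_{\mathrm{spr}}\,\bar\varepsilon=\Delta^*/2.
\]
Substituting this into the Boltzmann form gives
\[
\pi(a^*\mid x)\ge 1/\bigl(1+(|\gA|-1)e^{-\eta k\Delta^*/2}\bigr)\ge 1-(|\gA|-1)e^{-\eta k\Delta^*/2},
\]
which is \eqref{eq:thm2:tail}.

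\textbf{Main obstacle.} We expect the main obstacle to be fact (ii) stated with the sharp factor $\chi=1-\delta_{\min}/\delta_{\max}$. It requires care that the expected durations of the greedy and optimal policies are averages over contexts lying in $[\delta_{\min},\delta_{\max}]$. It also requires checking that this holds under arbitrary tie-breaking in $u_\rho$.
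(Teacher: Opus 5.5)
Your proof is correct and follows essentially the same route as the paper. You unroll the NPG recursion modulo $\vone$ and use that $q_\rho$ is affine in $\rho$; you write $\bar\rho_{k+1}$ as a convex combination with weight $1/(k+1)$, apply the contraction $\rho^*-\rho(u_\rho)\le\chi(\rho^*-\rho)$ of \Cref{supp:theory:l3}, and bound the product via the harmonic sum; and you use the $\delta_{\mathrm{spr}}$-margin argument of \Cref{supp:theory:l4} to get gap $\Delta^*/2$ in the Boltzmann ratio. The only slips are cosmetic: the product for $e_k$ runs over $m=2,\dots,k$, so the sum you need is $\sum_{m=2}^{k}1/m\ge\log(k+1)-1$ (true since $H_k\ge\log(k+1)$), and $k_0$ is a sufficient threshold rather than an exact one.
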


\subsection{Bandit Validation\label{method:bandit}}
\begin{figure}[th]
    \centering
    \begin{subfigure}{\textwidth}
        \centering
        \includegraphics[width=\textwidth]{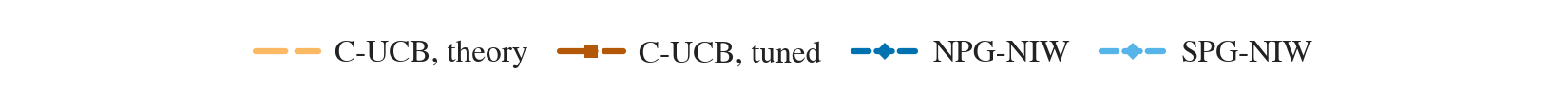}
    \end{subfigure}\\
    \begin{subfigure}{0.49\textwidth}
        \centering
        \includegraphics[width=\textwidth]{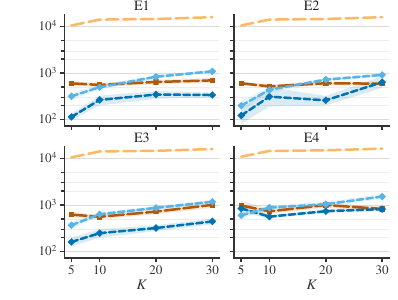}
        \caption{4 contexts $|\gX| = 4$.}
        \label{fig:bandit:summary:ctx}
    \end{subfigure}
    \hfill
    \begin{subfigure}{0.49\textwidth}
        \centering
        \includegraphics[width=\textwidth]{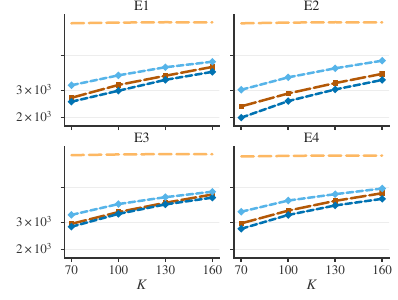}
        \caption{64 contexts $|\gX| = 64$.}
        \label{fig:bandit:summary:x64}
    \end{subfigure}
    \caption{\textbf{Our proposed approach \gls{npg}-\acrshort{niw} obtains similar or lower regret than C-UCB while avoiding exhaustive search over policies}. 
    Standard policy gradient (SPG-\acrshort{niw}) also performs worse than \gls{npg}, verifying our theory that \gls{npg}'s decoupling helps reward-rate \gls{rl}.
    y-axis is regret at the final step over $100$ trials, x-axis is the number of arms $K$, and 4 environments are considered: (E1) independent reward and time, (E2) time and reward have $0.8$ correlation, (E3) time and reward have $-0.8$ correlation, and (E4) reward and time are lognormal with $+0.5$ correlation.
    }
    \label{fig:bandit:summary}
\end{figure}
Our theory suggests that \gls{npg} has a decoupling effect on reward-rate optimization, and using greedy reward rate as $\hat\rho$ can also approximate the optimal rate as iterations grow.
The former brings performance benefits, whereas the latter is a feasibility benefit in large action and context spaces where an exhaustive search over the policy space is intractable.
We conduct experiments to examine whether these benefits hold true in practice.
Our finding is in \Cref{fig:bandit:summary}, and \gls{npg}-\acrshort{niw} uses softmax policy with \gls{npg} updates and \gls{niw} posterior fitted on greedy historical samples for $\hat\rho$ estimation.
\Gls{c-ucb} is the approach by \cite{gyorgy_continuous_2007}.
We first observe that if we use \gls{c-ucb}'s constants exactly as in the paper, it breaks since the constants are too large (\gls{c-ucb}, theory), but treating them as tunable hyperparameters allows it to work (\gls{c-ucb}, tuned).
When only having small context and action spaces (\Cref{fig:bandit:summary:ctx}), \gls{c-ucb} uses exhaustive search for $\hat\rho$ estimation.
When the context and action space is large (\Cref{fig:bandit:summary:x64}), \gls{c-ucb} cannot afford to enumerate over all policies, so we use an approximation where 2048 policies are drawn uniformly at random from past visited ones to predict $\hat\rho$.
Across both settings, we observe that \gls{npg}-\gls{niw} obtains lower or similar regret as \gls{c-ucb} baselines, thus confirming our hypothesis that greedy reward rate estimation can be used for $\hat\rho$ without enumerating the policy space.
When comparing \gls{npg} with standard policy gradient updates (SPG-\gls{niw}), we observe that it consistently obtains higher regret than \gls{npg}, supporting our theoretical finding that \gls{npg} provides performance benefits.
More experiments and implementation details are in \Cref{appendix:exp:bandit,appendix:exp:exact,appendix:exp:approx}.

%% file: contents/results.tex
In this section, we present our results demonstrating that \gls{ours} finds good solutions in less time than vanilla \gls{rl}, suggesting that reward-rate \gls{rl} is a practical solution for training efficient agents.
We provide our empirical results on MLE-Bench and NanoGPT in \Cref{results:mle,results:nanogpt}.

\subsection{Instantiating Language Models for Reward-rate Policies \label{method:mle}}
\paragraph{Environment setup.}
The theory in \Cref{method:theory} suggests that we should approximate the greedy reward rate for $\hat\rho$ and use PPO for training, but training \gls{mle} agents also requires deliberate environment setup, which we share in this section.
We start with a base prompt as our initial state $s_0$ containing task descriptions such as data loading path, meaning of features, and the performance metric used for the evaluation.
We also explicitly ask the \gls{lm} agent to take efficiency into consideration by encouraging it to use a subset of features.
An action $a_0 \sim \pi(\cdot \mid s_0)$ is the generated script by the agent, and the code is executed in a sandbox environment to produce time $\delta_1$ and reward $r_1$.
If the script successfully executes, $r_1$ would be a performance score.
A timeout budget $\delta_{\max}$ is set as the upper limit of the time allowed for script execution.
For scripts that fail to run, such as when they have a bug, we encounter trouble assigning their reward rate since their reward is $0$ and $\tilde{r} = -\hat{\rho}\,\delta$ is maximized by $\delta \to 0$.
This implies a script that fails on its first line dominates all honest attempts.
Thus, we charge every failure the full time limit $\delta_{\max}$ and a constant penalty of $-10$, preventing the policy from reward-hacking by failing fast, and executable scripts have higher rewards than non-executable ones.

\paragraph{State transition with self-improvement.}
Besides setting up the environment and reward properly, we also want our agent to improve based on its own previous attempts, so we would ideally have a self-improving agent making progress on the task over time.
We discuss how we connect the agent's own solutions to form self-improvement loops.
At every step, we store the observed transition $(a_t,r_{t+1},\delta_{t+1})$ into the buffer $\gB$.
The next state $s_{t+1}$ is constructed by concatenating the base prompt $s_0$ with the previous code $a'$ and a self-improvement phrase $i_0$ explicitly asking the agent to improve upon the performance score $r'$ and time $\delta'$.
The selection of $(a',r',\delta')$ from buffer $\gB$ is based on a top-$k$ selection according to the relative reward $\tilde r(\hat\rho_{t+1}) = r' - \hat\rho_{t+1}\,\delta'$.
This naturally gives a self-improvement chain where the agent is tasked with making progress on its best proposed solutions observed thus far.

\paragraph{Reward-rate predictor.}
At each step $t$, our agent uses its history of attempts to predict the reward rate $\hat\rho$.
The reward-rate estimator $f_{\rho}$ is a \gls{niw} posterior with joint Gaussian likelihood \citep{murphy_machine_2012} fitted from historical samples observed in buffer $\gB$.
We model the pair as $(r_{\mathrm{acc}}, \log \delta_{\mathrm{acc}})$, where $r_{\mathrm{acc}}$ and $\delta_{\mathrm{acc}}$ are the accumulated reward and time, and we take the logarithm because we need time to be positive. 
We fit the posterior at each step, and sample $M$ reward rates from it and take the 95th percentile $\hat\rho_{t+1} = \mathrm{Quantile}_{0.95} (\{ \rho_m\}_{m=1}^{M})$.
More details are in \Cref{appendix:exp:rr-pred}.
\begin{figure}[t]
    \centering
    {\captionof{table}{\textbf{\gls{ours} vs. vanilla \gls{rl} on 22 MLE-Bench Lite tasks}, evaluated on mean performance score across self-improving chains within a fixed-time budget.
    $(\uparrow)$ and $(\downarrow)$ means higher and lower is better, respectively.
    $\%$ improv. is the relative improvement of \gls{ours} over vanilla. On average, \gls{ours}
    performs 19.2\% better than vanilla, after dropping the highest and lowest numbers.}%
    \label{tab:mlebench_meanatT}}
    {\scriptsize
    \setlength{\tabcolsep}{3pt}
    \begin{tabular}{l>{\columncolor{rpgtint}}lll@{\hskip 8pt}l>{\columncolor{rpgtint}}lll}
    \toprule
    Task & \gls{ours} & Vanilla & \% improv. & Task & \gls{ours} & Vanilla & \% improv. \\
    \midrule
        ranzcr-clip $(\uparrow)$ & \textbf{0.089} $\pm$ \textbf{0.046} & 0.014 $\pm$ 0.014 & $+$530.5 & plant-pathology $(\uparrow)$ & \textbf{0.526} $\pm$ \textbf{0.048} & 0.498 $\pm$ 0.047 & $+$5.6 \\
        aptos2019 $(\uparrow)$ & \textbf{0.564} $\pm$ \textbf{0.046} & 0.315 $\pm$ 0.120 & $+$78.9 & pizza $(\uparrow)$ & \textbf{0.760} $\pm$ \textbf{0.008} & 0.728 $\pm$ 0.019 & $+$4.4 \\
        mlsp-birds $(\uparrow)$ & \textbf{0.333} $\pm$ \textbf{0.154} & 0.226 $\pm$ 0.120 & $+$46.9 & tabular-2022 $(\uparrow)$ & \textbf{0.840} $\pm$ \textbf{0.013} & 0.806 $\pm$ 0.028 & $+$4.2 \\
        siim-isic $(\uparrow)$ & \textbf{0.660} $\pm$ \textbf{0.021} & 0.456 $\pm$ 0.112 & $+$44.8 & whale $(\uparrow)$ & 0.493 $\pm$ 0.005 & \textbf{0.749} $\pm$ \textbf{0.091} & $-$34.2 \\
        textnorm-en $(\uparrow)$ & \textbf{0.880} $\pm$ \textbf{0.048} & 0.714 $\pm$ 0.068 & $+$23.2 & leaf-classif $(\downarrow)$ & \textbf{0.259} $\pm$ \textbf{0.063} & 0.564 $\pm$ 0.059 & $+$54.0 \\
        textnorm-ru $(\uparrow)$ & \textbf{0.889} $\pm$ \textbf{0.021} & 0.730 $\pm$ 0.022 & $+$21.7 & spooky-author $(\downarrow)$ & \textbf{0.440} $\pm$ \textbf{0.015} & 0.690 $\pm$ 0.003 & $+$36.2 \\
        tabular-2021 $(\uparrow)$ & \textbf{0.907} $\pm$ \textbf{0.010} & 0.778 $\pm$ 0.046 & $+$16.5 & nyc-taxi $(\downarrow)$ & \textbf{4.309} $\pm$ \textbf{0.029} & 5.349 $\pm$ 0.387 & $+$19.4 \\
        insults $(\uparrow)$ & \textbf{0.834} $\pm$ \textbf{0.025} & 0.761 $\pm$ 0.024 & $+$9.6 & dog-breed $(\downarrow)$ & \textbf{4.786} $\pm$ \textbf{0.001} & 4.924 $\pm$ 0.139 & $+$2.8 \\
        jigsaw-toxic $(\uparrow)$ & \textbf{0.948} $\pm$ \textbf{0.009} & 0.868 $\pm$ 0.028 & $+$9.2 & dogs-vs-cats $(\downarrow)$ & \textbf{0.688} $\pm$ \textbf{0.005} & 0.703 $\pm$ 0.001 & $+$2.1 \\
        histopathologic $(\uparrow)$ & \textbf{0.594} $\pm$ \textbf{0.107} & 0.554 $\pm$ 0.021 & $+$7.2 & nomad2018 $(\downarrow)$ & \textbf{0.064} $\pm$ \textbf{0.002} & 0.064 $\pm$ 0.002 & $+$0.3 \\
        aerial-cactus $(\uparrow)$ & \textbf{0.954} $\pm$ \textbf{0.018} & 0.893 $\pm$ 0.010 & $+$6.8 & denoising-docs $(\downarrow)$ & 0.094 $\pm$ 0.002 & \textbf{0.085} $\pm$ \textbf{0.013} & $-$10.1 \\
    \bottomrule
    \end{tabular}}
    \includegraphics[width=.96\textwidth]{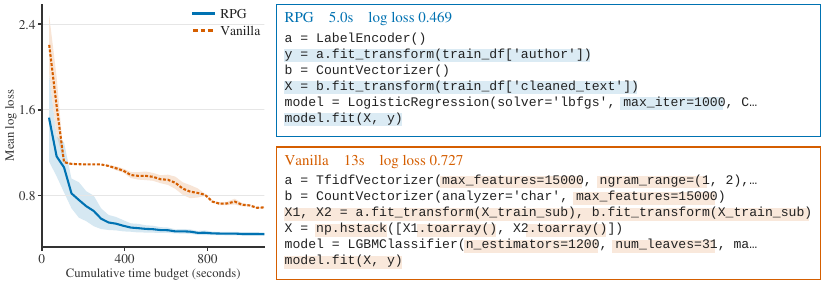}
    \captionof{figure}{\textbf{Example of generated scripts by \gls{ours} and vanilla \gls{rl}}. \gls{ours} proposes scripts that use time more efficiently by training on one type of feature with logistic regression, whereas vanilla \gls{rl} chooses to train on two types of features with 1200 gradient-boosted trees.}
    \label{fig:mlebench_programs}
\end{figure}

\subsection{Evaluation Results on MLE-Bench\label{results:mle}}
We provide our results for MLE-Bench in \Cref{tab:mlebench_meanatT}.
We use Qwen3.5-4B as our \gls{lm} and apply the method in \Cref{method:mle}.
Our baseline is vanilla \gls{rl}, where rewards are performance scores only and time budget is $\delta_{\max}$.
We run \gls{ours} and the baseline until convergence.
Our evaluation metric is the mean reward obtained by the self-improvement chains at every time budget.
Specifically, at every step, there are $\texttt{batch\_size}$ self-improvement chains, and for each chain, a time $\delta$ and reward $r$ are earned.
We calculate the mean reward $r$ across chains at every cumulative time budget $t \in [0, 200,\ldots, T]$ with $200$-second intervals.
The final time $T$ is decided by taking the cumulative time containing 95\% of the self-improvement chains.
Each task is evaluated with three repetitions.
Across the $22$ tasks \gls{ours} show an average 19.2\% improvement over vanilla at the final time budget, after dropping the task with highest and lowest improvements.
The two methods' proposed \gls{mle} scripts have distinct differences in wall clock time.
An example of the budget curve along with the generated script is shown in \Cref{fig:mlebench_programs} for the spooky-author task.
Vanilla uses a concatenation of different features and trains a gradient-boosted tree in 13 seconds with log loss 0.727, whereas \gls{ours} uses count features along with logistic regression to obtain 0.469 log loss in 5 seconds.
The budget curve behind every task is provided in \Cref{appendix:exp:mlebench}, and implementation details are in \Cref{appendix:exp:mle}.

\subsection{Evaluation Results on NanoGPT\label{results:nanogpt}}
\begin{figure}[t]
    \centering
    \includegraphics[width=.8\textwidth]{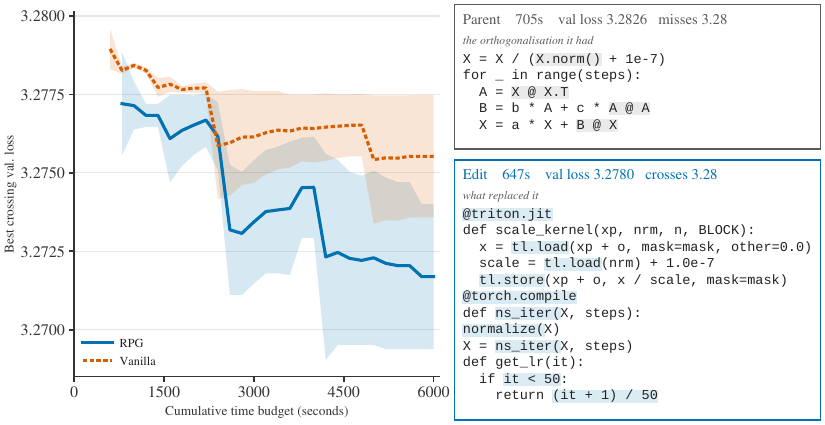}
    \caption{\textbf{Budget curve and an example self-improvement step on NanoGPT}. (Left) At the final budget, \gls{ours} improves over vanilla by $85.7\%$. (Right) Example of a self-improvement edit by \gls{ours}, which rewrites Newton--Schulz in the Muon optimizer as a triton kernel with a compiled iteration and adds a learning-rate warmup. The new edit crosses the loss target in less time.}
    \label{fig:nanogpt:budget}
\end{figure}
\paragraph{Motivation and setup.}
The MLE-Bench setting discussed above is based on Kaggle competitions.
To extend the scope of reward-rate \gls{rl} agents, we also apply \gls{ours} to the NanoGPT speedrun challenge \citep{karpathy_nanogpt_2022,jordan_modded_2024}, where the task is to train a tiny \gls{lm} down to a validation cross-entropy loss of $3.28$ on FineWeb \citep{penedo_fineweb_2024} with eight GPUs as fast as possible.
This task makes the time taken to reach the $3.28$ target loss a natural choice to optimize for.
Thus, as a baseline, we implement vanilla \gls{rl} using time to reach the $3.28$ target loss as reward, and compare with \gls{ours} using relative reward as the learning signal.

\paragraph{Setup.}
Both vanilla and \gls{ours} have a linear bonus terms for crossing the $3.28$ target loss, $w \, (3.28-L)$.
NanoGPT requires writing tens of thousands of tokens in its code.
Thus, to ensure efficient training and focus on making modifications rather than writing executable code, we ask the agent to read prompt $s_t$ and propose high-level delta code changes along with their reasoning for the modification, which is the action $a_t$.
A separate, independent, frontier model with deterministic output (zero temperature) serves as the implementer, which reads the action $a_t$ proposed by the agent and implements the changes.
More details are in \Cref{appendix:exp:nanogpt}.

\paragraph{Results.}
We present our results on NanoGPT in \Cref{fig:nanogpt:budget}, where we observe that \gls{ours} gives lower validation loss given the same time budget than vanilla \gls{rl}.
At the final time budget, we observe that \gls{ours} has an 85.7\% margin below $3.28$ compared to the baseline.
This can be explained by its objective of using the reward rate, which intrinsically optimizes for the loss per unit of time.
An example of a self-improvement edit by \gls{ours} is shown on the right side of \Cref{fig:nanogpt:budget}, where the agent writes a triton kernel, a compiled iteration, and a learning rate warm-up in its parent script to drive the loss to the target in less time.

%% file: contents/related.tex
\paragraph{Machine learning engineering agents.}
Many works currently ask whether an agent can carry out machine learning engineering tasks in an end-to-end fashion \citep{nam2026mle,tang2023ml,chan2025mle,li2024autokaggle,zhang2026mlrc, qiang2026mle}.
Similar lines of work also include using agents on data science tasks such as data preprocessing and hyperparameter tuning \citep{grosnit2024large,bendinelli2025exploring,zhang2023automl,liu2024large}.
Most agentic approaches in these tasks involve scaffolding a frozen model, such as by searching the solution space as a tree \citep{jiang_aide_2025}, which do not train \gls{mle}-specific experience into the model weights.
One of the closest works to ours is by \cite{yang_reinforcement_2025}, who trained a small \gls{lm} with \gls{rl} on MLE-Bench and observed that actions take a variable amount of time, but the focus was on mitigating the time bias favoring faster solutions in asynchronous \gls{rl} updates.
By contrast, our work directly treats time as a cost term in the \gls{rl} objective using the \gls{smdp} reward-rate formulation and trains agents to learn to take efficient actions.

\paragraph{Reward-rate RL.}
We acknowledge that the reward-rate objective itself is classical.
The \gls{smdp} formulation provides the theoretical framework for our method \citep{sutton_between_1999}, and the continuous-time bandit \citep{gyorgy_continuous_2007} gives the idea of using off-policy estimates for the optimal reward rate.
Similar problems also appear in renewal processes with bandit feedback \citep{cayci_learning_2019} where the objective is to maximize reward within a fixed horizon.
A core mathematical toolkit is Dinkelbach iteration for fractional programming \citep{dinkelbach_nonlinear_1967,schaible_fractional_1976}, where all of these works aforementioned rely on it to transform a ratio objective into a sequence of linear ones.
Budgeted and knapsack bandits \citep{badanidiyuru_bandits_2013,ding_multi-armed_2013,xia_budgeted_2016} also consider actions with costs, but they enforce a hard constraint on a finite budget.
In \gls{rl} for \glspl{lm} literature, the environment is episodic with actions assumed to have a constant cost \citep{schulman_proximal_2017,deepseek-ai_deepseek-r1_2025}.
A survey on agentic \gls{rl} with \glspl{lm} \citep{zhang2025landscape} also uses the same formulation, with no notion of how long an action can take.
\cite{guan_dynamic_2025} trades off latency against cost, but requires exploring the Pareto frontier in a multi-agent framework.
To our knowledge, no existing work applies reward-rate \gls{rl} in modern \gls{lm} agents.

\paragraph{Self-improvement.}
Performing self-improvement by asking an agent to revise its own previous attempts is a well-established pattern, such as critiquing its outputs directly \citep{madaan_self-refine_2023}, reflecting on feedback from the environment \citep{shinn_reflexion_2023}, training on the attempts that happened to succeed \citep{zelikman_star_2022}, and rewriting the scaffolding \citep{zelikman_stop_2023}.
The idea of self-improvement loops has also been explored in discovery tasks \citep{yuksekgonul_learning_2026}.
Closer to our setting, the loop can be driven by execution feedback, where a model repairs its own program from the error message that the program produced \citep{chen_teaching_2023}.
The loop can also be used on an agent archive to create a tree of diverse and capable agents through evolutionary search \citep{zhang_darwin_2025}.
\cite{zeng2025bstar} proposes using a balance score for the exploration-exploitation trade-off and updating models on samples that are above a score threshold.
Our work focuses on using self-improving loops as state transitions to train agents.

%% file: contents/conclusion.tex
We proposed a practical reward-rate \gls{rl} algorithm, namely \gls{ours}, for training efficient \gls{mle} agents using the \gls{smdp} formulation, where each action takes variable time, and the objective is long-run reward per unit of time.
\gls{ours} uses greedy policy's reward rate estimates for the relative reward calculation and directly maximizes it through \gls{ppo}.
It avoids enumeration over all policies and time-costly on-policy rollouts, two limitations for directly applying the \gls{smdp} methods in practice.
Through theoretical analysis, simulation experiments, and empirical experiments on MLE-Bench and NanoGPT, we demonstrated that \gls{ours} produces an agent policy that learns to take actions efficiently and maximizes reward.
There are some limitations in this work.
The scope of the empirical experiments is focused on \gls{mle} tasks, but other agentic \gls{rl} tasks where actions cost variable time exist, so it would be interesting to see how reward-rate \gls{rl} does on these tasks.
Theoretically, our analysis focuses on the population-level setting in expectation.
A finite-sample analysis could provide further insight.

%% file: contents/ai.tex
In this work, we used generative AI tools for: implement methods, support qualitative and thematic data analysis, process experiment data, assist in the writing of proofs, and design or provide feedback on research methodology or experiments.
We have not used generative AI tools for: generate synthetic data sets, help develop theoretical models or conceptual frameworks, provide critical ingredients for proving mathematical claims, formulate mathematical claims, propose or refine hypotheses, assist with translation, clean and reformat dataset, interpret results.
Additionally, we used generative AI tools for: create or modify scientific figures or images, create or edit software code, creation of artifacts, draft parts of a research paper, summarize or analyse existing literature.
We have reviewed all AI-assisted work. 
We have checked through all the code edited by AI, and all experiments involving AI were verified.
We take responsibility for the final content of this work, including text, claims or artifacts produced with the aid of generative AI.

%% file: appendix/derivation.tex
\subsection{Semi-MDP Policy Gradient Theorem}\label{supp:method:smdp}
\paragraph{Setup and notation.}
We consider a finite \gls{smdp} \citep{sutton_between_1999,das_solving_1999}
$\mathcal{M} = (\gS, \gA, p, \delta_{\min}, \delta_{\max})$ with finite state space
$\gS$ and finite action space $\gA$. At each \emph{decision epoch} the agent
observes a state $s \in \gS$, selects an action $a \in \gA$, and the environment
returns a triple $(r, \delta, s')$ drawn from the transition kernel
\begin{equation}
    p(s', r, \delta \mid s, a),
    \qquad
    p(s' \mid s, a) \;=\; \int\!\!\int p(s', r, \delta \mid s,a)\,\mathrm{d}r\mathrm{d}\delta,
    \label{eq:smdp-kernel}
\end{equation}
where $r \in [r_{\min}, r_{\max}]$ is the reward accumulated over the transition
and $\delta \in [\delta_{\min}, \delta_{\max}]$ is its time (or duration).
We write $\bar{r}(s,a) := \E[r \mid s, a]$ and $\bar{\delta}(s,a) := \E[\delta \mid s,a]$
for the conditional means under \Cref{eq:smdp-kernel}. Crucially, the kernel does
not depend on the policy parameters, so $\nabla_\theta \bar{r}(s,a) = 0$ and
$\nabla_\theta \bar{\delta}(s,a) = 0$.
 
Let $\pi_\theta(a \mid s)$ be a differentiable stochastic policy parameterized by
$\theta \in \mathbb{R}^{d}$. We drop the subscript $\theta$ for brevity when clear from context. We abbreviate, for a fixed policy $\pi$,
\begin{equation}
    \bar{\delta}^{\pi}(s) \;:=\; \E_{\pi}[\delta \mid s] \;=\; \sum_{a} \pi(a \mid s)\, \bar{\delta}(s,a).
    \label{eq:mean-dwell-state}
\end{equation}

Let $p^{\pi}(s' \mid s) := \sum_{a} \pi(a \mid s)\, p(s' \mid s,a)$ denote the
transition matrix of the Markov chain. We write $d^{\pi}$ for its unique stationary distribution,
\begin{equation}
    d^{\pi}(s') \;=\; \sum_{s} d^{\pi}(s) \sum_{a} \pi(a \mid s)\, p(s' \mid s, a)
    \qquad \forall\, s' \in \gS .
    \label{eq:stationarity}
\end{equation}
We assume that
\begin{itemize}
    \item $0 < \delta_{\min} \le \delta \le \delta_{\max} < \infty$.
    \item $|r| \le \max\{|r_{\min}|,|r_{\max}|\} < \infty$.
    \item $\gS$ and $\gA$ are finite.
\end{itemize}
Under these assumptions, the long-run \emph{reward rate} (gain per unit
time) of $\pi$ is well defined and independent of the initial state:
\begin{equation}
    \rho(\pi) \;:=\; \lim_{T \to \infty} \frac{\E_{\pi}\big[\sum_{t=1}^{T} r_t\big]}{\E_{\pi}\big[\sum_{t=1}^{T} \delta_t\big]}
    \;=\; \frac{\E_{\pi}[r]}{\E_{\pi}[\delta]},
    \qquad
    \E_{\pi}[\delta] \;:=\; \sum_{s} d^{\pi}(s)\, \bar{\delta}^{\pi}(s),
    \label{eq:reward-rate}
\end{equation}
with $\E_{\pi}[r] := \sum_{s} d^{\pi}(s) \sum_{a} \pi(a\mid s) \bar r(s,a)$.
 
The differential state- and action-value functions of $\pi$ are
\begin{equation}
    V^{\pi}(s) = \E_{\pi}\big[\, r - \rho(\pi)\delta + V^{\pi}(s') \mid s \,\big],
    \qquad
    Q^{\pi}(s,a) = \E\big[\, r - \rho(\pi)\delta + V^{\pi}(s') \mid s,a \,\big],
    \label{eq:bellman}
\end{equation}
so that $V^{\pi}(s) = \sum_{a} \pi(a \mid s) Q^{\pi}(s,a)$.

\paragraph{Proof for the policy gradient.}
\begin{equation}
    \nabla_{\theta} \, \rho(\pi_{\theta}) = \frac{\sum_{s} d^{\pi_{\theta}}(s) \sum_{a} \nabla_{\theta} \pi_{\theta}(a \mid s) Q^{\pi_{\theta}}(s,a)}{\E_{\pi_{\theta}}[\delta]},
    \label{appendix:method:smdp-policy-gradient}
\end{equation}
where
\begin{equation*}
    V^\pi(s) = \E_{\pi}\big[ r - \rho(\pi)\delta + V^\pi(s') \mid s \big], \quad Q^\pi(s,a) = \E_{\pi}\big[r - \rho(\pi)\delta + V^\pi(s') \mid s,a \big].
\end{equation*}
\begin{proof}
For simplicity, we drop $\theta$ when clear from context; all gradients are with respect to
$\theta$. Differentiating $V^{\pi}(s) = \sum_{a} \pi(a\mid s) Q^{\pi}(s,a)$ and
applying the product rule,
\begin{align}
    \nabla V^{\pi}(s)
    &= \sum_{a} \Big[ \nabla \pi(a \mid s)\, Q^{\pi}(s,a) + \pi(a\mid s)\, \nabla Q^{\pi}(s,a)\Big]. \label{eq:pf-1}
\end{align}
By definition,
$Q^{\pi}(s,a) = \bar r(s,a) - \rho(\pi)\bar\delta(s,a) + \sum_{s'} p(s' \mid s,a) V^{\pi}(s')$.
Since the transition kernel does not depend on $\theta$, we have
$\nabla \bar r(s,a) = 0$ and $\nabla \bar\delta(s,a) = 0$, hence
\begin{equation}
    \nabla Q^{\pi}(s,a)
    \;=\; -\,\nabla \rho(\pi)\, \bar\delta(s,a) \;+\; \sum_{s'} p(s' \mid s,a)\, \nabla V^{\pi}(s').
    \label{eq:pf-2}
\end{equation}
Substituting \Cref{eq:pf-2} into \Cref{eq:pf-1} and using
\Cref{eq:mean-dwell-state},
\begin{equation}
    \nabla V^{\pi}(s)
    = \sum_{a} \Big[\nabla \pi(a \mid s) Q^{\pi}(s,a) + \pi(a\mid s) \sum_{s'} p(s'\mid s,a) \nabla V^{\pi}(s')\Big]
      \;-\; \nabla \rho(\pi)\, \bar\delta^{\pi}(s).
    \label{eq:pf-3}
\end{equation}
Multiplying \Cref{eq:pf-3} by $d^{\pi}(s)$ and summing over $s \in \gS$,
\begin{align}
    \sum_{s} d^{\pi}(s) \nabla V^{\pi}(s)
    &= \sum_{s} d^{\pi}(s) \sum_{a} \nabla \pi(a \mid s) Q^{\pi}(s,a) \nonumber \\
    &\quad + \underbrace{\sum_{s} d^{\pi}(s) \sum_{a} \pi(a \mid s) \sum_{s'} p(s' \mid s,a) \nabla V^{\pi}(s')}_{(\star)}
      \;-\; \nabla \rho(\pi) \sum_{s} d^{\pi}(s)\, \bar\delta^{\pi}(s).
    \label{eq:pf-4}
\end{align}
Exchanging the order of summation in $(\star)$ and using \Cref{eq:stationarity} gives,
\begin{equation}
    (\star) \;=\; \sum_{s'} \Big[\sum_{s} d^{\pi}(s) \sum_{a} \pi(a\mid s) p(s' \mid s,a)\Big] \nabla V^{\pi}(s')
            \;=\; \sum_{s'} d^{\pi}(s') \nabla V^{\pi}(s').
    \label{eq:pf-5}
\end{equation}
The left-hand side of \Cref{eq:pf-4} and the term \Cref{eq:pf-5} are the same
finite quantity, they therefore cancel,
leaving
\begin{equation}
    0 \;=\; \sum_{s} d^{\pi}(s) \sum_{a} \nabla \pi(a \mid s) Q^{\pi}(s,a)
        \;-\; \nabla \rho(\pi)\, \E_{\pi}[\delta],
    \label{eq:pf-6}
\end{equation}
where we used $\sum_{s} d^{\pi}(s) \bar\delta^{\pi}(s) = \E_{\pi}[\delta]$ from
\Cref{eq:reward-rate}. Since $\E_{\pi}[\delta] \ge \delta_{\min} > 0$ by the boundeness assumption,
we may divide by it, which completes the proof.
\end{proof}
 
\subsection{More on \cite{gyorgy_continuous_2007}}\label{supp:bandit:bonus}
 
We give the explicit form of the two confidence terms $b_t(u)$ and $c_t(a,s)$ appearing in \Cref{bandit:rate,bandit:ucb} \citep{gyorgy_continuous_2007}.
 
Let 
\begin{equation}
    T_u(t) = \sum_{j=1}^{t} \indic\{a_j = u(s_j)\},
\end{equation}
be the number of rounds on which the played action agreed with policy $u$, and
\begin{equation}
    T_a(s,t) = \sum_{j=1}^{t} \indic\{a_j = a,\, s_j = s\},
\end{equation}
be the number of times action $a$ was played in context $s$. Both widths are built from the common scale factor
\begin{equation}
    \omega_{t,n} \;=\; \sqrt{\frac{\log\big(t\sqrt{|\gU| + 1}\big)}{n}},
    \qquad |\gU| = |\gA|^{|\gS|}.
    \label{eq:omega}
\end{equation}
 The width on the rate estimate is
\begin{equation}
    b_t(u) \;= \omega_{t,\,T_u(t)}\; \sqrt{2\kappa},
    \qquad
    \kappa \;=\; 2\max\left\{
        \frac{(r_{\max}-r_{\min})^2}{\delta_{\min}^2},\;\;
        \frac{r_{\max}^2\,(\delta_{\max}-\delta_{\min})^2}{\delta_{\min}^4}
    \right\},
    \label{eq:b-width}
\end{equation}
and the width on the action index is
\begin{equation}
    c_t(a,s) \;=\; (\alpha_0 + \alpha_1)\,\omega_{t,\,T_a(s,t)},
    \qquad
    \alpha_1 = \delta_{\max}\sqrt{2\kappa},
    \label{eq:c-width}
\end{equation}
\begin{equation}
    \alpha_0 = \sqrt{8\max\left\{(r_{\max}-r_{\min})^2,\;\;
        \frac{r_{\max}^2\,(\delta_{\max}-\delta_{\min})^2}{\delta_{\min}^2}\right\}}.
    \label{eq:alpha0}
\end{equation}
The two widths are indexed differently: $b_t(u)$ shrinks in the number of rounds consistent with policy $u$, while $c_t(a,s)$ shrinks in the number of pulls of action $a$ in context $s$. The former is subtracted and the latter added, since $\hat\rho_t$ enters \Cref{bandit:ucb} with a negative sign; pessimism about $\rho^*$ is optimism about $q^*$.

%% file: appendix/theory.tex
\subsection{Preliminaries}\label{supp:theory:prelim}
\paragraph{Setup and notation.}
We use the same notation as other parts of our paper, which are discussed in \Cref{prelim}.
Our setting is same as the bandit setting in \Cref{prelim:bandit}.

\paragraph{Reward rate and value.}
A stochastic policy $\pi$ has reward rate
\begin{equation*}
\rho(\pi) \;:=\; \frac{\E_p\big[\sum_a \pi(a \mid x)\, \bar r(x,a)\big]}{\E_p\big[\sum_a \pi(a \mid x)\, \bar\delta(x,a)\big]},
\qquad
\rho^* \;:=\; \max_{u \in \gU} \rho(u),
\end{equation*}
where the maximum runs over the deterministic policies $\gU = \{u : \gX \to \gA\}$ of \Cref{bandit:obj}.
For $\rho \in \R$ we define the relative values
\begin{equation*}
q_\rho(x,a) \;:=\; \bar r(x,a) - \rho\, \bar\delta(x,a),
\qquad
M_\rho(x) \;:=\; \max_a q_\rho(x,a),
\qquad
F(\rho) \;:=\; \E_p\big[M_\rho(x)\big],
\end{equation*}
with $q^* := q_{\rho^*}$ and $a^*(x) := \argmax_a q^*(x,a)$.

\paragraph{Constants.}
Since $r_{\min}\geq0$, every policy's reward rate is in $[0,\rho_{\max}]$, where
\[\rho_{\max} := r_{\max} / \delta_{\min},\]
we define the contraction constant as
\[\chi := 1 - \delta_{\min} / \delta_{\max} \in [0,1),\]
the \textit{time spread}\footnote{We assume the expected time for every $(x,a)$ pair cannot be the same since it would reduce the problem into reward maximization in contextual bandit, thus in the \gls{smdp} setting we have $\delta_{\mathrm{spr}} \neq 0$. We do not count it towards our list of assumptions since it is an intrinsic part of the problem formulation.} is
\[\delta_{\mathrm{spr}} := \max_x \big[\max_a \bar\delta(x,a) - \min_a \bar\delta(x,a)\big] \in (0, \delta_{\max} - \delta_{\min}],\]
for a rate $\rho \in \R$, a greedy policy is denoted as
\[u_\rho(x) \in \argmax_a q_\rho(x,a),\]
thus $u_{\rho}(x)$ is a greedy action given reward rate $\rho$ in context $x$, under any fixed tie-breaking.
Similarly, we define the \textit{greedy margin} at $\rho$ as
\[\Delta(\rho) \;:=\; \min_x \big[\max_a q_\rho(x,a) - \max_{a \ne u_\rho(x)} q_\rho(x,a)\big] ,\]
it is clear that $\Delta(\rho) \geq 0$ generally, and $\Delta(\rho) > 0$ if and only if the greedy action is unique at every context $x$.
With $\Delta(\rho^*)$, we have the gap at the optimal rate
\[\Delta^* \;:=\; \Delta(\rho^*) \;=\; \min_x \big[q^*(x,a^*(x)) - \max_{a \ne a^*(x)} q^*(x,a)\big].\]
We define the error parameter as $\bar\varepsilon := \Delta^* / (2 \delta_{\mathrm{spr}}),$ since $\delta_{\mathrm{spr}} < \delta_{\max}$, we have $\bar\varepsilon > \Delta^* / (2 \delta_{\max}).$

\paragraph{Algorithm.}
The algorithm is softmax policy with \gls{npg}, which keeps a conditional distribution on the actions per context.
See \Cref{appendix:exp:baselines} for more details.

\begin{assumption}[A1]\label{supp:theory:a1}
Reward and time are bounded with $r \in [r_{\min}, r_{\max}]$, $\delta \in [\delta_{\min}, \delta_{\max}]$, $r_{\min} \ge 0$, $\delta_{\min} > 0$.
\end{assumption}
\begin{assumption}[A2]\label{supp:theory:a2}
$\Delta(\rho^*) = \Delta^* > 0$, which is the positive-gap condition
\citep{auer_finite-time_2002} for the relative values.
Equivalently, for every $x$, the maximizer $a^*(x)$ of $q^*(x,\cdot)$ is unique and the
gap between the best and second-best relative value is at least $\Delta^*$ at every
context.
\end{assumption}
\begin{assumption}[A3]\label{supp:theory:a3}
Contexts are i.i.d. from $p(x)$, as described in \Cref{prelim:bandit} and same as in \cite{gyorgy_continuous_2007}.
\end{assumption}

\paragraph{Scope.}
Our analysis focuses on the algorithm's behavior in expectation, i.e., $\hat{Q}_{x,a}(\rho) = q_\rho(x,a)$.

\subsection{Supporting Lemmas}\label{supp:theory:lemma}
\begin{lemma}\label{supp:theory:l1}
For $\rho < \rho'$,
\begin{equation*}
    \delta_{\min} \;\le\; \big(F(\rho) - F(\rho')\big)/(\rho' - \rho) \;\le\; \delta_{\max};
\end{equation*}
in particular $F$ is strictly decreasing.
\end{lemma}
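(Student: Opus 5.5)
We prove Lemma~\ref{supp:theory:l1} by writing $F$ as an expectation of pointwise maxima of affine functions in $\rho$, and then bounding the difference quotient context by context.

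\textbf{Setup.} For each fixed $x$, $M_\rho(x) = \max_a \big(\bar r(x,a) - \rho\,\bar\delta(x,a)\big)$ is a maximum of finitely many affine functions of $\rho$. Each of these functions has slope $-\bar\delta(x,a) \in [-\delta_{\max}, -\delta_{\min}]$. We bound the difference $M_\rho(x) - M_{\rho'}(x)$ from both sides using the greedy actions at the two rates.

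\textbf{Lower bound.} Fix $\rho < \rho'$ and let $a' = u_{\rho'}(x)$ be a greedy action at $\rho'$. Then
\[
M_\rho(x) \ge q_\rho(x,a') = q_{\rho'}(x,a') + (\rho' - \rho)\,\bar\delta(x,a') = M_{\rho'}(x) + (\rho'-\rho)\,\bar\delta(x,a'),
\]
so $M_\rho(x) - M_{\rho'}(x) \ge (\rho'-\rho)\,\delta_{\min}$, because $\bar\delta(x,a') \ge \delta_{\min}$ under Assumption~\ref{supp:theory:a1}.

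\textbf{Upper bound.} Let $a = u_\rho(x)$ be a greedy action at $\rho$. Then
\[
M_{\rho'}(x) \ge q_{\rho'}(x,a) = M_\rho(x) - (\rho'-\rho)\,\bar\delta(x,a),
\]
so $M_\rho(x) - M_{\rho'}(x) \le (\rho'-\rho)\,\delta_{\max}$. Here we use that $\bar\delta(x,a) \le \delta_{\max}$, since the conditional mean of a quantity bounded in $[\delta_{\min},\delta_{\max}]$ stays in that interval.

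\textbf{Conclusion.} The pointwise inequality
\[
\delta_{\min}(\rho'-\rho) \le M_\rho(x) - M_{\rho'}(x) \le \delta_{\max}(\rho'-\rho)
\]
holds for every $x$. Taking expectation over $x \sim p$ (well defined since $\gX$ is finite and all quantities are bounded) and dividing by $\rho'-\rho > 0$ gives the claimed two-sided bound on $\big(F(\rho)-F(\rho')\big)/(\rho'-\rho)$. Strict monotonicity follows because the lower bound gives $F(\rho) - F(\rho') \ge \delta_{\min}(\rho'-\rho) > 0$.

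There is no real obstacle in this argument. The only point requiring care is that tie-breaking in the choice of greedy action is irrelevant: both inequalities hold for any maximizer, so neither Assumption~\ref{supp:theory:a2} nor uniqueness of the greedy action is needed.
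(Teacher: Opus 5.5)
Your proposal is correct and follows essentially the same argument as the paper. Both compare $M_\rho(x)$ and $M_{\rho'}(x)$ through the greedy actions at each rate, use the identity $q_\rho(x,a) = q_{\rho'}(x,a) + (\rho'-\rho)\,\bar\delta(x,a)$ together with $\bar\delta(x,a) \in [\delta_{\min},\delta_{\max}]$, and then average over $x$.
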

\begin{proof}
    Let $\rho < \rho'$, we have
    \begin{equation*}
    q_\rho(x,a) - q_{\rho'}(x,a) = \big(\bar r(x,a) - \rho\, \bar\delta(x,a)\big) - \big(\bar r(x,a) - \rho'\, \bar\delta(x,a)\big)
    = (\rho' - \rho)\, \bar\delta(x,a),
\end{equation*}
let $a_\rho = \argmax_{a} q_\rho(x,a)$, $a_{\rho'} = \argmax_{a} q_{\rho'}(x,a)$, then
\[M_{\rho}(x) = q_{\rho}(x,a_{\rho}) = q_{\rho'} (x,a_{\rho}) + (\rho' - \rho)\bar\delta(x,a_{\rho}) \leq M_{\rho'}(x) + (\rho'-\rho)\delta_{\max},\]
similarly
\[M_{\rho}(x) \geq q_{\rho}(x,a_{\rho'}) = q_{\rho'}(x,a_{\rho'}) + (\rho'-\rho)\bar\delta(x,a_{\rho'}) \geq M_{\rho'}(x) + (\rho'-\rho)\delta_{\min}.\]
Rearrange, and average over every context $x$ then gives
\[(\rho'-\rho)\delta_{\min} \leq F(\rho) - F(\rho') \leq (\rho' - \rho)\delta_{\max}.\]
\end{proof}
\begin{lemma}\label{supp:theory:l2}
    $F(\rho^*) = 0$, so $\rho^*$ is the unique root of $F$; $\rho(\pi) \le \rho^*$ for every stochastic policy $\pi$.
\end{lemma}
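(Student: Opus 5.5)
The argument rests on one identity linking the value $F$ to reward rates. For any deterministic $u\in\gU$, write $R(u):=\E_p[\bar r(x,u(x))]$ and $D(u):=\E_p[\bar\delta(x,u(x))]\ge\delta_{\min}>0$. Then $\rho(u)=R(u)/D(u)$, and for any $\rho\in\R$
\[
\E_p\big[q_\rho(x,u(x))\big]=R(u)-\rho D(u)=D(u)\big(\rho(u)-\rho\big).
\]
Take $u^*\in\argmax_{u\in\gU}\rho(u)$, which exists because $\gU$ is finite. Since $M_{\rho^*}(x)\ge q_{\rho^*}(x,u(x))$ pointwise for every $u$, we have
\[
F(\rho^*)=\max_u \E_p[q_{\rho^*}(x,u(x))]=\max_u D(u)(\rho(u)-\rho^*).
\]
For every $u$ this term is $\le 0$, because $\rho(u)\le\rho^*$ by definition of $\rho^*$. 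It equals $0$ at $u^*$. Hence $F(\rho^*)=0$.

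The maximum of $\E_p[q_\rho(x,u(x))]$ over $u$ equals $F(\rho)$ because the maximization decouples across contexts. Choosing $u=u_\rho$ pointwise attains the pointwise max of $q_\rho(x,\cdot)$, and that max is exactly $M_\rho(x)$.

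For uniqueness of the root, \Cref{supp:theory:l1} gives that $F$ is strictly decreasing, so it has at most one zero. That zero is $\rho^*$.

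For a stochastic policy $\pi$, write $R(\pi)=\E_p[\sum_a\pi(a\mid x)\bar r(x,a)]$ and $D(\pi)$ analogously, with $D(\pi)\ge\delta_{\min}>0$. Then
\[
R(\pi)-\rho^*D(\pi)=\E_p\Big[\sum_a\pi(a\mid x)\,q_{\rho^*}(x,a)\Big]\le \E_p[M_{\rho^*}(x)]=F(\rho^*)=0.
\]
Dividing by $D(\pi)>0$ gives $\rho(\pi)\le\rho^*$.

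No step here is hard. The one point needing care is the interchange of maximum and expectation: $\max_u\E_p[\cdot]=\E_p[\max_a\cdot]$. This holds because $\gX$ is finite and the policy class $\gU$ consists of all maps $\gX\to\gA$, so a pointwise argmax is itself an element of $\gU$.
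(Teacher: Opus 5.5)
Your proof is correct. The last two claims are handled as in the paper: uniqueness from the strict monotonicity in \Cref{supp:theory:l1}, and $\rho(\pi)\le\rho^*$ via $\sum_a\pi(a\mid x)q^*(x,a)\le M_{\rho^*}(x)$.

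You differ on the key claim $F(\rho^*)=0$. The paper only asserts it ``by the Bellman optimality equation.'' The underlying derivation is the ground-truth argument leading to \Cref{bandit:fixedpoint}: with $p(x'\mid x,a)=p(x')$, the SMDP optimality equation is averaged over $p(x)$ and the constant continuation term cancels. That route tacitly assumes the optimality equation has a solution whose gain is exactly $\max_{u\in\gU}\rho(u)$.

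You instead derive $F(\rho^*)=0$ directly from the definition of $\rho^*$, through the Dinkelbach-type identity $F(\rho)=\max_{u\in\gU}D(u)\big(\rho(u)-\rho\big)$. This needs only three facts:
\begin{itemize}
\item $\gX$ and $\gA$ are finite;
\item $D(u)\ge\delta_{\min}>0$;
\item the context distribution does not depend on the policy, so the maximization decouples across contexts.
\end{itemize}
The paper itself uses this same identity in the stochastic-policy step of this lemma and in \Cref{supp:theory:l3}, so your argument makes the lemma self-contained at no extra cost.

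The paper's route is the one that generalizes to a genuine SMDP. There the state distribution $d^{u}$ depends on $u$, so $\max_u\E_{d^u}[\cdot]$ no longer reduces to an expectation of pointwise maxima. In that setting you really do need the optimality equation, together with its existence theory under unichain-type assumptions.
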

\begin{proof}
    $F(\rho^*) = 0$ can be established by Bellman optimality equation, and $F$ is strictly decreasing, so $\rho^*$ is unique. For arbitrary policy $\pi$, we have
    \[\big(\rho(\pi) - \rho^*\big)D(\pi) = \E_p \left[\sum_{a} \pi(a\mid x)q^*(x,a)\right] \leq \E_p \left[\sum_{a} \pi(a\mid x)M_{\rho^*}(x)\right] = F(\rho^*) = 0,\]
    where $D(\pi) = \E_p \left[\sum_{a} \pi(a\mid x)\bar\delta(x,a)\right]$, thus $\rho(\pi) \leq \rho^*$.
\end{proof}
\begin{lemma}\label{supp:theory:l3}
    Let $u_\rho(x) \in \argmax_a q_\rho(x,a)$.
    If $\rho \le \rho^*$, then $\rho \le \rho(u_\rho) \le \rho^*$ and
    \begin{equation*}
        \rho^* - \rho(u_\rho) \;\le\; \chi\, (\rho^* - \rho).
    \end{equation*}
\end{lemma}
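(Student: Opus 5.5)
The plan is to work with the greedy policy $u := u_\rho$ and two quantities attached to it: its mean reward $N(u) := \E_p[\bar r(x,u(x))]$ and its mean time $D(u) := \E_p[\bar\delta(x,u(x))]$. Then $\rho(u) = N(u)/D(u)$ and $D(u)\in[\delta_{\min},\delta_{\max}]$. By construction of the greedy policy,
\[
F(\rho) = \E_p\big[q_\rho(x,u(x))\big] = N(u) - \rho D(u) = \big(\rho(u) - \rho\big)D(u).
\]
This identity does most of the work.

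For the two-sided bound $\rho \le \rho(u_\rho) \le \rho^*$, use \Cref{supp:theory:l1,supp:theory:l2}.
\begin{itemize}
\item The upper bound $\rho(u_\rho) \le \rho^*$ is immediate from \Cref{supp:theory:l2}, since $u_\rho$ is a (deterministic) policy.
\item For the lower bound, $F$ is decreasing and $F(\rho^*)=0$, so $\rho \le \rho^*$ gives $F(\rho)\ge 0$. Since $D(u)>0$, the identity above yields $\rho(u)\ge\rho$.
\end{itemize}

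For the contraction inequality, bound $F(\rho)$ from both sides.
\begin{itemize}
\item From \Cref{supp:theory:l1} with the pair $\rho\le\rho^*$ and $F(\rho^*)=0$, we get $F(\rho)\ge \delta_{\min}(\rho^*-\rho)$. (If $\rho=\rho^*$ everything is trivial.)
\item From the identity, $F(\rho) = (\rho(u)-\rho)D(u) \le \delta_{\max}(\rho(u)-\rho)$. This step uses $\rho(u)-\rho\ge 0$, established above.
\end{itemize}
Combining the two gives
\[
\rho(u)-\rho \ge \frac{\delta_{\min}}{\delta_{\max}}(\rho^*-\rho).
\]
Rewriting $\rho^*-\rho(u) = (\rho^*-\rho) - (\rho(u)-\rho)$ then gives
\[
\rho^*-\rho(u) \le \Big(1-\frac{\delta_{\min}}{\delta_{\max}}\Big)(\rho^*-\rho) = \chi(\rho^*-\rho).
\]

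The main point of care is the sign bookkeeping. Bounding $D(u)$ above by $\delta_{\max}$ is legitimate only because $\rho(u)-\rho\ge0$, so the monotonicity part must be proved first. Tie-breaking in $u_\rho$ is irrelevant, because every maximizer attains $M_\rho(x)$ and so the identity holds for any choice.
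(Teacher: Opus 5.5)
Your proposal is correct and follows essentially the same route as the paper: the identity $F(\rho) = (\rho(u_\rho)-\rho)\,D(u_\rho)$, nonnegativity of $F(\rho)$ together with \Cref{supp:theory:l2} for $\rho \le \rho(u_\rho) \le \rho^*$, and \Cref{supp:theory:l1} with $D(u_\rho)\le\delta_{\max}$ for the contraction. The only difference is cosmetic. You bound $(\rho(u)-\rho)D(u)\le\delta_{\max}(\rho(u)-\rho)$, whereas the paper bounds $F(\rho)/D(u)\ge F(\rho)/\delta_{\max}$; these are the same step, since $F(\rho)\ge 0$ if and only if $\rho(u)\ge\rho$.
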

\begin{proof}
    Let $u_{\rho}(x) = \argmax_{a} q_{\rho}(x,a)$ be the greedy policy. We have
    \[F(\rho) = \big(\rho(u_{\rho}) - \rho\big) D(u_{\rho}), \quad \rho(u_{\rho}) = \rho + \frac{F(\rho)}{D(u_{\rho})},\]
    by \Cref{supp:theory:l1}, and that $F(\rho^*) = 0$, we have
    \[\delta_{\min} (\rho^* - \rho) \leq F(\rho) - F(\rho^*) = F(\rho) \leq \delta_{\max}(\rho^* - \rho),\]
    since $\rho\le\rho^*$, $F(\rho)\ge \delta_{\min}(\rho^* - \rho) \ge 0$, thus $\rho(u_{\rho}) \geq \rho$, and by \Cref{supp:theory:l2}, $\rho \leq \rho(u_{\rho}) \leq \rho^*$.
    For the contraction, we have
    \begin{align*}
    \rho^* - \rho(u_\rho)
    &= (\rho^* - \rho) - \frac{F(\rho)}{D(u_\rho)} \\
    &\le (\rho^* - \rho) - \frac{F(\rho)}{\delta_{\max}} \\
    &\le (\rho^* - \rho) - \frac{\delta_{\min}\,(\rho^* - \rho)}{\delta_{\max}} \\
    &= \Big(1 - \frac{\delta_{\min}}{\delta_{\max}}\Big)(\rho^* - \rho)
    = \chi\,(\rho^* - \rho).
\end{align*}
\end{proof}
\begin{lemma}\label{supp:theory:l4}
    For all $\rho, \rho' \in \R$, every $x$, and every $a \ne u_{\rho'}(x)$,
    \begin{equation*}
        q_\rho\big(x, u_{\rho'}(x)\big) - q_\rho(x,a) \ge \Delta(\rho') - |\rho - \rho'|\, \delta_{\mathrm{spr}} .
    \end{equation*}
    Consequently $\Delta(\rho) \ge \Delta(\rho') - |\rho - \rho'|\,\delta_{\mathrm{spr}}$ and hence, by symmetry, $|\Delta(\rho) - \Delta(\rho')| \le |\rho - \rho'|\,\delta_{\mathrm{spr}}$: the greedy margin is $\delta_{\mathrm{spr}}$-Lipschitz in the rate.
    If moreover $|\rho - \rho'|\,\delta_{\mathrm{spr}} < \Delta(\rho')$, then $u_\rho = u_{\rho'}$ point-wise, under any tie-breaking.
    Setting $\rho' = \rho^*$ gives,
    \begin{equation*}
        \Delta(\rho) \ge \Delta^* - |\rho - \rho^*|\, \delta_{\mathrm{spr}},
        \qquad\text{and}\qquad
        u_\rho = a^* \ \text{ whenever } \ |\rho - \rho^*|\, \delta_{\mathrm{spr}} < \Delta^*,
    \end{equation*}
    in particular $|\rho - \rho^*| \le \bar\varepsilon$ gives $u_\rho = a^*$ and $\Delta(\rho) \ge \Delta^*/2$.
\end{lemma}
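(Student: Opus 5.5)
The whole argument rests on one identity. Since $q_\rho(x,a) = \bar r(x,a) - \rho\,\bar\delta(x,a)$, we have $q_\rho(x,a) = q_{\rho'}(x,a) - (\rho - \rho')\,\bar\delta(x,a)$ for all $x,a$. Fix $x$, write $u := u_{\rho'}(x)$, and take any $a \ne u$. Then
\begin{equation*}
q_\rho(x,u) - q_\rho(x,a) = \big[q_{\rho'}(x,u) - q_{\rho'}(x,a)\big] - (\rho-\rho')\big[\bar\delta(x,u) - \bar\delta(x,a)\big].
\end{equation*}
The first bracket is at least $\max_a q_{\rho'}(x,a) - \max_{b\ne u} q_{\rho'}(x,b)$, because $u$ attains the maximum under $q_{\rho'}$. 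That quantity is in turn at least $\Delta(\rho')$ by the definition of the greedy margin as a minimum over contexts. For the second term, both $\bar\delta(x,u)$ and $\bar\delta(x,a)$ lie in $[\min_b \bar\delta(x,b), \max_b \bar\delta(x,b)]$, so their difference is at most $\delta_{\mathrm{spr}}$ in absolute value. Combining the two bounds gives the first display directly.

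For the consequences I will split on the sign of $c := \Delta(\rho') - |\rho-\rho'|\,\delta_{\mathrm{spr}}$.

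\emph{Case $c \le 0$.} The claim $\Delta(\rho) \ge c$ is trivial, since $\Delta(\rho) \ge 0$ always.

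\emph{Case $c > 0$.} The first display shows that $u_{\rho'}(x)$ strictly beats every other action under $q_\rho$. So it is the unique maximizer of $q_\rho(x,\cdot)$, and $u_\rho(x) = u_{\rho'}(x)$ regardless of tie-breaking. This is exactly the ``moreover'' statement, since $c>0$ is equivalent to $|\rho-\rho'|\,\delta_{\mathrm{spr}} < \Delta(\rho')$. With the greedy actions identified, the margin at $x$ equals $\min_{a\ne u_{\rho'}(x)}[q_\rho(x,u_{\rho'}(x)) - q_\rho(x,a)] \ge c$. Taking the minimum over $x$ gives $\Delta(\rho) \ge c$.

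Swapping the roles of $\rho$ and $\rho'$ gives the reverse inequality, and hence the $\delta_{\mathrm{spr}}$-Lipschitz bound.

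Finally, I specialize to $\rho' = \rho^*$. By \Cref{supp:theory:a2}, $\Delta(\rho^*) = \Delta^* > 0$ and the maximizer $a^*(x)$ is unique, so $u_{\rho^*} = a^*$. If $|\rho-\rho^*| \le \bar\varepsilon = \Delta^*/(2\delta_{\mathrm{spr}})$, then $|\rho-\rho^*|\,\delta_{\mathrm{spr}} \le \Delta^*/2 < \Delta^*$. The previous step then gives $u_\rho = a^*$ and $\Delta(\rho) \ge \Delta^*/2$.

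The only delicate point I expect is the tie-breaking in the definition of $\Delta(\rho)$, which is stated relative to $u_\rho$. This is why the bound on $\Delta(\rho)$ must be routed through first proving that $u_\rho(x) = u_{\rho'}(x)$, and why the degenerate case $c \le 0$ is handled separately. Everything else is a one-line computation.
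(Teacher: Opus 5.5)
Your proposal is correct and follows the paper's proof essentially step for step. It uses the same identity $q_\rho = q_{\rho'} - (\rho-\rho')\bar\delta$, the same two-term decomposition, the same case split on the sign of $\Delta(\rho') - |\rho-\rho'|\,\delta_{\mathrm{spr}}$, the symmetry swap for the Lipschitz bound, and the specialization $\rho'=\rho^*$. Your one departure is minor and in your favor: the paper invokes \Cref{supp:theory:a2} for the ``moreover'' step, whereas you observe that strict positivity already forces $u_\rho(x)=u_{\rho'}(x)$ under any tie-breaking.
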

\begin{proof}
    Compare the relative values at two reward-rates $\rho,\rho'$ similar to \Cref{supp:theory:l1} gives
    \[q_{\rho}(x,a) = q_{\rho'}(x,a) + (\rho'-\rho)\bar\delta(x,a), \quad\forall x \in \gX, a\in \gA,\]
    denote $u:= u_{\rho'}(x)$ as the greedy policy, and let $a\neq u$, and substitute the equation above gives
    \[q_{\rho}(x,u) - q_{\rho}(x,a) = \underbrace{\big[q_{\rho'}(x,u) - q_{\rho'}(x,a)\big]}_{(1)} + \underbrace{(\rho'-\rho) \big[\bar\delta(x,u) - \bar\delta(x,a)\big]}_{(2)}.\]
    By definition (see \Cref{supp:theory:prelim}), $(1) \geq \Delta(\rho')$. For $(2)$, both $\bar\delta$ terms lie within interval $[\min_b\bar\delta(x,b),\max_{b}\bar\delta(x,b)]$, whose length is at most $\delta_{\mathrm{spr}}$ (\Cref{supp:theory:prelim}), this gives
    \[\big|(\rho' - \rho)\big[\bar\delta(x, u) - \bar\delta(x,a)\big]\big|
    \le |\rho - \rho'| \, \big[\max_b \bar\delta(x,b) - \min_b \bar\delta(x,b)\big]
    \le |\rho - \rho'|\, \delta_{\mathrm{spr}},\]
    which implies
    \[q_\rho(x, u) - q_\rho(x,a) \ge \Delta(\rho') - |\rho - \rho'|\, \delta_{\mathrm{spr}}.\]
    If the right hand side is positive, then it implies that $u$ is an unique maximizer, thus $u_{\rho'}(x) = u_{\rho}(x)$, then taking the minimum over $x$ gives
    \[\Delta(\rho) \ge \Delta(\rho') - |\rho - \rho'|\, \delta_{\mathrm{spr}}, \]
    If the right hand side is non-positive, then the above inequality still holds since $\Delta(\rho) \ge 0$ by construction (\Cref{supp:theory:prelim}).
    By symmetry, we also have that $\Delta(\rho') \ge \Delta(\rho) - |\rho - \rho'|\, \delta_{\mathrm{spr}}$, which implies
    \[|\Delta(\rho) - \Delta(\rho')| \le |\rho - \rho'|\,\delta_{\mathrm{spr}}.\]
    This completes the proof for $\delta_{\mathrm{spr}}$-Lipschitz statement.
    If $|\rho-\rho'|\delta_{\mathrm{spr}} < \Delta(\rho')$, then $q_\rho(x,u) - q_\rho(x,a) > 0$, and since $u:=u_{\rho'}(x)$, and by uniqueness of the greedy action assumption (\Cref{supp:theory:a2}), we have $u_{\rho'}(x) = u_{\rho}(x)$ for every $x$, i.e., the greedy policy for the two rates are the same.

    For the last statement, substitute $\rho'=\rho^*$ gives the property
    \[\Delta(\rho) \ge \Delta^* - |\rho - \rho^*|\, \delta_{\mathrm{spr}},
        \qquad\text{and}\qquad
        u_\rho = a^* \ \text{ whenever } \ |\rho - \rho^*|\, \delta_{\mathrm{spr}} < \Delta^*,\]
    and with $\bar\varepsilon = \Delta^* / (2\delta_{\mathrm{spr}})$, if $|\rho - \rho^*| \le \bar\varepsilon$, then $|\rho-\rho^*| \delta_{\mathrm{spr}} \le \Delta^* / 2 < \Delta(\rho^*) = \Delta^*$, which implies $u_{\rho} = a^*$ and $\Delta(\rho) \geq \Delta^* / 2$.
\end{proof}
\Cref{supp:theory:l1,supp:theory:l2,supp:theory:l3} are statements on the Dinkelbach's algorithm for fractional programming similar to those in \cite{jagannathan_properties_1966,dinkelbach_nonlinear_1967, schaible_fractional_1976}.

\subsection{Main Results}
\begin{theorem}\label{supp:theory:contraction}
Let $\gamma \in [0,1]$ and $\rho_1 \in [0, \rho_{\max}]$.
For every $k \ge 1$, let $u_{\rho_k}(x) \in \argmax_a q_{\rho_k}(x,a)$ be a greedy policy under any tie-breaking and $\rho(u_{\rho_k})$ is the reward rate of the greedy policy, let $\tilde\rho_{k+1}$ be any value with the sequence satisfying the condition
\begin{equation}
\frac{\rho(u_{\rho_k})}{1+\gamma} \le \tilde\rho_{k+1} \le (1+\gamma)\,\rho(u_{\rho_k}),
\qquad
\rho_{k+1} = \mathrm{clip}_{[0,\rho_{\max}]}\big(\tilde\rho_{k+1}\big),
\label{supp:theory:eq:gamma-recursion}
\end{equation}
where $\mathrm{clip}_{[0,\rho_{\max}]}(z) := \min\{\max\{z, 0\},\, \rho_{\max}\}$.
Then, for every $k \ge 2$,
\begin{equation}
    \left(1 - \frac{R\gamma}{1+\gamma}\right) \rho^* - \left(\frac{\chi}{1+\gamma}\right)^{k-2} \rho_{\max} \le \rho_k \le (1+\gamma)\rho^*,
    \label{supp:theory:eq:thm1:bound}
\end{equation}
where $R = \delta_{\max} / \delta_{\min} \ge 1$. If $\gamma = 0$, then the sequence increases monotonically $\rho_2 \le \rho_3 \le \cdots \le \rho^*$ and the iteration reaches $\rho_k = \rho^*$ after finite steps.
\end{theorem}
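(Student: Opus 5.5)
The plan is to track the signed error $e_k := \rho^* - \rho_k$ and prove the two sides of \Cref{supp:theory:eq:thm1:bound} separately. Everything rests on the Dinkelbach identity $F(\rho) = \big(\rho(u_\rho) - \rho\big)\,D(u_\rho)$, used in the proof of \Cref{supp:theory:l3}, together with the slope bounds of \Cref{supp:theory:l1}.

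\textbf{Upper bound.} By \Cref{supp:theory:l2}, every greedy policy satisfies $\rho(u_{\rho_k}) \le \rho^*$, so $\tilde\rho_{k+1} \le (1+\gamma)\rho^*$. Since $\rho(u_{\rho_k}) \ge 0$, clipping at $0$ leaves this bound intact, and clipping at $\rho_{\max}$ only lowers the value. Hence $\rho_k \le (1+\gamma)\rho^*$ for all $k \ge 2$.

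\textbf{Lower bound.} We derive a one-step recursion for $e_{k+1}$, splitting into two cases according to the sign of $e_k$. Throughout we use the lower clip only in the form $\rho_{k+1} \ge \tilde\rho_{k+1} \ge \rho(u_{\rho_k})/(1+\gamma)$, which holds because $\rho(u_{\rho_k}) \ge 0$.
\begin{itemize}
\item \emph{Case $\rho_k \le \rho^*$.} \Cref{supp:theory:l3} gives $\rho(u_{\rho_k}) \ge \rho^* - \chi e_k$. Dividing by $1+\gamma$ yields
\[ e_{k+1} \le \frac{\gamma\rho^*}{1+\gamma} + \frac{\chi}{1+\gamma}\,e_k. \]
\item \emph{Case $\rho_k > \rho^*$.} Here \Cref{supp:theory:l3} does not apply, so we bound $\rho(u_{\rho_k})$ directly. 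By \Cref{supp:theory:l1} and $F(\rho^*) = 0$, we have $F(\rho_k) \ge -\delta_{\max}(\rho_k - \rho^*)$. Since $D(u_{\rho_k}) \ge \delta_{\min}$, the Dinkelbach identity gives
\[ \rho(u_{\rho_k}) \ge \rho_k - R(\rho_k - \rho^*) = \rho^* - (R-1)(\rho_k - \rho^*). \]
The upper bound already proved gives $\rho_k - \rho^* \le \gamma\rho^*$. Therefore $\rho_{k+1} \ge \rho^*\big(1 - (R-1)\gamma\big)/(1+\gamma) = \big(1 - R\gamma/(1+\gamma)\big)\rho^*$, which is exactly the stationary part of the claimed bound.
\end{itemize}
This case split is where I expect the main obstacle. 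The naive approach would apply \Cref{supp:theory:l3} throughout, but overshoot above $\rho^*$ is possible when $\gamma > 0$, and then the greedy rate can drop. The factor $R$ in the theorem comes precisely from controlling this drop via the slope bound $\delta_{\max}/\delta_{\min}$.

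\textbf{Unrolling the recursion.} Combining the cases gives
\[ e_{k+1} \le \max\Big\{\frac{R\gamma\rho^*}{1+\gamma},\; \frac{\gamma\rho^*}{1+\gamma} + \frac{\chi}{1+\gamma}\,e_k^+\Big\}. \]
The affine map has fixed point $\gamma\rho^*/(1+\gamma-\chi)$. This is at most $R\gamma\rho^*/(1+\gamma)$, because $R(1+\gamma-\chi) = R\gamma + 1 \ge 1+\gamma$ (using $1 - \chi = 1/R$ and $R \ge 1$). Induction from $k = 2$ then gives
\[ e_k \le \frac{R\gamma\rho^*}{1+\gamma} + \Big(\frac{\chi}{1+\gamma}\Big)^{k-2} e_2^+. \]
Finally $e_2^+ \le \rho^* \le \rho_{\max}$, since $\rho_2 \ge 0$.

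\textbf{The case $\gamma = 0$.} Now $\rho_{k+1} = \rho(u_{\rho_k})$, because $\rho(u_{\rho_k}) \in [0, \rho_{\max}]$ and clipping does nothing. From $k = 2$ on, $\rho_k$ is the rate of a deterministic policy, so $\rho_k \le \rho^*$ by \Cref{supp:theory:l2}, and \Cref{supp:theory:l3} gives $\rho_k \le \rho_{k+1} \le \rho^*$. For finite termination:
\begin{itemize}
\item If $\rho_{k+1} = \rho_k$, the identity $F(\rho_k) = \big(\rho(u_{\rho_k}) - \rho_k\big)D(u_{\rho_k}) = 0$ forces $\rho_k = \rho^*$, since $\rho^*$ is the unique root of $F$ by \Cref{supp:theory:l2}.
\item Otherwise the sequence strictly increases.
\end{itemize}
A strictly increasing sequence cannot keep taking values in the finite set $\{\rho(u) : u \in \gU\}$. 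Hence $\rho_k = \rho^*$ after at most $|\gU|$ steps.
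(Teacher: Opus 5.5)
Your proposal is correct and follows essentially the same route as the paper: the same upper bound from $\rho(u_{\rho_k})\le\rho^*$, the same below/above case split (\Cref{supp:theory:l3} when $\rho_k\le\rho^*$; the slope bound of \Cref{supp:theory:l1} with the Dinkelbach identity when $\rho_k>\rho^*$, giving $e_{k+1}\le R\gamma\rho^*/(1+\gamma)$), the same induction (your fixed-point check $R(1+\gamma-\chi)=R\gamma+1\ge 1+\gamma$ is the paper's inequality $(\chi B+\gamma\rho^*)/(1+\gamma)\le B$), and the same finite-set argument for $\gamma=0$. One small slip: $\rho_{k+1}\ge\tilde\rho_{k+1}$ is false when the upper clip at $\rho_{\max}$ binds, but the bound you actually use, $\rho_{k+1}\ge\rho(u_{\rho_k})/(1+\gamma)$, still holds because $\rho(u_{\rho_k})/(1+\gamma)\le\rho^*\le\rho_{\max}$, which is how the paper handles the clip.
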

\begin{proof}
    Throughout the proof, we use the notation
    \[N(\pi) := \E_p\Big[\sum_a \pi(a \mid x)\, \bar r(x,a)\Big],
    \qquad
    D(\pi) := \E_p\Big[\sum_a \pi(a \mid x)\, \bar\delta(x,a)\Big],\]
    and $\rho(\pi) = N(\pi) / D(\pi)$ by definition, and
    \begin{align*}
    \E_p\Big[\sum_a \pi(a \mid x)\, q_\rho(x,a)\Big]
    &= N(\pi) \;-\; \rho\, D(\pi) \\
    &= \rho(\pi)\, D(\pi) \;-\; \rho\, D(\pi)
    = \big(\rho(\pi) - \rho\big)\, D(\pi) .
\end{align*}
By \Cref{supp:theory:l1}, we have
\[0\le \rho(u_{\rho_k}) \le \rho^* \le \rho_{\max}.\]
We first note that the $\mathrm{clip}$ operation survives the upper and lower bounds.
For upper bounds with a nonnegative right hand side: if $z \le B$ and $B \ge 0$, we have
\[\mathrm{clip}_{[0,\rho_{\max}]}(z) \le \max\{z,0\} \le B ,\]
and for lower bounds with a right-hand side at most $\rho_{\max}$: if $z \geq b$ with $b \le \rho_{\max}$, then
\[\mathrm{clip}_{[0,\rho_{\max}]}(z)
    \ge \min\{z,\, \rho_{\max}\} \ge \min\{b,\, \rho_{\max}\} = b .\]
Since the bounds survive, for notation simplicity, we use $\rho_{k+1}$ and $\tilde\rho_{k+1}$ interchangeably.

For $k\geq 1$, using the condition on the sequence in \Cref{supp:theory:eq:gamma-recursion}, we have
\[\tilde\rho_{k+1} \le (1+\gamma)\, \rho(u_{\rho_k}) \;\le\; (1+\gamma)\, \rho^* \implies \rho_{k+1} \le (1+\gamma)\rho^*,\]
and this completes the upper bound in \Cref{supp:theory:eq:thm1:bound}. To prove the lower bound in \Cref{supp:theory:eq:thm1:bound}, we define
\[e_k := \rho^* - \rho_k,\quad B:= \frac{R\gamma \rho^*}{1+\gamma}.\]
To complete the rest of the proof, we enumerate each of the two cases when $k \ge 2$: (i) \textit{the below case} with $\rho_k \le \rho^*$. (ii) \textit{the above case} with $\rho^* < \rho_k \le (1+\gamma)\rho^*$.
For the below case with $\rho_k \le \rho^*$, using \Cref{supp:theory:l3} gives
\[\rho(u_{\rho_k})\ge \rho^* - \chi(\rho^* - \rho_k) = \rho^* - \chi e_k,\]
using the condition in \Cref{supp:theory:eq:gamma-recursion}, we have $\rho_{k+1} \ge \rho(u_{\rho_k}) / (1+\gamma)$, which gives
\[\rho^* - \rho_{k+1} \le \rho^* - \frac{\rho^* - \chi e_k}{1+\gamma} = \frac{(1+\gamma)\rho^* - \rho^* + \chi e_k}{1+\gamma} = \frac{\chi e_k + \gamma \rho^*}{1+\gamma},\]
thus we established the fact that $e_{k+1} \leq \frac{\chi e_k + \gamma \rho^*}{1+\gamma}$ for the below case.

For the above case, $\rho^* < \rho_k \le (1+\gamma)\rho^*$ implies $0 < \rho_k - \rho^* \le \gamma \rho^*$.
By \Cref{supp:theory:l1} and the fact that $\rho^* < \rho_k$, we have
\begin{equation}
    -\delta_{\max}(\rho_k - \rho^*) \le F(\rho_k) \le -\delta_{\min} (\rho_k - \rho^*) \le 0.
    \label{supp:theory:main:eq1}
\end{equation}
Combine together with the fact that $F(\rho_k) / D(u_{\rho_k}) \ge F(\rho_k) / \delta_{\min}$ gives
\begin{equation}
    \frac{F(\rho_k)}{D(u_{\rho_k})} \geq -\frac{\delta_{\max}}{\delta_{\min}} (\rho_k - \rho^*),
    \label{supp:theory:main:thm1:identity}
\end{equation}
which implies
\begin{align*}
    \rho(u_{\rho_k}) &= \rho_k + \frac{F(\rho_k)}{D(u_{\rho_k})} \ge \rho_k - \frac{\delta_{\max}}{\delta_{\min}}(\rho_k - \rho^*)\\
    &= \rho^* + (\rho_k - \rho^*) - \frac{\delta_{\max}}{\delta_{\min}}(\rho_k - \rho^*) \\
    &= \rho^* - \left(\frac{\delta_{\max}}{\delta_{\min}} - 1 \right)  (\rho_k - \rho^*) \ge \rho^* - \left(R - 1 \right)\gamma \rho^*,
\end{align*}
where the first equality comes from the proof in \Cref{supp:theory:l3}.
Use the equation above with the fact $\rho_{k+1} \ge \rho(u_{\rho_k}) / (1+\gamma)$, we have
\begin{equation}
    \begin{aligned}
    \rho^* - \rho_{k+1} &\le \rho^* - \frac{\rho(u_{\rho_k})}{1+\gamma} \\
    &\le \rho^* - \frac{\rho^* - (R-1)\gamma \rho^*}{1+\gamma}\\
    &= \rho^* \frac{(1+\gamma) - 1 + (R-1) \gamma}{ 1+\gamma} = \frac{R\gamma\rho^*}{1+\gamma} = B.
\end{aligned}
\label{supp:theory:main:thm1:eq2}
\end{equation}
Hence we established the fact that $e_{k+1} \leq B$ for the above case. The final proof is to show by induction that the following is true
\begin{equation}
    e_k \le \tilde\chi^{k-2}\rho_{\max} + B, \qquad k\ge 2,
\end{equation}
where $\tilde\chi := \chi / (1+\gamma) \in [0,1)$. For the base case $k=2$, since $\rho_2 \ge 0$, we have $\rho^* - \rho_2 \le \rho^* \le \rho_{\max}$, which gives $e_2 \le \rho_{\max} \le \rho_{\max} + B$.
For the induction step, we assume the claim is true for some $k\ge 2$ and enumerate each of the two cases $\rho_k \le \rho^*$ and $\rho_k > \rho^*$. If $\rho_k > \rho^*$, then by \Cref{supp:theory:main:thm1:eq2}, we have $e_{k+1} \leq B \leq \tilde\chi^{k-1}\rho_{\max} + B$.
If $\rho_k \le \rho^*$, then
\begin{equation}
\begin{aligned}
    e_{k+1} &\le \frac{\chi e_k + \gamma \rho^*}{1+\gamma} \le \frac{\chi \big(\tilde\chi^{k-2}\rho_{\max} + B\big) + \gamma \rho^*}{1+\gamma}\\
    &= \tilde\chi^{k-1} \rho_{\max} + \frac{\chi B + \gamma \rho^*}{1+\gamma},
    \label{supp:theory:main:thm1:eq3}
\end{aligned}
\end{equation}
\begin{equation}
    \frac{\chi B + \gamma \rho^*}{1+\gamma} = \frac{\gamma\rho^*}{(1+\gamma)^2}(\chi R + 1 + \gamma) \leq \frac{\gamma\rho^*}{(1+\gamma)^2} (R + R\gamma) = B,
    \label{supp:theory:main:thm1:eq4}
\end{equation}
where the last inequality comes from the identity that $\chi R + 1 = R$, so $\chi R + 1 + \gamma \leq R + R\gamma$. Combine \Cref{supp:theory:main:thm1:eq3} with \Cref{supp:theory:main:thm1:eq4}, we have
\[e_{k+1} \le \tilde\chi^{k-1} \rho_{\max} + \frac{\chi B + \gamma \rho^*}{1+\gamma} \le \tilde\chi^{k-1} \rho_{\max} + B,\]
rearranging the terms gives the lower bound and completes the bound
\[\rho^*\left(1 - \frac{R\gamma}{1+\gamma}\right)  - \left(\frac{\chi}{1+\gamma}\right)^{k-2}\rho_{\max} \leq \rho_k.\]
We now show that when $\gamma = 0$, then the iteration reaches $\rho^*$ in finite steps. If $\gamma = 0$, we have that every rate estimate equals to the current greedy policy's rate $\rho_{k+1} = \rho(u_{\rho_k})$, by the identity in \Cref{supp:theory:main:thm1:identity} and the upper bound $\rho_k \le \rho^*$ in \Cref{supp:theory:eq:thm1:bound}, we have
\[\rho_{k+1} - \rho_k = \frac{F(\rho_k)}{D(u_{\rho_k})} \ge \frac{\delta_{\min}}{\delta_{\max}} (\rho^* - \rho_k) \ge 0,\]
hence the sequence increases monotonically. 
To prove that it reaches $\rho^*$ within finite steps, suppose for sake of contradiction that $\{\rho_k\}$ does not reach $\rho^*$ in finite steps, then we would have $\rho_k < \rho^*$ for every iteration $k = 2,\cdots,M+2$, which implies
\[\rho_{k+1} - \rho_k = \frac{F(\rho_k)}{D(u_{\rho_k})} \ge\frac{\delta_{\min}}{\delta_{\max}} (\rho^* - \rho_k) > 0,\]
which means the sequence $\{\rho_k\}$ is strictly monotonic.
Since greedy policy is a deterministic policy, and there are $M = |\gA|^{|\gX|}$ possible deterministic policies, this means the possible number of reward rate is $| \{\rho(u) : u\in \gU\}| \leq M$.
Then we would have $\rho_2 < \rho_3 < \cdots < \rho_{M+2}$, which are $M+1$ distinct rates, a contradiction.
\end{proof}
\begin{proposition}\label{supp:theory:prop:npg}
Let $\bar\rho \in \R$ be any rate with $\Delta(\bar\rho) > 0$, write $u := u_{\bar\rho}$ for the greedy policy at $\bar\rho$, which is then unique at every context, and run the algorithm from $\theta^{(0)}$ with $\pi_0 := \pi_{\theta^{(0)}}(u(x) \mid x)$.
For every $x$ and $k \ge 0$,
\begin{equation}
1 - \pi_{\theta^{(k)}}(u(x) \mid x) \;\le\; \frac{1-\pi_0}{\pi_0}\,\exp \big({-\eta\, \Delta(\bar\rho)\, k}\big),
\label{supp:theory:eq:npg-tail}
\end{equation}
in particular $1 - \pi_{\theta^{(k)}}(u(x) \mid x) \le (|\gA|-1) \exp\big({-\eta\, \Delta(\bar\rho)\, k}\big)$ when $\pi_0 = 1/|\gA|$ (policy is initially uniform).
\end{proposition}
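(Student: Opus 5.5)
The plan is to show that the NPG iterates with the rate frozen at $\bar\rho$ have a closed form, and then to bound the off-greedy mass directly from that form. Throughout I read ``the algorithm'' as the NPG update $\theta'_x \leftarrow \theta_x + \eta\,A_\theta(\bar\rho,x)$ applied at every context with the same fixed $\bar\rho$, in the exact (expected) setting $\hat Q_{x,a}(\bar\rho)=q_{\bar\rho}(x,a)$ stated in the Scope paragraph.

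\emph{Step 1: closed form of the iterates.} Fix a context $x$. By definition $A_\theta(\bar\rho,x) = q_{\bar\rho}(x,\cdot) - \langle \pi_\theta(\cdot\mid x), q_{\bar\rho}(x,\cdot)\rangle\,\vone$. This is $q_{\bar\rho}(x,\cdot)$ shifted by a multiple of $\vone$, and the softmax is invariant to adding $c\,\vone$ to the logits. By induction on $k$ this gives
\[
\theta^{(k)}_x = \theta^{(0)}_x + \eta k\, q_{\bar\rho}(x,\cdot) + c_k(x)\,\vone
\]
for some scalar $c_k(x)$. Consequently
\[
\pi_{\theta^{(k)}}(a\mid x) \propto \pi_{\theta^{(0)}}(a\mid x)\,\exp\big(\eta k\, q_{\bar\rho}(x,a)\big).
\]
This is the same mechanism that later underlies the Boltzmann statement in \Cref{results:theory:npg-dynamics}.

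\emph{Step 2: bound the non-greedy mass.} Write $u=u(x)$, $w_a := \pi_{\theta^{(0)}}(a\mid x)\exp(\eta k\, q_{\bar\rho}(x,a))$ and $Z := \sum_b w_b$. Lower-bounding the denominator by the single term $w_u$ gives
\[
1-\pi_{\theta^{(k)}}(u\mid x) = \frac{\sum_{a\neq u} w_a}{Z} \le \sum_{a\ne u}\frac{\pi_{\theta^{(0)}}(a\mid x)}{\pi_0}\exp\big(-\eta k\,[q_{\bar\rho}(x,u)-q_{\bar\rho}(x,a)]\big).
\]
By the definition of the greedy margin, $q_{\bar\rho}(x,u)-q_{\bar\rho}(x,a)\ge \Delta(\bar\rho)$ for every $a\ne u$. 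The assumption $\Delta(\bar\rho)>0$ is what makes $u$ unique and this gap positive. Using $\sum_{a\neq u}\pi_{\theta^{(0)}}(a\mid x) = 1-\pi_0$ then yields \Cref{supp:theory:eq:npg-tail}. For the uniform initialization (which includes the zero initialization), $\pi_0 = 1/|\gA|$, so $(1-\pi_0)/\pi_0 = |\gA|-1$, giving the stated special case.

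\emph{Main obstacle.} There is no real analytic difficulty. The only care needed is in Step 1: one must check that the baseline term $\langle\pi_\theta,q\rangle\vone$ is genuinely a multiple of $\vone$ at every step, so that it drops out of the softmax. One must also be explicit that $\pi_0$ is allowed to depend on $x$, through the per-context quantity $\pi_{\theta^{(0)}}(u(x)\mid x)$. The bound is then a one-line consequence of discarding all but the greedy term in the normalizer.
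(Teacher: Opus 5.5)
Your proposal is correct and follows essentially the paper's route. The paper telescopes the logit gap $\theta^{(k)}_{x,u}-\theta^{(k)}_{x,a}$, which is your closed form modulo $\vone$. It then bounds each odds ratio $\pi_{\theta^{(k)}}(a\mid x)/\pi_{\theta^{(k)}}(u\mid x)$ by $\frac{\pi_{\theta^{(0)}}(a\mid x)}{\pi_0}\exp(-\eta\,\Delta(\bar\rho)\,k)$, sums over $a\neq u$, and multiplies by $\pi_{\theta^{(k)}}(u\mid x)\le 1$, which is the same as your step of keeping only $w_u$ in the normalizer.
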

\begin{proof}
We use same abbreviations as before, where the greedy policy is denoted as $u = u_{\hat\rho}(x)$.
By definition of the margin, we have
\begin{equation*}
    q_{\bar\rho}(x, u) - q_{\bar\rho}(x,a)
    \ge q_{\bar\rho}(x,u) - \max_{b \ne u} q_{\bar\rho}(x,b)
    \ge \Delta(\bar\rho) > 0,
    \qquad \text{for every } a \ne u,
\end{equation*}
the algorithm updates the logit for each step by
\begin{equation*}
    \theta^{(n+1)}_x = \theta^{(n)}_x + \eta\Big(q_{\bar\rho}(x,\cdot) - \big(\vpi^{(n)\top} q_{\bar\rho}(x,\cdot)\big)\, \vone\Big),
\end{equation*}
and the logit between actions $u$ and $a$ is
\begin{align*}
    \theta^{(n+1)}_{x,u} - \theta^{(n+1)}_{x,a}
    &= \Big(\theta^{(n)}_{x,u} + \eta\, q_{\bar\rho}(x,u) - \eta\, \vpi^{(n)\top} q_{\bar\rho}(x,\cdot)\Big) \\
    &\qquad\ - \Big(\theta^{(n)}_{x,a} + \eta\, q_{\bar\rho}(x,a) - \eta\, \vpi^{(n)\top} q_{\bar\rho}(x,\cdot)\Big) \\
    &= \big(\theta^{(n)}_{x,u} - \theta^{(n)}_{x,a}\big) \;+\; \eta\big(q_{\bar\rho}(x,u) - q_{\bar\rho}(x,a)\big),
\end{align*}
write $d^{(n)} := \theta^{(n)}_{x,u} - \theta^{(n)}_{x,a}$ for the logit gap and $c \;:=\; \eta\big(q_{\bar\rho}(x,u) - q_{\bar\rho}(x,a)\big)$ for the update gap, the parameter update rule thus becomes a recursion of the form $d^{(n+1)} = d^{(n)} + c$.
Writing as a telescoping sum over $n = 0,\ldots,k-1$ gives
\begin{equation*}
    d^{(k)} - d^{(0)} = \sum_{n=0}^{k-1} \big(d^{(n+1)} - d^{(n)}\big) = \sum_{n=0}^{k-1} c \;=\; k\, c ,
\end{equation*}
thus we have
\begin{equation*}
    \theta^{(k)}_{x,u} - \theta^{(k)}_{x,a}
    = \theta^{(0)}_{x,u} - \theta^{(0)}_{x,a} + \eta\, k\big(q_{\bar\rho}(x,u) - q_{\bar\rho}(x,a)\big),
\end{equation*}
since our distribution is parameterized through softmax, we have
\begin{equation*}
    \frac{\pi_{\theta}(a \mid x)}{\pi_{\theta}(u \mid x)} = \frac{\exp(\theta_{x,a}) / \sum_{b} \exp(\theta_{x,b})}{\exp(\theta_{x,u}) / \sum_{b} \exp(\theta_{x,b})} = \exp(\theta_{x,a} - \theta_{x,u}),
\end{equation*}
fix $a \neq u$, and apply the two equations together, we have
\begin{equation*}
    \begin{aligned}
        \frac{\pi_{\theta^{(k)}}(a \mid x)}{\pi_{\theta^{(k)}}(u \mid x)}
        &= \exp\big(\theta^{(k)}_{x,a} - \theta^{(k)}_{x,u}\big) \\
        &= \exp\Big(\theta^{(0)}_{x,a} - \theta^{(0)}_{x,u} - \eta\, k\big(q_{\bar\rho}(x,u) - q_{\bar\rho}(x,a)\big)\Big) \\
        &= \frac{\pi_{\theta^{(0)}}(a \mid x)}{\pi_{\theta^{(0)}}(u \mid x)}\; \exp\Big({-\eta\, k \big(q_{\bar\rho}(x,u) - q_{\bar\rho}(x,a)\big)}\Big),
    \end{aligned}
\end{equation*}
observe that
\begin{equation*}
    -\eta\, k \big(q_{\bar\rho}(x,u) - q_{\bar\rho}(x,a)\big) \;\le\; -\eta\, k\, \Delta(\bar\rho) ,
\end{equation*}
thus
\begin{equation*}
    \frac{\pi_{\theta^{(k)}}(a \mid x)}{\pi_{\theta^{(k)}}(u \mid x)}
    \;\le\; \frac{\pi_{\theta^{(0)}}(a \mid x)}{\pi_{\theta^{(0)}}(u \mid x)}\; \exp({-\eta \Delta(\bar\rho) k}),
\end{equation*}
taking the sum over all $a \neq u$ gives the following for the left hand side
\begin{equation*}
    \sum_{a \ne u} \frac{\pi_{\theta^{(k)}}(a \mid x)}{\pi_{\theta^{(k)}}(u \mid x)}
    = \frac{\sum_{a \ne u} \pi_{\theta^{(k)}}(a \mid x)}{\pi_{\theta^{(k)}}(u \mid x)}
    = \frac{1 - \pi_{\theta^{(k)}}(u \mid x)}{\pi_{\theta^{(k)}}(u \mid x)} ,
\end{equation*}
and the following for the right hand side
\begin{equation*}
    \sum_{a \ne u} \frac{\pi_{\theta^{(0)}}(a \mid x)}{\pi_{\theta^{(0)}}(u \mid x)}\; \exp({-\eta \Delta(\bar\rho) k})
    = \frac{1 - \pi_0}{\pi_0}\; \exp({-\eta \Delta(\bar\rho) k}),
\end{equation*}
together we have
\begin{equation*}
    \frac{1 - \pi_{\theta^{(k)}}(u \mid x)}{\pi_{\theta^{(k)}}(u \mid x)}
    \;\le\; \frac{1-\pi_0}{\pi_0}\; \exp({-\eta \Delta(\bar\rho) k}),
\end{equation*}
multiplying both sides by $\pi_{\theta^{(k)}}(u \mid x)$ gives
\begin{align*}
    1 - \pi_{\theta^{(k)}}(u \mid x)
    &= \pi_{\theta^{(k)}}(u \mid x)\; \frac{1 - \pi_{\theta^{(k)}}(u \mid x)}{\pi_{\theta^{(k)}}(u \mid x)} \\
    &\le \pi_{\theta^{(k)}}(u \mid x)\; \frac{1-\pi_0}{\pi_0}\; \exp({-\eta \Delta(\bar\rho) k}) \\
    &\le \frac{1-\pi_0}{\pi_0}\; \exp({-\eta \Delta(\bar\rho) k}),
\end{align*}
and hence completes the proof.
\end{proof}
\begin{corollary}\label{supp:theory:cor:npg-inband}
Let $\bar\rho \in \R$ be any rate with smallness condition $|\bar\rho - \rho^*| \le \bar\varepsilon$, $\bar\varepsilon = \Delta^* / (2 \delta_{\mathrm{spr}})$.
Then $u_{\bar\rho} = a^*$, for every $x$, $a \ne a^*(x)$, $k \ge 0$
\begin{equation}
    1 - \pi_{\theta^{(k)}}(a^*(x) \mid x) \le \frac{1-\pi_0}{\pi_0}\, \exp\big({-\eta\, \Delta^* k/2}\big),
    \label{supp:theory:eq:npg-tail-inband}
\end{equation}
in particular $1 - \pi_{\theta^{(k)}}(a^*(x) \mid x) \le (|\gA|-1) \exp({-\eta \Delta^* k/2})$ at the zero initialization.
\end{corollary}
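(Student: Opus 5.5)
This corollary follows from two earlier results. First, \Cref{supp:theory:l4} forces the greedy action to coincide with $a^*$ and guarantees a margin of at least $\Delta^*/2$. Then \Cref{supp:theory:prop:npg} is applied at the fixed rate $\bar\rho$. Throughout, I read the corollary in the same setting as \Cref{supp:theory:prop:npg}: the NPG updates use the fixed rate $\bar\rho$, and $\pi_0 := \pi_{\theta^{(0)}}(a^*(x) \mid x)$.

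\textbf{Step 1: identify the greedy policy and its margin.} Since $|\bar\rho - \rho^*| \le \bar\varepsilon = \Delta^*/(2\delta_{\mathrm{spr}})$, we have
\[
|\bar\rho - \rho^*|\,\delta_{\mathrm{spr}} \le \Delta^*/2 < \Delta^*,
\]
where the strict inequality uses $\Delta^* > 0$ from \Cref{supp:theory:a2}. The last part of \Cref{supp:theory:l4} then gives two facts:
\begin{itemize}
\item $u_{\bar\rho}(x) = a^*(x)$ pointwise, under any tie-breaking;
\item $\Delta(\bar\rho) \ge \Delta^* - |\bar\rho-\rho^*|\,\delta_{\mathrm{spr}} \ge \Delta^*/2 > 0$.
\end{itemize}
So the hypothesis $\Delta(\bar\rho) > 0$ of \Cref{supp:theory:prop:npg} holds, and its greedy policy $u$ is exactly $a^*$.

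\textbf{Step 2: invoke the proposition and weaken the exponent.} \Cref{supp:theory:prop:npg} gives, for every $x$ and $k \ge 0$,
\[
1 - \pi_{\theta^{(k)}}(a^*(x)\mid x) \le \frac{1-\pi_0}{\pi_0}\exp\big(-\eta\,\Delta(\bar\rho)\,k\big).
\]
Because $\eta k \ge 0$ and $\Delta(\bar\rho) \ge \Delta^*/2$, we have $\exp(-\eta\,\Delta(\bar\rho)\,k) \le \exp(-\eta\,\Delta^* k/2)$. This yields \Cref{supp:theory:eq:npg-tail-inband}.

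\textbf{Step 3: zero initialization.} At zero logits the softmax is uniform, so $\pi_0 = 1/|\gA|$ and
\[
\frac{1-\pi_0}{\pi_0} = |\gA| - 1,
\]
which gives the stated special case.

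No step is a real obstacle, since the proof only chains existing results. The one point needing care is Step 1: the strict inequality $\Delta^*/2 < \Delta^*$ is what lets \Cref{supp:theory:l4} identify $u_{\bar\rho}$ with $a^*$ regardless of tie-breaking, and this relies on $\Delta^* > 0$.
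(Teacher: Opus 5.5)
Your proposal is correct and is essentially the paper's own proof: the paper's argument is the one line ``apply \Cref{supp:theory:l4} with \Cref{supp:theory:prop:npg}.'' Your three steps make explicit what that line leaves implicit: that $\Delta(\bar\rho)\ge\Delta^*/2>0$ meets the proposition's hypothesis and identifies its greedy policy with $a^*$, that the exponent weakens monotonically to $\Delta^*/2$, and that zero logits give $\pi_0=1/|\gA|$.
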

\begin{proof}
    Apply \Cref{supp:theory:l4} with \Cref{supp:theory:prop:npg}.
\end{proof}
\begin{theorem}\label{supp:theory:prop:why-work}
Let $\rho_1,\rho_2,\ldots$ be a sequence of reward rates.
Denote $\theta^{(k)}$ as the parameter for the softmax logit after $k$ steps from the zero initialization and $\bar\rho_k = \sum_{n \le k} \rho_n / k$, then for every $x$, $k \ge 1$, the policy $\pi^{(k)}$ is a Boltzmann policy of $q_{\bar\rho_{k}}(x,a)$ with inverse temperature $\eta k$, and if $\rho_{k+1} = \rho(u_{\bar\rho_{k}})$, $\rho_1 \in [0,\rho^*]$, then
\begin{equation}
    \rho^* - \bar\rho_k \;\le\; (\rho^* - \bar\rho_1)\, \exp(1-\chi)\,(k+1)^{-(1-\chi)} ,
    \label{supp:theory:prop:why-work:eq:km-rate}
\end{equation}
thus $\bar\rho_k$ converges to $\rho^*$ at $O(k^{-(1-\chi)})$.
Moreover, for every
\begin{equation}
    k \;\ge\; k_0 \;:=\; \Big\lceil \big(\exp(1-\chi)\,(\rho^* - \bar\rho_1) / \bar\varepsilon\big)^{1/(1-\chi)} \Big\rceil ,
    \label{supp:theory:prop:why-work:eq:entry}
\end{equation}
the greedy policy at the running average is optimal, $u_{\bar\rho_k} = a^*$ point-wise, and for every $x$
\begin{equation}
    1 - \pi^{(k)}(a^*(x) \mid x) \;\le\; (|\gA| - 1)\, \exp\big({-\eta\, \Delta^* k / 2}\big).
    \label{supp:theory:prop:why-work:eq:tail}
\end{equation}
\end{theorem}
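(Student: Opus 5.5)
The plan has three parts, matching the three claims of \Cref{supp:theory:prop:why-work}.

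\textbf{Boltzmann form.} I will assume step $n$ of the NPG update uses the rate $\rho_n$ for $n=1,\dots,k$:
\[
\theta^{(n)}_x=\theta^{(n-1)}_x+\eta\big(q_{\rho_n}(x,\cdot)-c_n(x)\vone\big),
\]
where $c_n(x)$ is the scalar baseline $\langle\pi,q_{\rho_n}\rangle$. Telescoping from $\theta^{(0)}=0$ gives $\theta^{(k)}_x=\eta\sum_{n\le k}q_{\rho_n}(x,\cdot)-C_k(x)\vone$. The map $\rho\mapsto q_\rho(x,a)=\bar r(x,a)-\rho\bar\delta(x,a)$ is affine, so $\sum_{n\le k}q_{\rho_n}=k\,q_{\bar\rho_k}$. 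Softmax is invariant to adding a constant multiple of $\vone$, hence $\pi^{(k)}(\cdot\mid x)\propto\exp(\eta k\,q_{\bar\rho_k}(x,\cdot))$. If the paper's indexing instead has step $n$ using $\rho_{n+1}$, the same computation goes through with the shifted average.

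\textbf{Rate of $\bar\rho_k$.} Let $e_k:=\rho^*-\bar\rho_k$, and note $\bar\rho_1=\rho_1$.

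First I show by induction that $0\le\bar\rho_k\le\rho^*$. The base case is the hypothesis $\rho_1\in[0,\rho^*]$. For the step, \Cref{supp:theory:l3} applied at $\rho=\bar\rho_k\le\rho^*$ gives $\bar\rho_k\le\rho_{k+1}=\rho(u_{\bar\rho_k})\le\rho^*$. The average therefore stays in $[0,\rho^*]$; in fact it is nondecreasing.

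\Cref{supp:theory:l3} also gives $\rho^*-\rho_{k+1}\le\chi e_k$. Using $\bar\rho_{k+1}=(k\bar\rho_k+\rho_{k+1})/(k+1)$,
\[
e_{k+1}\le\frac{k e_k+\chi e_k}{k+1}=\Big(1-\frac{1-\chi}{k+1}\Big)e_k .
\]
Unrolling and using $1-t\le e^{-t}$,
\[
e_k\le e_1\exp\Big(-(1-\chi)\sum_{j=2}^{k}\tfrac1j\Big).
\]
Finally $\sum_{j=1}^k 1/j\ge\ln(k+1)$, so $\sum_{j=2}^k 1/j\ge\ln(k+1)-1$, which yields exactly $e_k\le e_1\exp(1-\chi)(k+1)^{-(1-\chi)}$.

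\textbf{Tail bound.} For $k\ge k_0$, the definition of $k_0$ gives
\[
(k+1)^{-(1-\chi)}\le k_0^{-(1-\chi)}\le\bar\varepsilon/\big(\exp(1-\chi)e_1\big),
\]
so $0\le\rho^*-\bar\rho_k\le\bar\varepsilon$. The case $e_1=0$ is trivial, since then $\bar\rho_k=\rho^*$ for all $k$. \Cref{supp:theory:l4} then gives $u_{\bar\rho_k}=a^*$ and $q_{\bar\rho_k}(x,a^*)-q_{\bar\rho_k}(x,a)\ge\Delta(\bar\rho_k)\ge\Delta^*/2$ for every $a\ne a^*(x)$.

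I will not invoke \Cref{supp:theory:prop:npg} directly, because it assumes a fixed rate across all steps. Instead I repeat its final step using the Boltzmann form from the first part:
\[
\frac{\pi^{(k)}(a\mid x)}{\pi^{(k)}(a^*\mid x)}=\exp\big(-\eta k\,[q_{\bar\rho_k}(x,a^*)-q_{\bar\rho_k}(x,a)]\big)\le e^{-\eta k\Delta^*/2}.
\]
Summing over $a\ne a^*$ and multiplying by $\pi^{(k)}(a^*\mid x)\le1$ gives $1-\pi^{(k)}(a^*\mid x)\le(|\gA|-1)e^{-\eta\Delta^*k/2}$.

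\textbf{Main obstacles.} The step needing most care is the second part. It requires the invariant $\bar\rho_k\le\rho^*$, which is what lets \Cref{supp:theory:l3} apply to the running average at every step. It also requires the harmonic-sum estimate, chosen so that it reproduces the stated constant $\exp(1-\chi)$. A secondary point is fixing the indexing convention of which $\rho_n$ enters which update, so that the inverse temperature is exactly $\eta k$.
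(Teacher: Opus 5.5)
Your proposal is correct and follows the paper's proof essentially step for step. You telescope the NPG logits from zero, use the affine identity $\sum_{n\le k} q_{\rho_n} = k\,q_{\bar\rho_k}$ with shift-invariance of softmax, derive $e_{k+1}\le\big(1-(1-\chi)/(k+1)\big)e_k$ from \Cref{supp:theory:l3} applied to the running average and unroll it with $1-t\le e^{-t}$ and $\sum_{j\le k}1/j\ge\ln(k+1)$, and, as the paper also does without invoking \Cref{supp:theory:prop:npg}, finish by applying \Cref{supp:theory:l4} at $\bar\rho_k$ together with the direct probability-ratio bound from the Boltzmann form.
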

\begin{proof}
Observe that the relative value $q_{\rho_{n}}(x,a)$ for action $a$ and context $x$ at step $n$ can be expressed as
\begin{align*}
    \sum_{n \le k} q_{\rho_n}(x,a)
    &= \sum_{n \le k} \big(\bar r(x,a) - \rho_n\, \bar\delta(x,a)\big)
    \;=\; k\, \bar r(x,a) \;-\; \Big(\sum_{n \le k} \rho_n\Big) \bar\delta(x,a) \\
    &= k\, \big(\bar r(x,a) - \bar\rho_k\, \bar\delta(x,a)\big)
    = k\, q_{\bar\rho_k}(x,a) ,
\end{align*}
By the parameter update rule for \gls{npg} with a softmax policy $\theta'_x = \theta_x + \eta (q - \langle\vpi, q\rangle \vone)$, fix the context, and expand the terms through the recursion gives
\begin{align*}
    \theta^{(k)}_x
    &= \eta \sum_{n \le k} q_{\rho_n}(x,\cdot) \;-\; \eta\Bigg(\sum_{n \le k} s_n\Bigg)\vone \qquad s_n := \big\langle \vpi^{(n-1)}, q_{\rho_n}(x,\cdot)\big\rangle \\
    &= \eta\, k\, q_{\bar\rho_k}(x,\cdot) \;+\; c_k(x)\, \vone
    \qquad c_k(x) := -\eta \sum_{n \le k} s_n ,
\end{align*}
when the logit $\theta_x^{(k)}$ is feed into the softmax function, since softmax is invariant to addition of constants, the $c_k(x)$ term is ignored, thus the policy becomes the Boltzmann policy of $\eta\, k\, q_{\bar\rho_k}(x,a)$ and completes the first part of the proof.
For the second part, first define the estimated rate as  $k\bar\rho_k = \sum_{n \le k} \rho_n$, then
\begin{equation*}
    \bar\rho_{k+1}
    = \frac{1}{k+1}\sum_{n \le k+1}\rho_n
    = \frac{k\,\bar\rho_k + \rho_{k+1}}{k+1}
    = \frac{k}{k+1}\,\bar\rho_k \;+\; \frac{1}{k+1}\,\rho_{k+1} ,
\end{equation*}
which is 
\begin{equation}
    \bar\rho_{k+1} = (1-\alpha_k)\bar\rho_k + \alpha_k\,\rho_{k+1}, \qquad \alpha_k = 1/(k+1).
    \label{supp:theory:prop:why-work:eq:convex}
\end{equation}
Since the algorithm only accepts samples selected by $\argmax_{a} \theta_{x,a}$, we assume that the rate estimate equals to the greedy policy's rate under $\bar\rho_k$, i.e., $\rho_{k+1} = \rho(u_{\bar\rho_{k}})$.
Since $\bar\rho_k \le \rho^*$ at $k=1$ by assumption, \Cref{supp:theory:l3} gives that $\rho(u_{\bar\rho_{k}}) \ge \bar\rho_k$ and $\rho^* - \rho(u_{\bar\rho_{k}}) \le \chi (\rho^* - \bar\rho_k)$.
$\bar\rho_{k+1}$ is a convex combination of $\bar\rho_k$ and $\rho(u_{\bar\rho_{k}})$ by \Cref{supp:theory:prop:why-work:eq:convex} and we have $\bar\rho_k \le \bar\rho_{k+1} \le \rho^*$.
Subtracting \Cref{supp:theory:prop:why-work:eq:convex} from $\rho^*$ gives
\begin{align*}
    \rho^* - \bar\rho_{k+1} &= \rho^* + \alpha_k \rho^* - \alpha_k \rho^* - (1-\alpha_k)\bar\rho_k - \alpha_k\rho_{k+1} \\
    &= (1-\alpha_k)(\rho^* - \bar\rho_k) + \alpha_k(\rho^* - \rho_{k+1})\\
    &\le (1-\alpha_k)(\rho^* - \bar\rho_k) + \alpha_k \chi (\rho^* - \bar\rho_k) = \big(1 - \alpha_k(1-\chi)\big)(\rho^* - \bar\rho_k).
\end{align*}
Denote $e_k = \rho^* - \bar\rho_k$, $c := 1-\chi = \delta_{\min}/\delta_{\max} \in (0,1]$, and expand the recursion gives
\begin{align*}
    e_k &\le e_1 \prod_{m=2}^{k}\big(1 - c/m\big) \le e_1 \exp\left({-c\sum_{m=2}^{k} 1/m}\right) \\
    &\le e_1 \exp\big(c - c\log(k+1)\big) = e_1 \exp(c)\,(k+1)^{-c},
\end{align*}
where we used the fact $1-z\le \exp(-z)$ and $\sum_{m\le k} 1/m \ge \log(k+1)$.

Since $\rho_{k+1} = \rho(u_{\bar\rho_{k}}) \le \rho^*$, $\bar\rho_{k+1} \le (1-\alpha_k)\bar\rho_k + \alpha_k\rho^* < \rho^*$ whenever $\bar\rho_k < \rho^*$, so by induction $\rho_1 < \rho^*$ keeps the strict inequality $\bar\rho_k < \rho^*$ to every $k > 1$.

For the last part, the second part gives $\bar\rho_k \le \bar\rho_{k+1} \le \rho^*$, so the error $\rho^* - \bar\rho_k$ is non-increasing in $k$ and it suffices to find the first index at which it falls below $\bar\varepsilon$.
By \Cref{supp:theory:prop:why-work:eq:km-rate}, $\rho^* - \bar\rho_k \le \bar\varepsilon$ holds when 
\begin{equation*}
    (k+1)^{1-\chi} \ge \exp(1-\chi)\,(\rho^* - \bar\rho_1) / \bar\varepsilon,
\end{equation*}
thus we need
\begin{equation*}
    k \ge \big(\exp(1-\chi)\,(\rho^* - \bar\rho_1) / \bar\varepsilon\big)^{1 /(1 - \chi)} - 1,
\end{equation*}
thus we can set $k_0$ to be the constant stated in \Cref{supp:theory:prop:why-work:eq:entry}, and it then holds at every subsequent step $k \geq k_0$.
When $k \geq k_0$, since $|\bar\rho_k - \rho^*| \leq \bar\varepsilon$, we can apply \Cref{supp:theory:l4}, which gives $u_{\bar\rho_k} = u_{\rho^*} = a^*$ and $ \Delta(\bar\rho_k) \ge \Delta^*/2$.
Similar to the steps in the proofs for \Cref{supp:theory:prop:npg,supp:theory:cor:npg-inband}, the probability ratio is
\begin{align*}
    \frac{\pi^{(k)}(a \mid x)}{\pi^{(k)}(a^*(x) \mid x)}
    &= \exp\Big({-\eta\, k \big(q_{\bar\rho_k}(x,a^*(x)) - q_{\bar\rho_k}(x,a)\big)}\Big) \\
    &\le \exp\big({-\eta\, k\, \Delta(\bar\rho_k)}\big) \\
    &\le \exp\big({-\eta\, \Delta^*\, k/2}\big),
\end{align*}
sum over all actions $a \ne a^*(x)$ gives
\begin{equation*}
    \frac{1 - \pi^{(k)}(a^*(x) \mid x)}{\pi^{(k)}(a^*(x) \mid x)} \le  (|\gA| - 1) \exp\big({-\eta\, \Delta^*\, k/2}\big) ,
\end{equation*}
which gives \Cref{supp:theory:prop:why-work:eq:tail} after multiplying both sides by $\pi^{(k)}(a^*(x) \mid x) \le 1$ and completes the proof.
\end{proof}

%% file: appendix/implementation.tex
\subsection{Bandit Experiments\label{appendix:exp:bandit}}
This section gives the full specification of the bandit experiments in \Cref{method:bandit}.

\subsubsection{Bandit Baselines\label{appendix:exp:baselines}}
\paragraph{\gls{c-ucb} with theoretical constants.}
The algorithm of \citet{gyorgy_continuous_2007}, implemented faithfully for all the constants.

\paragraph{\gls{c-ucb} with tuned constants.}
Identical in every respect, except that the two scale constants of \Cref{eq:b-width,eq:c-width} are replaced by free parameters $c_1$ and $c_2$ selected via grid search.

\paragraph{\gls{c-ucb} with approximation.}
\gls{c-ucb} involves enumeration over all policies for reward-rate estimation.
This is infeasible when we have many arms and contexts.
As an approximation, we draw 2048 policies uniformly random from policies constructed from visited arms and contexts, and enumerate on these 2048 policies.

\paragraph{\Gls{npg}.}
The reward-rate learner keeps one independent softmax head per context: a tabular parameter matrix $\theta \in \R^{|\gX| \times (K+1)}$ with one free logit per $(x,a)$ pair, initialized to zero so that $\pi_{\theta}(a \mid x) = 1/(K+1)$ is uniform, and
\begin{equation}
    \pi_{\theta}(a \mid x) \;=\; \frac{\exp (\theta_{x,a})}{\sum_{b} \exp (\theta_{x,b})}.
    \label{bandit:softmax}
\end{equation}
At each round the learner observes $x$, samples $a \sim \pi_{\theta}(\cdot \mid x)$, and receives $(r, \delta)$, and its regret is incurred by its own action choices.
The update rule is the natural gradient of the relative reward
\begin{equation}
    \theta_x \gets \theta_x + \eta\, \vz, \quad \nabla_{\theta_x}\, J(\theta) = \mF(\theta_x) \vz,
\end{equation}
where the Fisher information matrix $\mF$ is
\begin{equation}
    \mF(\theta_x) = \diag(\vpi) - \vpi\vpi^\top,
\end{equation}
which simplifies to adding the advantage to the logits directly.
Writing $\vpi = \pi_\theta(\cdot \mid x)$, $\vs$ for the vector with entries $s_a = \E[\, r - \rho^*\delta \mid x, a \,]$, $J = \vpi^{\!\top}\vs$ and $\va = \vs - \big(\vpi^{\!\top}\vs\big)\vone$ be the advantages
\begin{align}
    \frac{\partial \pi_a}{\partial \theta_c}
      &= \pi_a\big(\indic\{a = c\} - \pi_c\big)
      \quad\Longrightarrow\quad
      \nabla_{\theta_x} J = \vpi \odot \va,
    \label{bandit:npg-vanilla}
\end{align}
solving for $\mF \vz = \vpi \odot \va$ gives $\vz = \va$ as one of the solutions, thus the update rule is
\[\theta_x \leftarrow \theta_x + \eta \, A_{\theta}(x,\cdot).\]
The advantage is needed for every arm while a bandit learner observes only the arm it played, so we keep a per-cell buffer of the raw sums $S^r_{x,a}$ and $S^\delta_{x,a}$ with counts $n_{x,a}$, and use them to estimate the expected relative reward under current estimated rate $\hat{\rho}_t$,
\begin{equation}
    \widehat{Q}_{x,a}
    \;=\;
    \begin{cases}
        \big(S^r_{x,a} - \hat{\rho}_t\, S^\delta_{x,a}\big) / n_{x,a}, & n_{x,a} > 0,\\[2pt]
        0, & n_{x,a} = 0,
    \end{cases}
    \qquad
    \theta_x \;\gets\; \theta_x
      + \eta \big( \widehat{Q}_x - (\vpi^{\!\top}\widehat{Q}_x) \vone \big).
    \label{bandit:update}
\end{equation}
The rate $\hat{\rho}_t$ is supplied by the online \gls{niw} estimator.

\paragraph{Vanilla policy gradient.}
Everything is kept same as \gls{npg}, except the gradient update is
\[\theta_x \leftarrow \theta_x + \eta \nabla_{\theta_x} J(\theta),\]
which gives the update rule
\[\theta_x \leftarrow \theta_x + \eta \;\vpi \odot \big(\hat{Q}_x - (\vpi^\top \hat{Q}_x)\vone\big).\]

\paragraph{The \gls{niw} rate estimator.}
The estimate $\hat{\rho}_t$ is fitted online on pairs of accumulated reward and accumulated time, mirroring the way the rate is formed from per-slot accumulators in MLE-Bench and NanoGPT (\Cref{method:mle}).
It is refitted every $8$ steps, and $\hat{\rho}_t$ is the 95th percentile among posterior-predictive samples (see \Cref{appendix:exp:rr-pred}).
The accumulators are \emph{gated}, in that a sample enters an accumulator only when the arm played was the current $\argmax_{a} \, \theta_{a,x}$, so that the estimate describes the greedy policy.
They are also \emph{age-bounded}, in that a slot restarts once it is older than $A$ steps, with $A$ tuned with hyperparameter search.
We note that this estimator does not guarantee exact estimation of the greedy policy's reward rate.
Rather, it serves as a test case to see how our algorithm behaves when the greedy reward rate has approximation error, which is the case in empirical settings.

\subsubsection{Experiment Setup}
\paragraph{Setup.}
At each decision epoch the learner selects an action $a$ and observes a reward $r \ge 0$ together with a time $\delta > 0$. The pair is drawn jointly, so the reward and the time it costs may be dependent. Performance is measured against the optimal reward rate $\rho^*$, and we report
the realized regret of \Cref{bandit:regret} at step $t$.
Each context and action pair consists of a mean reward and time parameter, forming a table of size $(K, |\gX|)$.
The setup for these mean parameters are described below.
To ensure we do not introduce bias in terms of ordering of the arms and contexts, we perform a permutation on $K$ arm indices and $|\gX|$ context indices for every trial.

\paragraph{Arm design.}
The $K+1$ actions available in a context are the \textsc{skip} action and two structurally different kinds of arm, which are specified in opposite directions.
The four designed arms fix the pair $(\E[r(a)], \E[\delta(a)])$ and let the rate follow: 
\begin{itemize}
    \item \textsc{best} arm $(0.50, 1.0)$ at rate $0.500$.
    \item \textsc{decoy} $(0.57, 1.2)$ at rate $0.475$, carrying higher raw reward at a $5\%$ relative rate deficit.
    \item \textsc{trap} $(0.90, 3.0)$ at rate $0.300$, the highest raw reward in the context at the worst rate.
    \item \textsc{fast} arm $(0.21, 0.6)$ at rate $0.350$ that is cheap but weak.
\end{itemize}
The \textsc{decoy} and the \textsc{trap} both exist to punish an agent that maximizes reward rather than reward per unit of time.
The remaining $K-4$ filler arms fix the rate and derive the reward and time from it
\begin{equation}
\begin{gathered}
  \zeta_j = \beta_x \Big(0.42 - \tfrac{j}{K-5}\big(0.42 - 0.18\big)\Big),
    \quad
    \E[\delta(4+j)] = d_j, \\
    d_j \sim \mathrm{Unif}(0.70,\, 2.00),
    \quad
    \E[r(4+j)] = \zeta_j\, d_j,
    \label{bandit:filler}
\end{gathered}
\end{equation}
for $j = 0, \dots, K-5$, where $\beta_x$ is the reward scale of the context (see below) and each $d_j$ is drawn independently.
This means the reward rate is monotone but reward and time are unordered, so neither quantity on its own identifies a good arm.
No filler arm is ever optimal, because the sequence's maximum is at $0.42\,\beta_x$, strictly below the \textsc{best} arm at $0.50\,\beta_x$.
Filler arms therefore widen the action set at controlled rate spacing rather than competing for the optimum, which makes $K$ an exploration-difficulty axis.

\paragraph{Context Design.}
A context $x \in \gX$ is drawn uniformly at each epoch and adds a \textsc{skip} action with $(0, 0.35)$.
A context is summarized by the reward scale $\beta_x$, which multiplies every reward and leaves time untouched; since the rate is reward divided by time, $\beta_x$ scales the rate as well.
Four contexts are designed: 
\begin{itemize}
    \item \textsc{marginal} ($\beta_x = 1.00$) reproduces the arm design above and serves as the baseline.
    \item \textsc{rich} ($\beta_x = 1.50$) holds the highest multiplier.
    \item \textsc{poor} ($\beta_x = 0.32$) so that \textsc{skip} can be considered.
    \item \textsc{coupling} is not a rescaling at all. Instead, it replaces the four designed arms with $(1.80, 2.00)$, $(1.00, 1.00)$, $(1.60, 3.0)$ and $(0.20, 0.6)$, a slow-but-rich arm at rate $0.90$ against a fast-but-lean arm at rate $1.00$.
\end{itemize}
Maximizing the rate within \textsc{coupling} takes the fast arm while the optimal policy takes the slow one, which is why the problem does not decompose into $|\gX|$ independent bandits.
This property holds as long as $\rho^* < 0.8$, which is the case for all of our simulations.
Any context beyond these four is filler, where $\beta_x$'s are uniformly binned midpoints between $[0.42, 1.35]$, the range between \textsc{poor} and \textsc{rich}.

\paragraph{Environment families.}
Four families vary the dependence between reward and time while holding the marginal means fixed.
Let $\mu^r_a = \E[r_a]$ and $\mu^\delta_a = \E[\delta_a]$ be the designed means of arm $a$.
We fix $\sigma_r = 0.20$, a time noise fraction $\varsigma = 0.15$, a log-scale spread $\sigma_{\log} = 0.5$, and a time floor $\delta_{\min} = 0.5$.
Every family draws a pair of standard normals $z_1, \xi \sim \gN(0,1)$ and couples them through a Gaussian copula
\begin{equation}
    z_2 \;=\; \varrho\, z_1 \;+\; \sqrt{1-\varrho^{2}}\;\xi,
    \qquad \mathrm{corr}(z_1, z_2) = \varrho,
    \label{bandit:copula}
\end{equation}
so that $\varrho$ alone controls the reward--time dependence while both marginals are unchanged.

\textbf{E1 (independent), $\varrho = 0$.} Reward and time are rectified normals drawn independently,
\begin{equation}
    r_a = \max\big(\mu^r_a + \sigma_r z_1, 0\big), \qquad
    \delta_a = \max\big(\mu^\delta_a + \varsigma\,\mu^\delta_a z_2,\;
                        \delta_{\min}\big).
    \label{bandit:e1}
\end{equation}

\textbf{E2 (positively coupled), $\varrho = +0.8$} and \textbf{E3 (negatively coupled), $\varrho = -0.8$.} Identical marginals to \Cref{bandit:e1}, with $z_2$ generated by \Cref{bandit:copula}.
Under E2 a high-reward draw tends to also be slow, so the empirical rate $r/\delta$ is compressed and a learner that ranks arms by raw reward is punished less; under E3 high reward tends to arrive fast, which inflates the apparent spread in rates.

\textbf{E4 (lognormal), $\varrho = +0.5$ on the log scale.} Both coordinates are lognormal with heavy right tails. Note that we take the log on the mean parameter so that the expectation $\E[r_a], \E[\delta_a]$ still equals to $\mu^r_a, \mu^\delta_a$:
\begin{equation}
    r_a = \exp\!\Big(\log \mu^r_a - \tfrac{\sigma_{\log}^{2}}{2}
          + \sigma_{\log} z_1\Big), \qquad
    \delta_a = \max\!\Big(
          \exp\!\big(\log \mu^\delta_a - \tfrac{\sigma_{\log}^{2}}{2}
          + \sigma_{\log} z_2\big),\; \delta_{\min}\Big).
    \label{bandit:e4}
\end{equation}
The rectification and the floor $\delta_{\min}$ make the realized marginals differ slightly from $(\mu^r_a, \mu^\delta_a)$, so we do not rely on the design values when computing $\rho^*$ (See \Cref{bandit:fixedpoint}).
Every reported mean is instead computed in closed form, using
\begin{equation}
    \E[\max(X, c)] = c\,\Phi(a) + m\big(1 - \Phi(a)\big) + s\,\phi(a),
    \quad a = \tfrac{c - m}{s},
\end{equation}
\begin{equation}
    \E[\max(Y, c)] = c\,\Phi(b) + m\big(1 - \Phi(b - \sigma_{\log})\big),
    \quad b = \tfrac{\log c - \mu}{\sigma_{\log}},
    \label{bandit:closedform}
\end{equation}
for a rectified normal $X \sim \gN(m, s^2)$ and a floored lognormal $Y$ with log-scale $\sigma_{\log}$.
$\phi, \Phi$ are PDF and CDF of Gaussian, respectively.
The means are therefore deterministic and do not require Monte Carlo approximation, and $\rho^*$ follows from \Cref{bandit:fixedpoint}.

\paragraph{Ground truth.}
The general average-reward \gls{smdp} optimality equation reads
\begin{equation}
  V^*(x)=\max_{a\in\mathcal{A}}\Big\{\E[r(x,a)]-\rho^{*}\,\E[\delta(x,a)]
        +\sum_{x'\in\mathcal{X}}p(x'\mid x,a)\,V^*(x')\Big\}.
  \label{eq:smdp}
\end{equation}
In the bandit case the next context is drawn independently of the current context and of the
action taken, so $p(x'\mid x,a)=p(x')$ and the continuation term collapses to a constant
$c:=\sum_{x'}p(x')V^*(x')$ that depends on neither $x$ nor $a$. Writing
\begin{equation}
  q^{*}(x,a):=\E[r(x,a)]-\rho^{*}\,\E[\delta(x,a)],
  \label{eq:relval}
\end{equation}
\Cref{eq:smdp} becomes
\begin{equation}
  V^*(x)=\max_{a\in\mathcal{A}}q^{*}(x,a)+c .
  \label{eq:separated}
\end{equation}
Averaging against $p(x)$ on both sides and cancelling $c$ gives the condition that
$\rho^{*}$ is the unique root of
\begin{equation}
    F(\rho) \;=\; \sum_{x} p(x)\, \max_{a\in\mathcal{A}}\,
    \big( \E[r(x,a)] - \rho\, \E[\delta(x,a)] \big) \;=\; 0.
    \label{bandit:fixedpoint}
\end{equation}
$F$ is a maximum of affine functions of $\rho$, hence convex, piecewise linear and strictly decreasing, so the root is unique and we find $\rho^{*}$ by bisection. Empirically, the residual is small $|F(\rho^{*})| \sim 10^{-17}$.

\paragraph{Evaluation.}
Every instance is run to horizon $T = 30{,}000$ over all four families, in two studies whose arm and context counts differ: 
\begin{itemize}
    \item The small-scale study at $|\gX| = 4$ with $K \in \{5, 10, 20, 30\}$.
    \item The larger-scale study at $|\gX| \in \{8, 16, 32, 64\}$ with $K \in \{70, 100, 130, 160\}$. 
\end{itemize}
Hyperparameters are selected by grid search on tuning seeds disjoint from the evaluation seeds, taking the \emph{median} final regret over the tuning seeds, and the selected configuration is then run on $100$ held-out evaluation seeds. 
Every step that took place counts as part of the regret calculation, including the burn-in phase (for instance, to have a nonzero value of counts for \gls{c-ucb}).
Confidence bands are pointwise percentile bootstraps of the mean over trials.

\subsection{Reward-rate Predictor\label{appendix:exp:rr-pred}}
We introduce the reward-rate estimator we use for \gls{mle}.
For $f_{\rho}$, we use a \gls{niw} prior with joint Gaussian likelihood \citep{murphy_machine_2012} fitted from historical samples observed in buffer $\gB$.
We model the pair as $(r_{\mathrm{acc}}, \log \delta_{\mathrm{acc}})$, where $r_{\mathrm{acc}}$ and $\delta_{\mathrm{acc}}$ are the accumulated reward and time, and we take the logarithm because we need time to be positive. 
Writing $z_i = (r_{\mathrm{acc},i},\, \log \delta_{\mathrm{acc},i})$ for the $i^\text{th}$ of the $n$ samples available at step $t$, where $n$ is the batch size.
The likelihood and its conjugate prior are
\begin{equation}
    z_i \mid \mu, \Sigma \sim \gN(\mu, \Sigma),
    \qquad
    (\mu, \Sigma) \sim \mathrm{NIW}\big(\mu_{t-1},\, \lambda\kappa_{t-1},\, \lambda\nu_{t-1},\, \lambda\Psi_{t-1}\big),
    \label{method:niw}
\end{equation}
where $\mu_{t-1}$ is the prior mean of $\mu$, $\Psi_{t-1}$ is the prior scatter matrix of $\Sigma$, and $\kappa_{t-1}$ and $\nu_{t-1}$ are pseudo-counts.
The prior at step $t$ is the previous step's posterior discounted by a forgetting factor $\lambda \in (0,1]$, which shrinks those pseudo-counts so that old evidence decays geometrically, and the posterior predictive is a Student-$t$ with $\nu_t - 1$ degrees of freedom, where $\nu_t = \lambda\nu_{t-1} + n$ is the posterior count after this step's $n$ samples are absorbed.
Reward rate is estimated by sampling $M$ batches of $n$ draws each from this predictive. Writing the $m^\text{th}$ batch as $\{(x_{m,j},\, y_{m,j})\}_{j=1}^{n}$, each batch yields one rate, and $\hat\rho_{t+1}$ is the 95th percentile of them,
\begin{equation}
    \rho_m = \frac{\sum_{j=1}^{n} x_{m,j}}{\sum_{j=1}^{n} \exp(y_{m,j})},
    \qquad
    \hat\rho_{t+1} = \mathrm{Quantile}_{0.95}\big(\{\rho_m\}_{m=1}^{M}\big) .
    \label{method:rate}
\end{equation}

\subsection{MLE-Bench \label{appendix:exp:mle}}
\subsubsection{Environment, Reward and Self-improvement\label{appendix:exp:mle:env}}
\paragraph{Execution.}
Every action $a$ is a code block generated by the policy (Qwen3.5-4B).
The code block is executed in its own sandbox subprocess environment with its own working directory, under a memory budget of 16GB and a compute budget of 2 CPUs.
The subprocess is killed when it reaches timeout $\delta_{\max}$.
We set $\delta_{\max} = 600$ seconds, except for nomad2018, pizza, tabular-2021 and tabular-2022, where the dataset is large enough that $600$ seconds admits almost no honest attempt, so we use $\delta_{\max} = 1200$ seconds for these tasks.
Time $\delta$ is floored at \texttt{clip\_time} $= 0.1$ seconds before it is used anywhere, so that no sample can claim an unboundedly large rate.
We train with a batch size of 128.
Thus, assuming full parallelization at each step, we require 16$\times$128 = 2048GB of memory with 256 CPUs.
This amount of resource is large in academic setting, so we use a first-in-first-out queue that processes the 128 samples by 16 samples running parallel at a time through Ray.

\paragraph{Reward.}
A run is a \emph{valid submission} when the script terminates before $\delta_{\max}$, generates a submission file, and that file is accepted by the MLE-Bench grader.
The grader returns the competition's own metric.
For the $15$ tasks whose metric is higher-is-better and bounded, we use performance metric (such as AUC) directly as the reward.
For the remaining $7$ tasks, where the metric is lower-is-better and has no bounded value (such as log loss), we use the position of the submission on the MLE-Bench leaderboard normalized as percentiles as reward.
When there is a tie on the leaderboard, we use the mid-point position.
\begin{equation}
    r_{t+1} \;=\;
    \frac{1}{N}\sum_{i=1}^{N}
    \Big( \indic\{\ell_i \prec g\} \;+\; \tfrac12\, \indic\{\ell_i = g\} \Big)
    \;\in\; [0, 1],
    \label{mle:reward}
\end{equation}
where $\ell_1, \ldots, \ell_N$ are the leaderboard entries and $\prec$ orders them in the competition's own direction.
This restores the higher-is-better direction and bounds the reward in $[0,1]$.
The 7 tasks using leaderboard score are exactly the seven marked with $(\downarrow)$ in \Cref{tab:mlebench_meanatT}.

\paragraph{Failures.}
If a script fails to obtain a valid submission, we define it as a failure.
We use a penalty of $-10$ for invalid submissions.
Writing $s$ for a valid submission's score of \Cref{mle:reward}, the reward functions for vanilla and \gls{ours} are
\begin{equation}
    \tilde r^{\,\mathrm{RPG}}_{t+1}
    =
    \begin{cases}
        s - \hat\rho_{t+1}\,\delta_{t+1}, & \text{valid},\\[2pt]
        -10 - \hat\rho_{t+1}\,\delta_{\max}, & \text{invalid},
    \end{cases}
    \qquad
    \tilde r^{\,\mathrm{vanilla}}_{t+1}
    =
    \begin{cases}
        s, & \text{valid},\\[2pt]
        -10, & \text{invalid}.
    \end{cases}
    \label{mle:arms}
\end{equation}
For \gls{ours}, we charge $\hat\rho_{t+1}\,\delta_{\max}$ for invalid submissions so that every invalid sample's reward is strict below the valid one.

\paragraph{Self-improvement buffer.}
The transition stored in $\gB$ is the extracted code, the grader's message, the score $s$ of the submission, and the measured time $\delta_{t+1}$.
The buffer holds a capacity of 10 times the batch size, i.e.\ $1{,}280$ solutions.
At every step, it is ranked by the relative reward
\begin{equation}
    \mathrm{key}(a, r, \delta) \;=\; r \;-\; \hat\rho_{t+1}\,\delta,
    \label{mle:key}
\end{equation}
with $\hat\rho_{t+1}$ obtained from the current step's reward rate estimate.
The vanilla arm uses the identical buffer with $\hat\rho = 0$.
The top $128$ solutions ranked by the key fill the $128$ self-improvement prompts of the next step, cycling from the top when the buffer holds $<128$ samples.
A valid sample whose score strictly exceeds every retained solution's receives an additional self-improvement bonus of $0.5$, which shapes the advantage of both arms but enters neither \Cref{mle:key} nor the rate estimate.

\subsubsection{Prompts\label{appendix:exp:mle:prompts}}
\paragraph{Base prompt.}
The initial state $s_0$ is a chat-formatted prompt carrying the data path, the task description, the metric, the available packages, the required submission format, and a schema snippet of the training file.
It also carries the single efficiency instruction referred to in \Cref{method:mle}.
See \Cref{mlebench:prompt} for an example.
\begin{figure}[htbp]
\begin{promptbox}{Example Prompt for spooky-author-identification}
\begin{promptlisting}
You are given a machine learning task. You must solve the task by training a
model and running the model on the test set to produce a submission file.

First outline your proposed solution in natural language (3-5 sentences), followed
by a single markdown code block (wrapped in ```) which implements this solution.
Note:

- Datasets train.csv and test.csv are available in
`[...]/spooky-author-identification/prepared/public/`.
- You MUST produce a submission file at `./submission.csv` by running your model
on the test split.
- To keep runtime and memory low, make your training efficient: train on a
subsample of the training rows and/or a reduced set of features (via feature
selection or sparse methods) rather than the full data.
- Your task is to predict the author (EAP, HPL, MWS) of horror story excerpts.
- Submissions are evaluated using multi-class logarithmic loss.
- You can use pre-installed packages such as: `statsmodels`, `pandas`, `timm`,
`bayesian-optimization`, `scikit-learn`, `xgboost`, `numpy`, `torch`,
`torchvision`, `lightGBM`, `torch-geometric`.
- submission.csv should have the following format:
```
id,EAP,HPL,MWS
id07943,0.33,0.33,0.33
etc.
```
- Data snippet:
-> [...]/train.csv has 17621 rows and 3 columns.
author (object) has 3 unique values: ['EAP', 'MWS', 'HPL']
[...]
\end{promptlisting}
\end{promptbox}
\caption{\textbf{Example prompt on MLE-Bench}.}
\label{mlebench:prompt}
\end{figure}

\paragraph{Self-improvement prompt.}
The next state $s_{t+1}$ replaces the opening instruction with the self-improvement phrase $i_0$ and appends the selected solution and its grader message.
Every other line is byte-identical to the base prompt, so the two states differ only in what the self-improvement loop contributes.
The differing lines are shown in \Cref{mlebench:self-improve-prompt}.
\texttt{previous\_plan\_code} is filled with the executed code of the selected transition, which is the action $a$.
\texttt{previous\_plan\_error} contains the grader's own message.
For a valid submission, \texttt{previous\_plan\_error} reports the achieved time $\delta$ and performance score $s$, which is how the agent is told the number it is being asked to beat.
For an invalid one, it reports the failure and the traceback.
We set the prompt length to $2560$ tokens and response length to $2500$ tokens.

\begin{figure}[htbp]
\begin{promptbox}{Example Prompt for Self-Improvement}
\begin{promptlisting}
You have implemented a previous solution. Revise the solution to improve the
performance on the test set. First outline your proposed solution in natural
language (3-5 sentences), followed by a single markdown code block (wrapped in
```) which implements this solution. If you reuse parts of the example code,
include those sections again in your final solution. Note:

[... identical to the base prompt ...]

Previous solution is
```{previous_plan_code}```

The output or error message for this solution is
```{previous_plan_error}```
\end{promptlisting}
\end{promptbox}
\caption{\textbf{Example prompt for self-improvement}. \texttt{previous\_plan\_code} is filled with the executed code of the selected transition, which is the action $a$. \texttt{previous\_plan\_error} contains the grader's own message. }
\label{mlebench:self-improve-prompt}
\end{figure}

\subsubsection{Hyperparameters\label{appendix:exp:mle:hparams}}
See \Cref{tab:mle:hparams}.

\begin{table}[htbp]
\centering
\caption{\textbf{MLE-Bench configuration}. The upper block is shared by \gls{ours} and vanilla. The lower block is where they differ.}
\label{tab:mle:hparams}
{\small
\begin{tabular}{llr}
\toprule
Group & Setting & Value \\
\midrule
Model   & actor and reference & Qwen3.5-4B \\
        & critic & Qwen3.5-4B, token-classification head \\
        & rollout engine & vLLM, \texttt{bfloat16} \\
        & attention, checkpointing & \texttt{sdpa}, on \\
\addlinespace
\gls{ppo} & advantage estimator & GAE, $\gamma = 1.0$, $\lambda = 1.0$ \\
        & actor / critic learning rate & $1\times10^{-6}$ / $1\times10^{-5}$ \\
        & train batch $=$ mini-batch & $128$ \\
        & micro-batch (actor, critic, reference) & $4$ \\
        & clip ratio, value clip & $0.2$, $0.5$ \\
        & gradient clip & $1.0$ \\
        & entropy coefficient & $0.001$ \\
        & KL coefficient to the reference & $0.001$, fixed \\
\addlinespace
Rollout & temperature, top-$p$, top-$k$ & $0.7$, $0.8$, $20$ \\
        & presence penalty & $1.5$ \\
        & samples per prompt & $1$ \\
        & prompt / response length & $2560$ / $2500$ tokens \\
\addlinespace
Environment & execution timeout $\delta_{\max}$ & $600$\,s ($1200$\,s on four tasks) \\
        & time floor & $0.1$\,s \\
        & memory / CPU per sample & $16$\,GB / $2$ \\
        & invalid-submission penalty & $-10$ \\
\addlinespace
Buffer  & capacity & $10 \times 128 = 1{,}280$ solutions \\
        & selection width & top $128$ \\
        & self-improvement bonus & $0.5$ \\
\addlinespace
Compute & per run & $2\times$ A100 80GB, 32 CPU, 450\,GB \\
        & repetitions & $3$ per task per arm \\
        & tasks & $22$ MLE-Bench Lite competitions \\
\midrule
\gls{ours} only & rate estimator & online \gls{niw}, \Cref{method:niw} \\
        & forgetting factor $\lambda$ & $0.3$ \\
        & predictive batches $M$ & $1{,}000$ \\
        & percentile to be selected for $M$ batches & 95 \\
        & time charged to an invalid sample & $\delta_{\max}$ \\
\bottomrule
\end{tabular}}
\end{table}

\FloatBarrier
\subsection{NanoGPT\label{appendix:exp:nanogpt}}
\subsubsection{Environment and Grader\label{appendix:exp:nanogpt:env}}
\paragraph{The task.}
An action is a modification to a single self-contained \texttt{train\_gpt.py}, launched as \texttt{torchrun -{}-nproc\_per\_node=8 train\_gpt.py} on eight A100-SXM4-80GB GPUs and graded against the rules of the modded-NanoGPT speedrun \citep{jordan_modded_2024}.
The FineWeb-10B dataset is fixed, and the validation loss must be the mean cross-entropy over exactly $10{,}485{,}760$ validation tokens.
A validation pass must be printed every $125$ training steps and again at the last step.
A script that alters the validation tokens or the logging interval is scored as a failure, so the agent cannot design hacks.
The elapsed time $\delta$ is the whole process's wall clock, measured from launch to exit.
The sandbox process is killed at timeout of $\delta_{\max} = 1500$ seconds.

\paragraph{Outcomes.}
A crash, a parse failure, or a timeout with no usable validation line is an invalid attempt.
A run that is killed at timeout but has printed real validation lines is kept as \emph{valid} and scored on its reached validation loss and charged its measured elapsed time.
This is the only case in which the timeout and a valid outcome coexist.
Times are floored at $0.1$ seconds.

\paragraph{Seeds.}
The buffer is initialized with past leaderboard submissions to modded NanoGPT, re-executed on our own eight-A100 grader so that their recorded time is measured on the same clock the agent is scored on.
These past leaderboard submissions serve as the roots where every self-improvement chain descends from.
Unlike in MLE-Bench, we do not have an initial state $s_0$ since the buffer is non-empty from the first step.

\subsubsection{Delta Proposals and the Implementer\label{appendix:exp:nanogpt:impl}}
\paragraph{Implementer.}
A \texttt{train\_gpt.py} used for self-improvement prompts is around $560$ lines.
Asking the policy (Qwen3.5-4B) to generate such scripts would use a lot of response tokens, slow down training, and the majority of the focus would be on correct code transcription.
To alleviate this issue, we ask the policy to propose a \textit{change specification} in three fixed sections: \textsc{diagnosis}, \textsc{change}, and \textsc{risk}.
For each section, it needs to point out the corresponding snippet in the parent code block to be improved upon.
The implementer is \texttt{GPT 5.6 Terra} at temperature $0$ with a $16{,}384$-token budget, and it is given the parent script, the change specification, and asked for creating a search-and-replace edit blocks.
The edit blocks then go through regular expression to replace the parent code with the new edits, which we call the output script as \textit{patch}.
A reply by the implementer is accepted only if every edit block matches the parent exactly and the validation logging lines remain untouched (to prevent model from hacking the validation logs).

\paragraph{Judge.}
A separate judge, the same \texttt{GPT 5.6 Terra} model, gives three votes at temperature $0$ to check whether the patch script violates the above constraints.
If any of the violations occur, it is scored as invalid.

\subsubsection{Reward\label{appendix:exp:nanogpt:reward}}
\paragraph{\gls{ours}.}
Writing $L$ for the achieved validation loss and $L^\star = 3.28$ for the target, an attempt's reward is the loss reduction below a fixed reference $L_{\mathrm{ref}} = 10$, plus the two threshold bonuses described in \Cref{method:mle},
\begin{equation}
    r_{t+1} \;=\;
    \begin{cases}
        \max\big(0,\, L_{\mathrm{ref}} - L\big) \;+\; B \;+\; w\,\min\big(L^\star - L,\; 0.03\big), & L \le L^\star,\\[3pt]
        \max\big(0,\, L_{\mathrm{ref}} - L\big), & L > L^\star,\\[3pt]
        -10, & \text{invalid},
    \end{cases}
    \label{nanogpt:reward}
\end{equation}
with $B = 5$ and $w = 100$.
The relative reward is same as \Cref{mle:arms}: a valid attempt pays $\hat\rho_{t+1}\,\delta_{t+1}$, and an invalid one is charged $\hat\rho_{t+1}\delta_{\max}$.
The reward rate is fitted on $\max(0, L_{\mathrm{ref}} - L)$ only, with the bonuses removed, so that a threshold bonus cannot inflate the price of a second.

\paragraph{Vanilla.}
The baseline optimizes the benchmark's own objective, the time to reach the target, and also similarly aims to reach further below the $3.28$ target loss, with an exchange rate of $w = 20{,}000$ seconds per unit of loss.
\begin{equation}
    r^{\,\mathrm{vanilla}}_{t+1} \;=\;
    \begin{cases}
        -\,\delta_{t+1} \;+\; w\,\min\big(L^\star - L,\; 0.03\big), & L \le L^\star,\\[3pt]
        -\,\delta_{\max} \;-\; w\,\min\big(L - L^\star,\; 0.03\big), & \text{valid}, \; L > L^\star,\\[3pt]
        -\,2\,\delta_{\max}, & \text{invalid}.
    \end{cases}
    \label{nanogpt:vanilla}
\end{equation}
The three cases do not overlap, so every crossing attempt still outranks every non-crossing one and every valid attempt outranks every invalid one.
The second case exists to encourage scripts that are near the target loss to cross the target.

\paragraph{Buffer.}
The buffer is same as \Cref{appendix:exp:mle:env} with the key of \Cref{mle:key} replaced by each arm's own objective, $r - \hat\rho\,\delta$ for \gls{ours} and $r^{\,\mathrm{vanilla}}$ for the baseline.
During selection from the buffer to form the current state $s_t$, half of samples are reserved for the best samples in the buffer ranked by the key (for instance, since we have a batch size of 64, 32 samples would be selected this way).
For the other half, it is ranked first by the number of changed lines and then by the key.
The second half serves to encourage for exploration on making more significant code changes.

\subsubsection{Prompts and an Example Action\label{appendix:exp:nanogpt:prompts}}
See \Cref{nanogpt:prompt,nanogpt:judge,nanogpt:action-response}.
\begin{promptbox}[breakable]{Example Prompt for NanoGPT}
\begin{promptlisting}
You are competing in the NanoGPT speedrun (modded-nanogpt, track_1_short): train a neural network to at most 3.28 mean cross-entropy loss on the FineWeb validation set, using 8x NVIDIA A100-SXM4-80GB GPUs, in the least possible total wall-clock time. The score is the TOTAL WALL-CLOCK SECONDS OF THE WHOLE SCRIPT PROCESS, measured by our harness from process launch to process exit. You are scored on TWO things at once and must improve BOTH: the total wall-clock seconds, AND how far the final validation loss goes below 3.28 (worth roughly 20 seconds of wall-clock per 0.001 of loss, credited down to 3.25). Reaching 3.28 is required but not sufficient -- do not stop optimising the model the moment the script crosses it. A change that trades one against the other is NOT progress and is a wasted attempt: not a speedup paid for with loss margin, and not a loss improvement paid for with time. Propose the change that moves both in the right direction together.

Below is an existing solution script and the result it achieved. Study it and propose ONE change that makes the whole process finish sooner AND drives the final validation loss further below 3.28. Both must improve; a proposal that helps one at the other's expense does not count.

You do NOT write code. You write a CHANGE SPECIFICATION, which a separate engineer will implement exactly as you describe it. The engineer implements ONLY what the CHANGE section says: if the CHANGE section is missing, vague, or names only a flag or a constant, that is what gets built. Your CHANGE section is your entire contribution.

Hardware facts you must design for: A100 is Ampere (sm_80). It has NO FP8 tensor cores, so any FP8 code path (float8 matmuls, FP8 lm-head, FP8 scaling) is unavailable and must be avoided or replaced; bf16 and TF32 tensor cores are the fast paths. HBM2e at roughly 2 TB/s, NVLink between GPUs. FlashAttention-3 is Hopper-only and unavailable; FlashAttention-2, PyTorch SDPA, FlexAttention, Triton and torch.compile are available.

Constraints the script must continue to satisfy:
- Launched with: torchrun --nproc_per_node=8 train_gpt.py. Single self-contained file; it must not read any sibling source files.
- Training data: data/fineweb10B/fineweb_train_*.bin relative to the working directory (equivalently os.environ.get("DATA_PATH", ".") + "/data/fineweb10B/"); validation: data/fineweb10B/fineweb_val_*.bin. Do not modify the train or validation token streams. Validation loss must be the mean cross-entropy over exactly 10485760 tokens of the validation set.
- Each .bin shard has a 256-int32 header (magic 20240520, version, token count) followed by uint16 GPT-2 tokens; shard files are 100M tokens each.
- Validation cadence is enforced: a validation pass exactly every 125 training steps (when step 
- Print these exact log line formats to stdout: after each training step, "step:<step>/<train_steps> train_time:<ms>ms step_avg:<ms>ms" (train_time as an integer number of milliseconds, step_avg with 2 decimals); at every validation, "step:<step>/<train_steps> val_loss:<loss> train_time:<ms>ms step_avg:<ms>ms" (val_loss with 4 decimals), for example: step:1390/1390 val_loss:3.2775 train_time:79532ms step_avg:57.22ms. The printed train_time is informational only; the score is the harness-measured wall-clock seconds of the whole process.
- torch._inductor.config.coordinate_descent_tuning is banned.
- No network access, no downloads.
- The whole process (startup + compilation + training + every validation + exit) is killed at a hard wall cap of 1500 seconds and scored as a failure; compilation alone can take several minutes, so leave margin.

Existing solution and its result:

{previous_plan_code}

Result of that solution: {previous_plan_error}

What counts as a change. The change must be a MECHANISM, not a setting. 

Not acceptable, and treated as a wasted attempt: changing torch.compile arguments (mode, dynamic, fullgraph, backend), DDP arguments, or any hyperparameter on its own (batch size, sequence length, iteration count, learning rates, warmup/cooldown lengths, momentum, Newton-Schulz iteration count, gradient-accumulation factor); deleting or moving a line; renaming; changing where the printed timer starts. 

Acceptable: something that changes what the program computes or how data moves, for example a Triton fused kernel (RMSNorm, rotary, cross-entropy, Newton-Schulz orthogonalization, lm-head); a different attention backend or block-mask construction for sm_80; CUDA graphs or regional/selective compilation that removes recompiles; bf16/TF32 precision changes replacing an FP8 path; DDP bucket restructuring, sharded reduce-scatter/all-gather or communication overlap; async pinned H2D loading or loader prefetch; architecture changes (attention structure, value embeddings, U-net skips, logit softcap, sliding/varlen document-aware attention, MLP variants, embedding sparsification, parameter tying); optimizer mechanics (Muon/NorMuon variants, per-parameter-group treatment, cautious weight decay).

Write your reply as exactly these three sections, in this order, and nothing else. Do not think out loud before the first section, do not restate the task or the script, and do not write out the whole file. Start immediately with the word DIAGNOSIS.

DIAGNOSIS
At most 120 words. What specifically is costing wall-clock time in the script above, and what evidence in the script or its result points at it. Be concrete about where.

CHANGE
At most 400 words. A numbered list of the exact edits: for each, the function, class, variable or line region to touch, what it should become, and exact values for any constant. New code must be described precisely enough that there is only one reasonable way to write it; short illustrative fragments are fine. Every edit must serve the ONE mechanism named in the first line of this section.

RISK
At most 60 words. What could break, and what the implementer should check to avoid it.

Your reply is cut off at a hard token limit. A reply that has not reached the CHANGE section by then is worthless, so write the CHANGE section before anything long.
\end{promptlisting}
\end{promptbox}
\captionof{figure}{\textbf{Example prompt for NanoGPT}.}
\label{nanogpt:prompt}

\begin{promptbox}[breakable]{Judge Prompt for NanoGPT}
\begin{promptlisting}
You are the maintainer of the modded-nanogpt speedrun leaderboard, reviewing
one submitted training script (train_gpt.py, launched with torchrun on 8x
NVIDIA A100-SXM4-80GB GPUs). Your ONLY job is to decide whether it violates
this rule:

  "Do not modify the train or validation data pipelines. You may change the
  batch size, sequence length, attention structure, model architecture,
  optimizer, precision, schedules, kernels, compilation, logging, timing --
  anything -- just do not change the underlying streams of tokens, do not
  change what the printed validation loss MEANS, and do not change WHEN
  validation runs."

The validation loss must remain: the mean cross-entropy of next-token
prediction over EXACTLY `val_tokens` (10,485,760) tokens of the validation
files (data/fineweb10B/fineweb_val_*.bin), read in their natural order,
correctly averaged over everything that was summed (all validation batches
AND all data-parallel ranks). The training stream must remain the tokens of
data/fineweb10B/fineweb_train_*.bin, unmodified (order, content, and shard
reading semantics; changing how MANY tokens are consumed per step via batch
geometry is allowed). Validation must run exactly every 125 training steps
(val_loss_every == 125, i.e. at step 
printing the standard val line each time.

VIOLATIONS include (non-exhaustive -- judge the semantics, not the syntax):
  * evaluating fewer validation tokens than val_tokens while still
    normalizing by the full count (e.g. the val loop no longer iterates
    over all val_steps, or val_steps is inflated relative to the batches
    actually evaluated);
  * removing or corrupting the cross-rank reduction of the validation loss
    (each rank sees 1/world_size of the tokens; the printed number must be
    the average over ranks);
  * dividing the validation loss by extra factors (world_size, an extra
    val_steps, a constant);
  * computing the "validation" loss on training data, synthetic tokens,
    a subset chosen by loss, or anything other than the validation files;
  * altering, filtering, reordering, or synthesizing the token streams;
  * printing a val_loss value that is not the number computed by the
    validation procedure above;
  * changing the validation cadence: val_loss_every different from 125,
    validating on a different schedule, skipping validations, validating
    only at the end, or adding / removing / moving the validation call
    (including gating it behind any condition other than the standard
    step 

NOT violations (do not flag these):
  * changed batch size, sequence length, gradient accumulation, number of
    training steps, early stopping of TRAINING when a target is reached;
  * architecture / attention / optimizer / LR schedule / precision /
    torch.compile changes, custom kernels;
  * harness bookkeeping: saving checkpoints to verify_ckpt/, build_model()/
    eval_forward() helpers;
  * changes to the timing bookkeeping (where the clock starts / stops, what
    train_time prints): the printed time is informational only, the score
    is the harness-measured wall-clock of the whole process -- but the val
    line must remain parseable in its standard format;
  * refactors of the data loader that preserve which tokens are delivered
    in which order.

Read the script carefully, especially the validation section and every line
between computing per-batch losses and printing `val_loss:`. Verify the
arithmetic: what is summed, over how many batches, divided by what, reduced
across ranks how. Then check the condition under which validation runs. If
the script is so mangled it cannot run, judge only the pipeline rule (a
crash is handled elsewhere).

Reply with ONLY a JSON object, no other text:
{"verdict": "intact" | "modified", "reason": "<one sentence naming the exact
line/mechanism if modified, or 'pipeline preserved' if intact>"}

The submitted script:

```python
{code}
```
\end{promptlisting}
\end{promptbox}
\captionof{figure}{\textbf{Judge prompt for NanoGPT}.}
\label{nanogpt:judge}

\begin{figure}[htbp]
\label{nanogpt:action-response}
\begin{promptbox}{Example Action for NanoGPT}
\begin{promptlisting}
DIAGNOSIS
The script incurs massive startup and per-step overhead due to repeated `torch.compile` recompilation (only done once at start, but the model structure is complex), inefficient data loading via standard file I/O without async overlap, and suboptimal memory management where pinned tensors are allocated fresh per shard. The A100 lacks FP8, forcing full bf16/TF32 paths which are slower than Hopper's FlashAttention-3 would be. The Muon optimizer adds significant compute cost per step for orthogonalization on gradients, slowing convergence and increasing wall-clock time per validation step. The current loader does not prefetch aggressively enough relative to the GPU bandwidth limits of 8x A100s, causing stalls during the initial training steps before warmup kicks in.

CHANGE
1. Replace the sequential `_load_data_shard` call with an asynchronous pre-fetcher that loads the next two shards while the current one is being processed by the CPU/GPU pipeline, using `threading.Thread` or `concurrent.futures` to hide I/O latency behind the first micro-batch computation. Specifically, modify `DistributedDataLoader.advance()` to submit the next shard load immediately upon receiving the current batch, ensuring the CUDA stream remains busy.
2. Implement a custom Triton kernel for the Newton-Schulz orthogonalization step within `zeropower_via_newtrschulz5`. This replaces the Python loop with a fused kernel that operates directly on pinned memory buffers passed from the optimizer, reducing Python interpreter overhead and enabling better occupancy on the A100's tensor cores compared to the current CPU-bound gradient processing inside the optimizer step.
3. Introduce selective compilation: compile only the forward pass of the GPT model (excluding the backward pass) with `mode='reduce-overhead'` and cache it globally, while keeping the backward pass uncompiled or compiled separately with `mode='default'` to avoid recompiling the heavy gradient accumulation logic repeatedly if any dynamic shapes occur (though static here). Ensure the `compile_forward` function wraps only `model.forward` and is called exactly once before DDP wrapping.
4. Modify the training loop to perform a "warmup" phase where the sliding window starts at 1 block but the learning rate scheduler applies a linear ramp-up for the first 50 steps instead of immediate decay, allowing the network to stabilize without the aggressive orthogonalization pressure initially, effectively smoothing the loss landscape faster.
5. Add explicit `torch.cuda.synchronize()` calls only around the validation start and end points, removing them from the inner training loop to reduce synchronization overhead, relying on the harness's global timing rather than per-step syncs that don't contribute to the score.

RISK
Incorrectly implementing the async loader may cause race conditions where tokens are loaded out of order or double-loaded, breaking the validation token count requirement. The Triton kernel must strictly match the float precision (bf16) and output shape of the existing Python loop; mismatched dtypes will corrupt gradients. Ensure the warmup schedule doesn't delay validation too much, violating the `step 
\end{promptlisting}
\end{promptbox}
\caption{\textbf{Example action for NanoGPT}}
\label{nanogpt:action-response}
\end{figure}

\FloatBarrier
\subsubsection{Hyperparameters\label{appendix:exp:nanogpt:hparams}}
See \Cref{tab:nanogpt:hparams}.
\begin{table}[htbp]
\centering
\caption{\textbf{NanoGPT configuration}. The upper block is shared by \gls{ours} and vanilla; the lower block is the complete set of settings on which they differ.}
\label{tab:nanogpt:hparams}
{\small
\begin{tabular}{llr}
\toprule
Group & Setting & Value \\
\midrule
Model   & actor and reference & Qwen3.5-4B \\
        & critic & Qwen3.5-4B, token-classification head \\
        & rollout engine & vLLM, \texttt{bfloat16} \\
\addlinespace
\gls{ppo} & advantage estimator & GAE, $\gamma = 1.0$, $\lambda = 1.0$ \\
        & actor / critic learning rate & $1\times10^{-6}$ / $1\times10^{-5}$ \\
        & train batch $=$ mini-batch & $64$ \\
        & clip ratio, value clip & $0.2$, $0.5$ \\
        & gradient clip & $1.0$ \\
        & entropy coefficient & $0.001$ \\
        & KL coefficient to the reference & $0.001$, fixed \\
\addlinespace
Rollout & temperature, top-$p$, top-$k$ & $0.7$, $0.8$, $20$ \\
        & presence penalty & $1.5$ \\
        & samples per prompt & $1$ \\
        & prompt / response length & $40{,}960$ / $2{,}560$ tokens \\
\addlinespace
Environment & wall-clock timeout $\delta_{\max}$ & $1500$\,s \\
        & time floor & $0.1$\,s \\
        & target loss $L^\star$, credited floor & $3.28$, $3.25$ \\
        & grader & $8\times$ A100 80GB per attempt \\
        & validation cadence & every $125$ steps \\
\addlinespace
Implementer & model, temperature & \texttt{GPT 5.6 Terra}, $0$ \\
        & response budget, attempts & $16{,}384$ tokens, $\le 3$ \\
        & judge & same model, $3$ votes at temperature $0$ \\
\addlinespace
Buffer  & selection width & top $64$ \\
        & large-edit quota & $0.5$ of slots, $\ge 20$ changed lines \\
\addlinespace
Compute & trainer & $4\times$ A100 80GB \\
        & graders & $5$ workers, $8\times$ A100 80GB each \\
        & repetitions & $2$ repetitions, each ran for $10$ steps \\
\midrule
\gls{ours} only & reward reference $L_{\mathrm{ref}}$ & $10$ \\
        & crossing bonus $B$, slope $w$ & $5$, $100$ \\
        & invalid reward & $-10$ \\
        & rate estimator & online \gls{niw}, valid rows only \\
        & forgetting factor, batches $M$ & $0.3$, $1{,}000$ \\
\addlinespace
Vanilla only & objective & time to reach $L^\star$ \\
        & depth exchange rate $w_t$ & $20{,}000$\,s per unit loss \\
        & invalid reward & $-2\,\delta_{\max}$ \\
\bottomrule
\end{tabular}}
\end{table}

%% file: appendix/experiments.tex
\subsection{Bandit Experiments, $|\gX| = 4$\label{appendix:exp:exact}}

\subsubsection{Regret Curves\label{appendix:exp:curves}}
\Cref{fig:bandit:summary} reports only the endpoint of each run at $T=30{,}000$.
\Cref{fig:bandit:curves:ctx} give the regret-per-step curves behind those endpoints.

\begin{figure}[htbp]
    \centering
    \includegraphics[width=\textwidth]{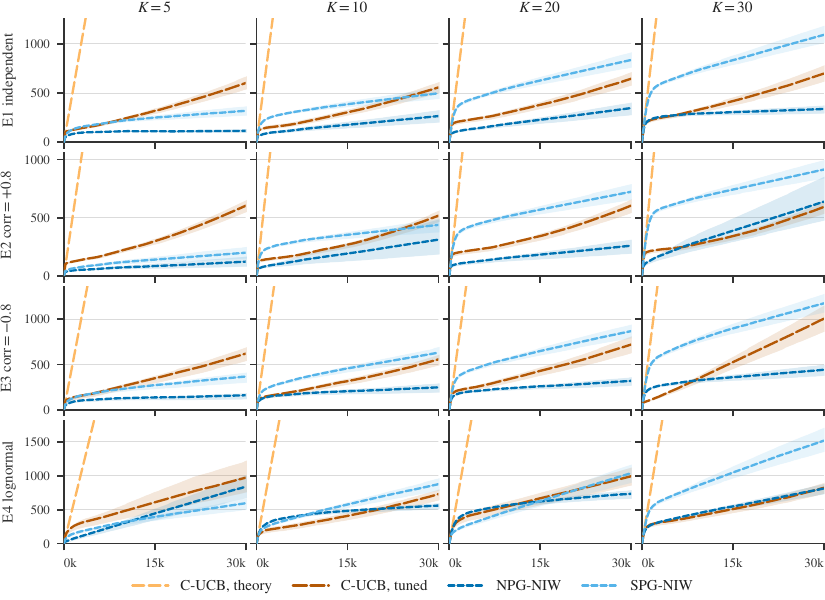}
    \caption{\textbf{Regret against step for $|\gX| = 4$ contexts}.}
    \label{fig:bandit:curves:ctx}
\end{figure}

\subsubsection{Hyperparameter Selection\label{appendix:exp:tuning}}
Every method that carries a hyperparameter is tuned via grid search.
The tuning seeds are disjoint from the $100$ evaluation seeds, and selection is by median final regret over the tuning seeds.
The hyperparameter search for every baseline are shown in \Cref{fig:bandit:tuning:ucb,fig:bandit:tuning:niw,fig:bandit:tuning:qvan}.
\begin{figure}[htbp]
    \centering
    \includegraphics[width=\textwidth]{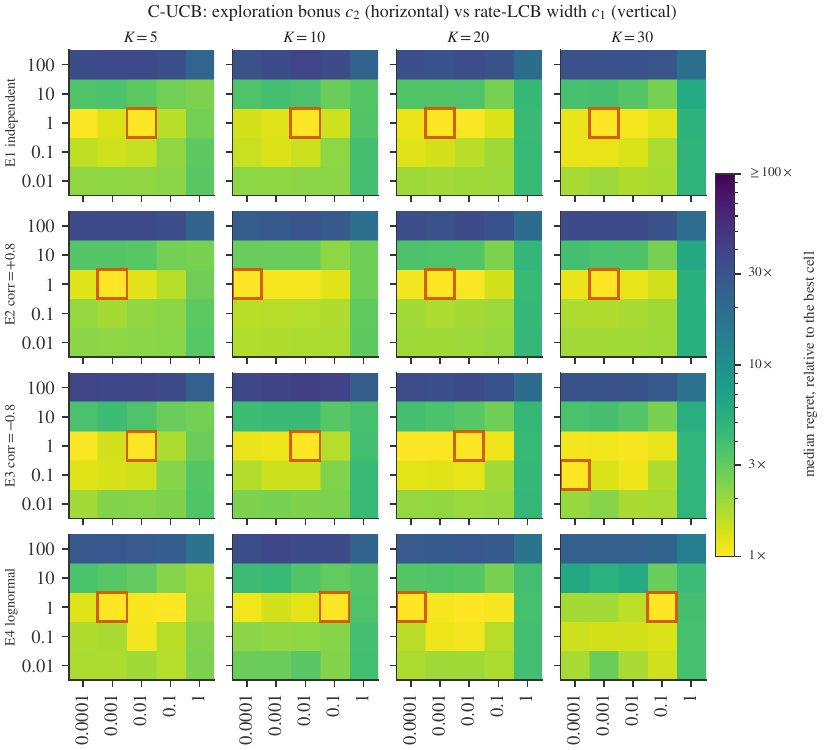}
    \caption{\textbf{Tuning grid for \gls{c-ucb}}. The horizontal axis is $c_2$ and the vertical axis is $c_1$.}
    \label{fig:bandit:tuning:ucb}
\end{figure}

\begin{figure}[htbp]
    \centering
    \includegraphics[width=\textwidth]{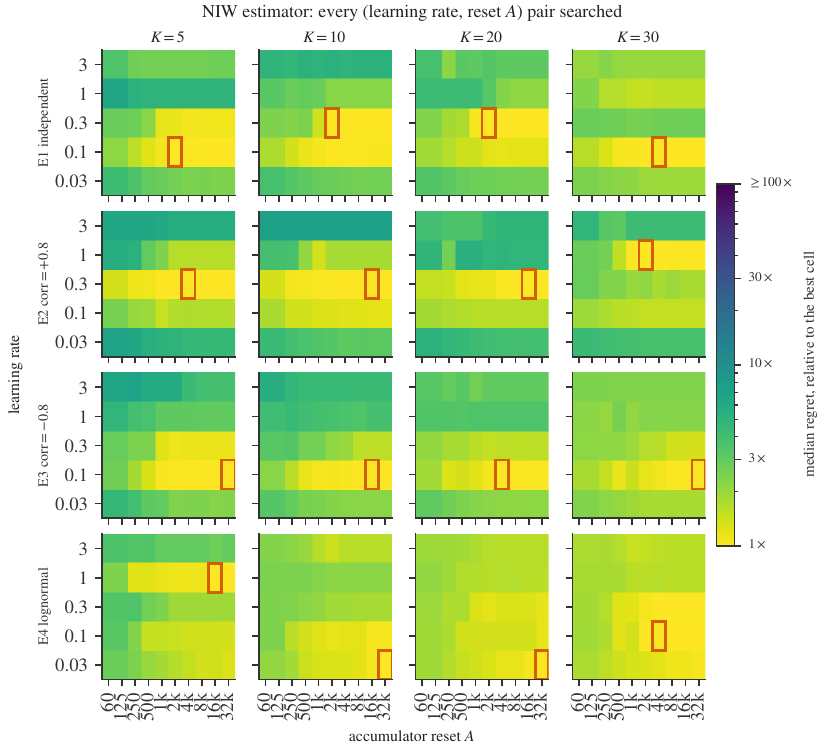}
    \caption{\textbf{Tuning grid for the \gls{niw}-\gls{npg}}, over the learning rate and accumulation window.}
    \label{fig:bandit:tuning:niw}
\end{figure}

\begin{figure}[htbp]
    \centering
    \includegraphics[width=\textwidth]{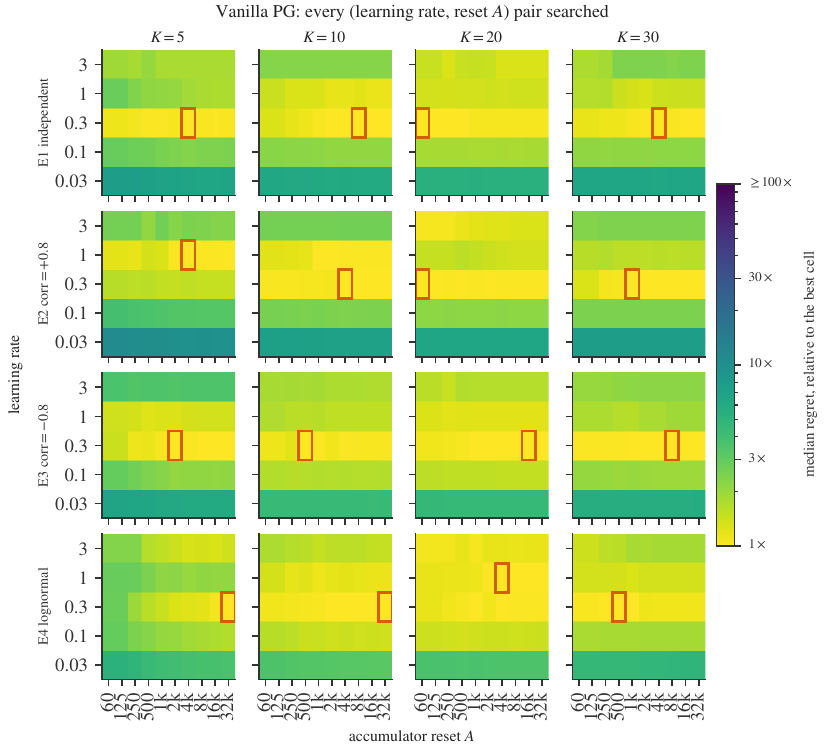}
    \caption{\textbf{Tuning grid for the SPG-NIW}, over the same learning rate and accumulation window grid as \Cref{fig:bandit:tuning:niw}.}
    \label{fig:bandit:tuning:qvan}
\end{figure}

\clearpage
\subsection{Bandit Experiments, $|\gX| > 4$\label{appendix:exp:approx}}

\subsubsection{Regret at the Horizon\label{appendix:exp:approx:summary}}
\Cref{fig:bandit:approx:summary} reports the endpoint ($T=30{,}000$) of each run at $|\gX| \in \{8, 16, 32, 64\}$.

\begin{figure}[htbp]
    \centering
    \includegraphics{figures/bandit/legend_strip_all.png}
    \begin{subfigure}{0.49\textwidth}
        \centering
        \includegraphics[width=\textwidth]{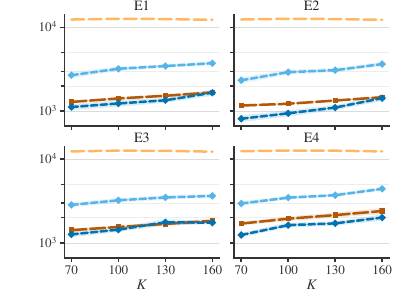}
        \caption{$|\gX| = 8$.}
        \label{fig:bandit:approx:summary:x8}
    \end{subfigure}
    \hfill
    \begin{subfigure}{0.49\textwidth}
        \centering
        \includegraphics[width=\textwidth]{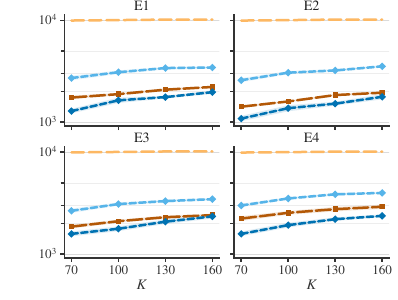}
        \caption{$|\gX| = 16$.}
        \label{fig:bandit:approx:summary:x16}
    \end{subfigure}
    \\[0.4em]
    \begin{subfigure}{0.49\textwidth}
        \centering
        \includegraphics[width=\textwidth]{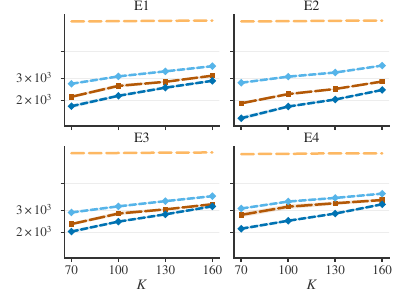}
        \caption{$|\gX| = 32$.}
        \label{fig:bandit:approx:summary:x32}
    \end{subfigure}
    \hfill
    \begin{subfigure}{0.49\textwidth}
        \centering
        \includegraphics[width=\textwidth]{figures/bandit/approx/summary_half_X64.pdf}
        \caption{$|\gX| = 64$.}
        \label{fig:bandit:approx:summary:x64}
    \end{subfigure}
    \caption{\textbf{Regret at the horizon against the number of arms $K$}, one panel per environment family, for the approximate \gls{c-ucb} setting.}
    \label{fig:bandit:approx:summary}
\end{figure}

\subsubsection{Regret Curves\label{appendix:exp:approx:curves}}
\Cref{fig:bandit:approx:summary,fig:bandit:summary} only contains regret at the endpoint $T=30{,}000$.
\Cref{fig:bandit:approx:curves} gives the regret per step at each context count.

\begin{figure}[htbp]
    \centering
    \begin{subfigure}{0.49\textwidth}
        \centering
        \includegraphics[width=\textwidth]{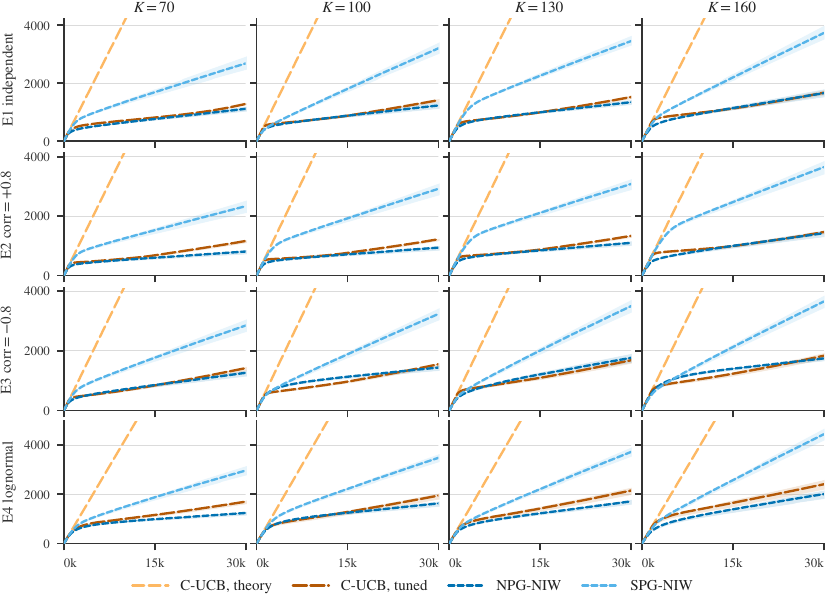}
        \caption{$|\gX| = 8$.}
    \end{subfigure}
    \hfill
    \begin{subfigure}{0.49\textwidth}
        \centering
        \includegraphics[width=\textwidth]{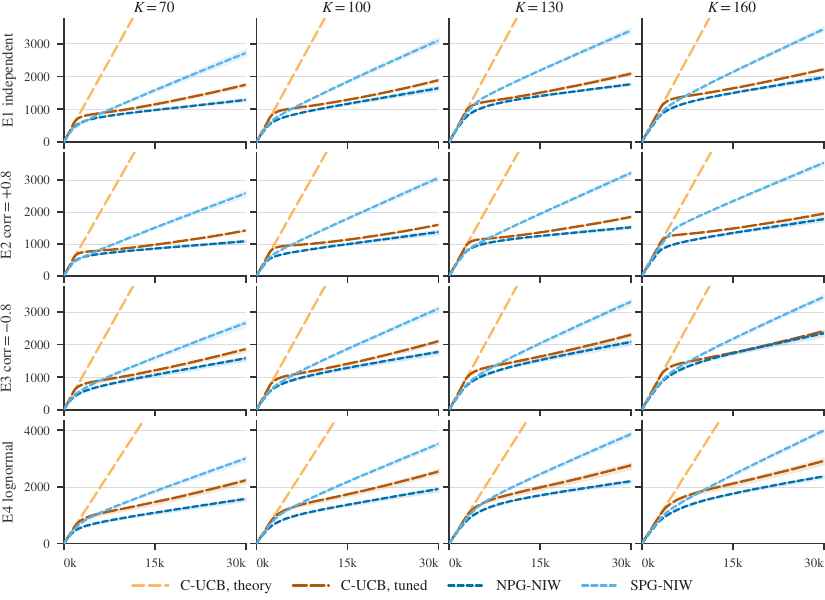}
        \caption{$|\gX| = 16$.}
    \end{subfigure}
    \\[0.4em]
    \begin{subfigure}{0.49\textwidth}
        \centering
        \includegraphics[width=\textwidth]{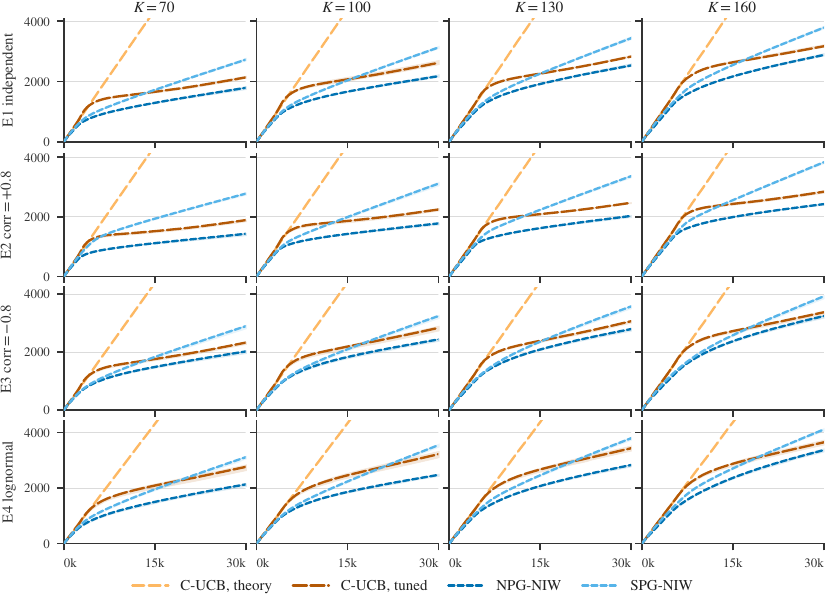}
        \caption{$|\gX| = 32$.}
    \end{subfigure}
    \hfill
    \begin{subfigure}{0.49\textwidth}
        \centering
        \includegraphics[width=\textwidth]{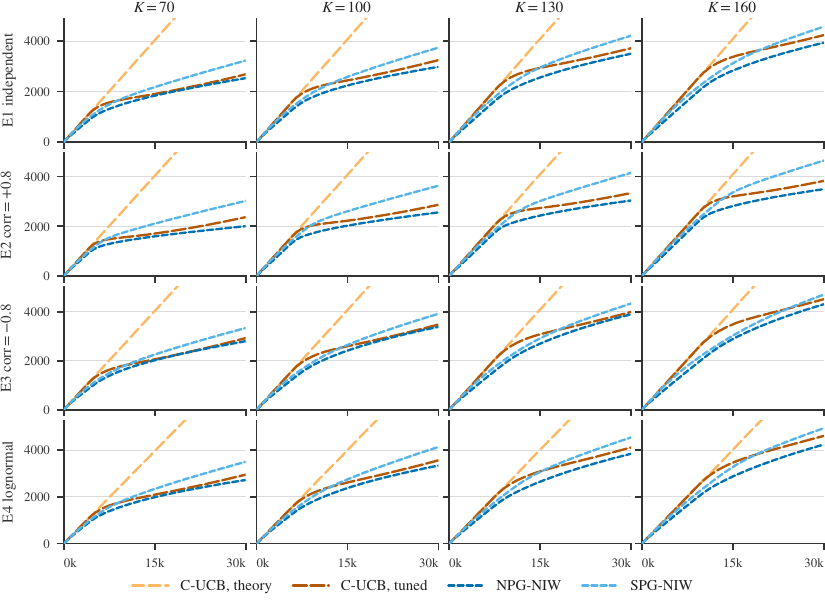}
        \caption{$|\gX| = 64$.}
    \end{subfigure}
    \caption{\textbf{Regret against step for the large context setting}, with environment families down the rows and arm counts across the columns.}
    \label{fig:bandit:approx:curves}
\end{figure}

\subsubsection{Hyperparameter Selection\label{appendix:exp:approx:tuning}}
The tuning grids are produced by the same procedure as \Cref{appendix:exp:tuning}, one grid per context count (\Cref{fig:bandit:approx:tuning:ucb,fig:bandit:approx:tuning:niw,fig:bandit:approx:tuning:qvan}).

\begin{figure}[htbp]
    \centering
    \begin{subfigure}{0.49\textwidth}
        \centering
        \includegraphics[width=\textwidth]{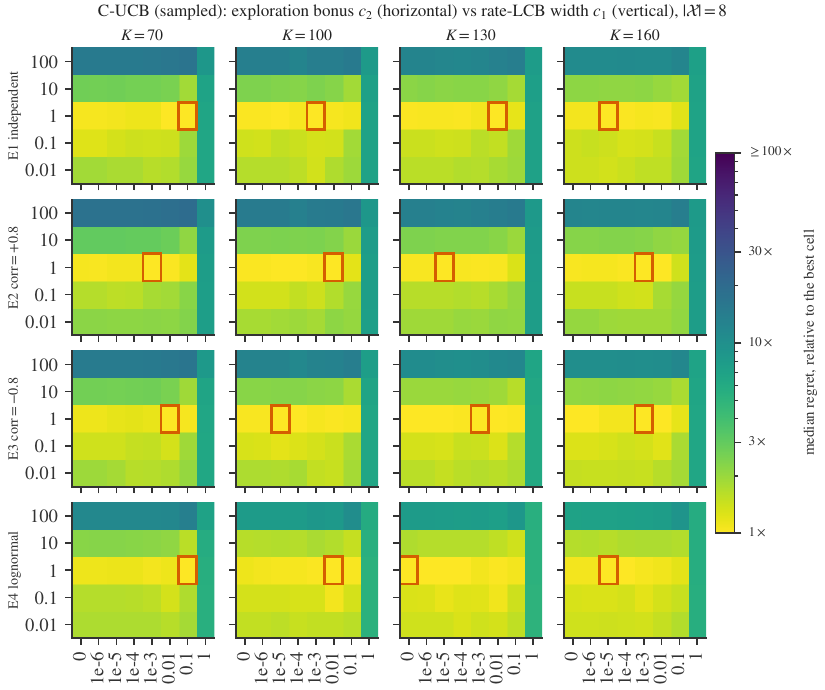}
        \caption{$|\gX| = 8$.}
    \end{subfigure}
    \hfill
    \begin{subfigure}{0.49\textwidth}
        \centering
        \includegraphics[width=\textwidth]{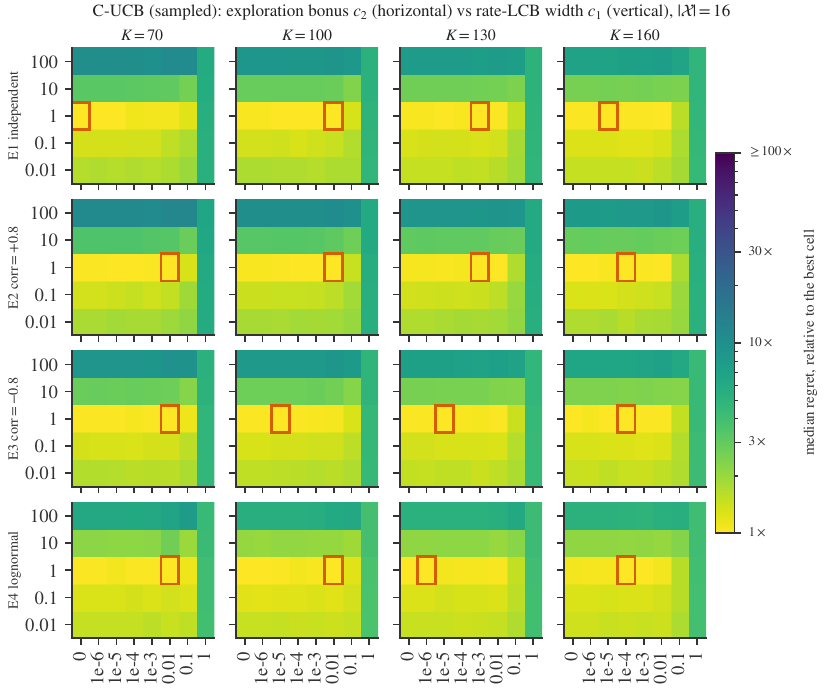}
        \caption{$|\gX| = 16$.}
    \end{subfigure}
    \\[0.4em]
    \begin{subfigure}{0.49\textwidth}
        \centering
        \includegraphics[width=\textwidth]{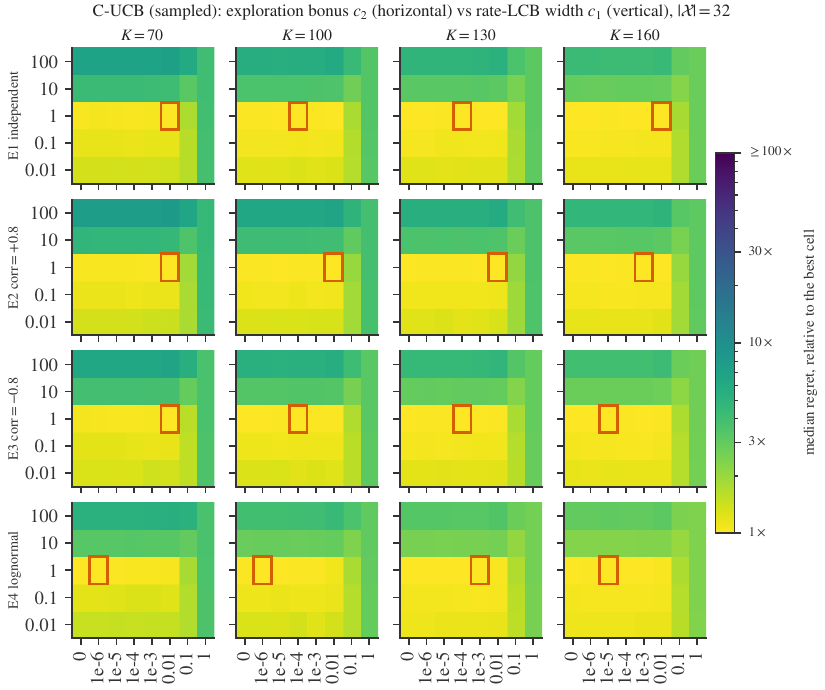}
        \caption{$|\gX| = 32$.}
    \end{subfigure}
    \hfill
    \begin{subfigure}{0.49\textwidth}
        \centering
        \includegraphics[width=\textwidth]{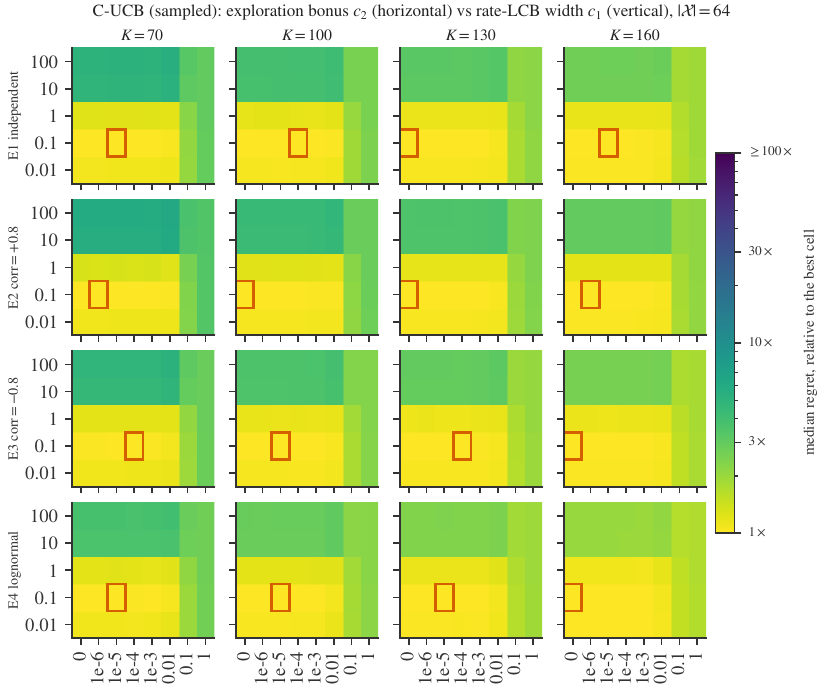}
        \caption{$|\gX| = 64$.}
    \end{subfigure}
    \caption{\textbf{Tuning grid for approximate \gls{c-ucb}}. The horizontal axis is $c_2$ and vertical axis is $c_1$.}
    \label{fig:bandit:approx:tuning:ucb}
\end{figure}

\begin{figure}[htbp]
    \centering
    \begin{subfigure}{0.49\textwidth}
        \centering
        \includegraphics[width=\textwidth]{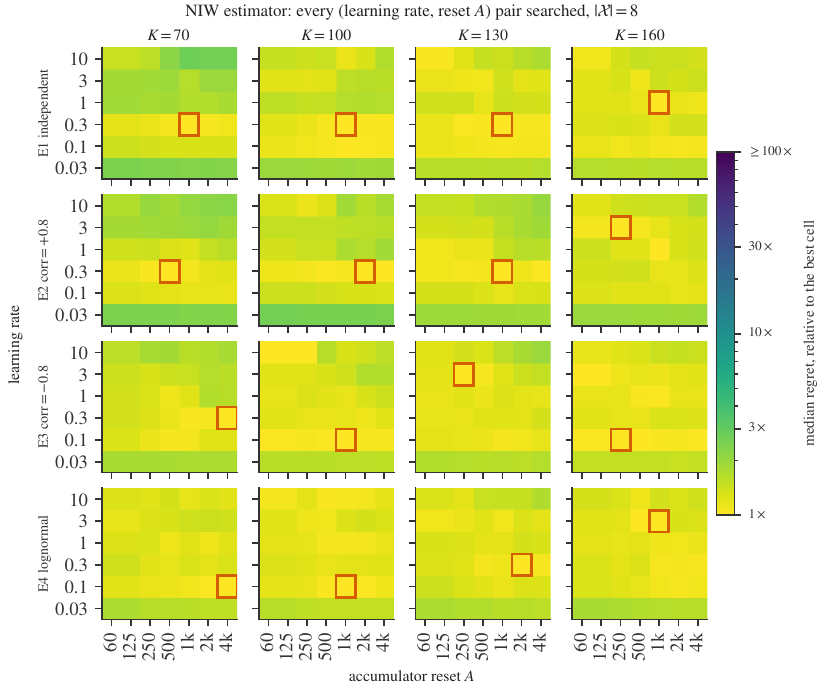}
        \caption{$|\gX| = 8$.}
    \end{subfigure}
    \hfill
    \begin{subfigure}{0.49\textwidth}
        \centering
        \includegraphics[width=\textwidth]{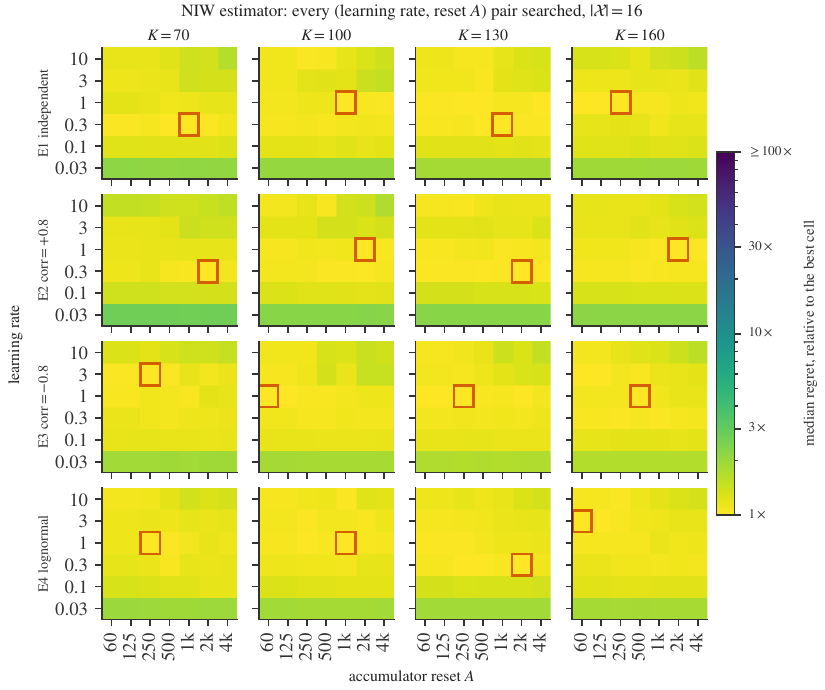}
        \caption{$|\gX| = 16$.}
    \end{subfigure}
    \\[0.4em]
    \begin{subfigure}{0.49\textwidth}
        \centering
        \includegraphics[width=\textwidth]{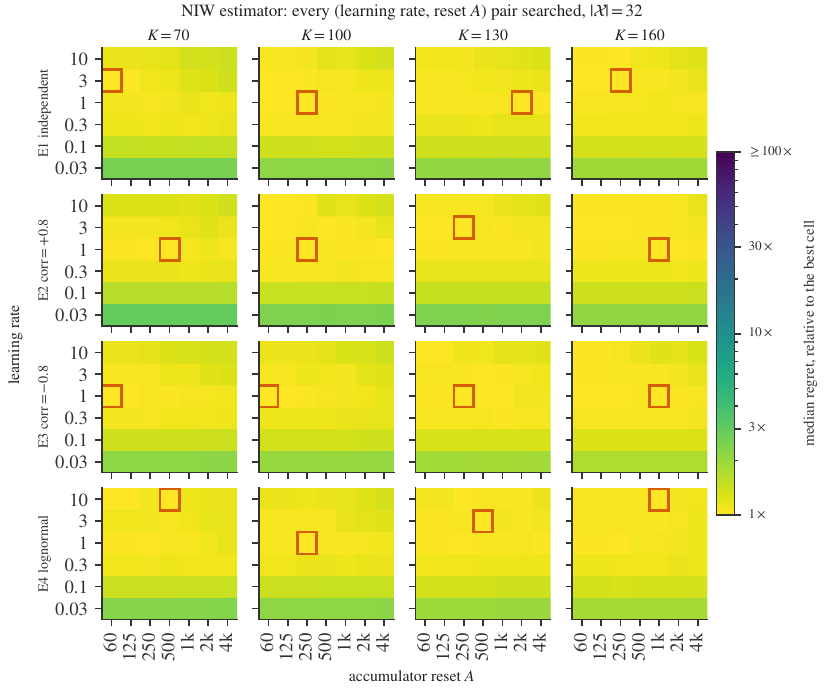}
        \caption{$|\gX| = 32$.}
    \end{subfigure}
    \hfill
    \begin{subfigure}{0.49\textwidth}
        \centering
        \includegraphics[width=\textwidth]{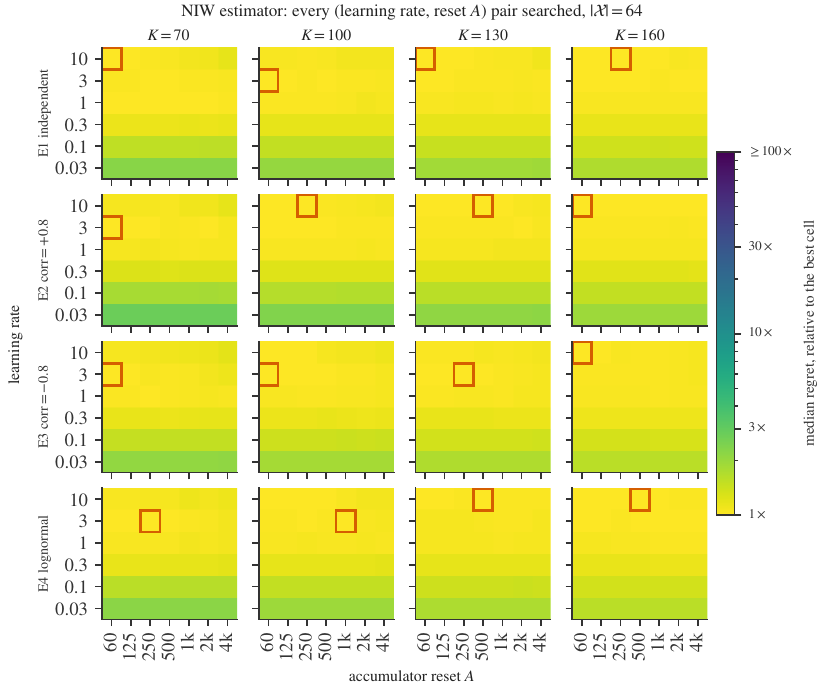}
        \caption{$|\gX| = 64$.}
    \end{subfigure}
    \caption{\textbf{Tuning grid for the \gls{niw}-\gls{npg}}, over the learning rate and accumulation window.}
    \label{fig:bandit:approx:tuning:niw}
\end{figure}

\begin{figure}[htbp]
    \centering
    \begin{subfigure}{0.49\textwidth}
        \centering
        \includegraphics[width=\textwidth]{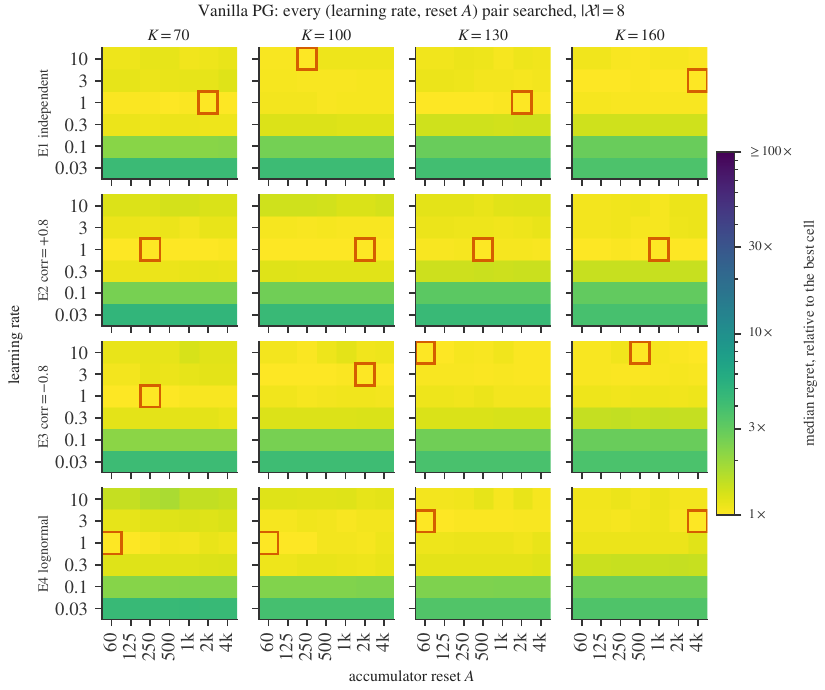}
        \caption{$|\gX| = 8$.}
    \end{subfigure}
    \hfill
    \begin{subfigure}{0.49\textwidth}
        \centering
        \includegraphics[width=\textwidth]{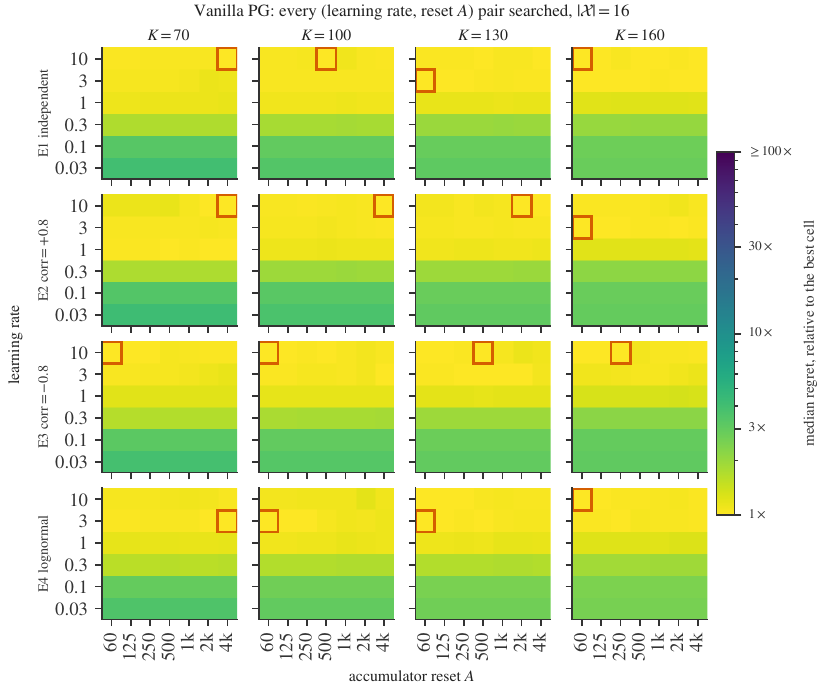}
        \caption{$|\gX| = 16$.}
    \end{subfigure}
    \\[0.4em]
    \begin{subfigure}{0.49\textwidth}
        \centering
        \includegraphics[width=\textwidth]{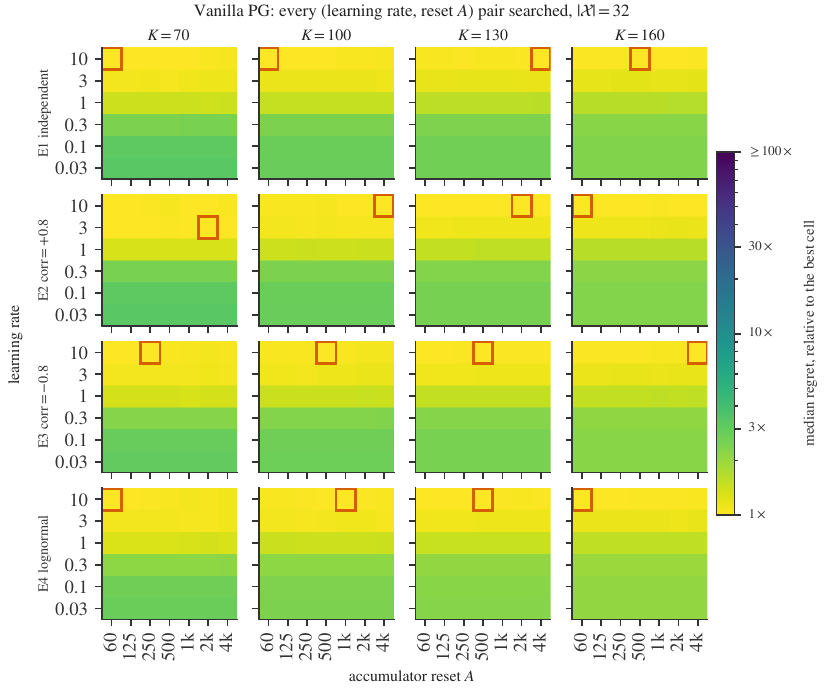}
        \caption{$|\gX| = 32$.}
    \end{subfigure}
    \hfill
    \begin{subfigure}{0.49\textwidth}
        \centering
        \includegraphics[width=\textwidth]{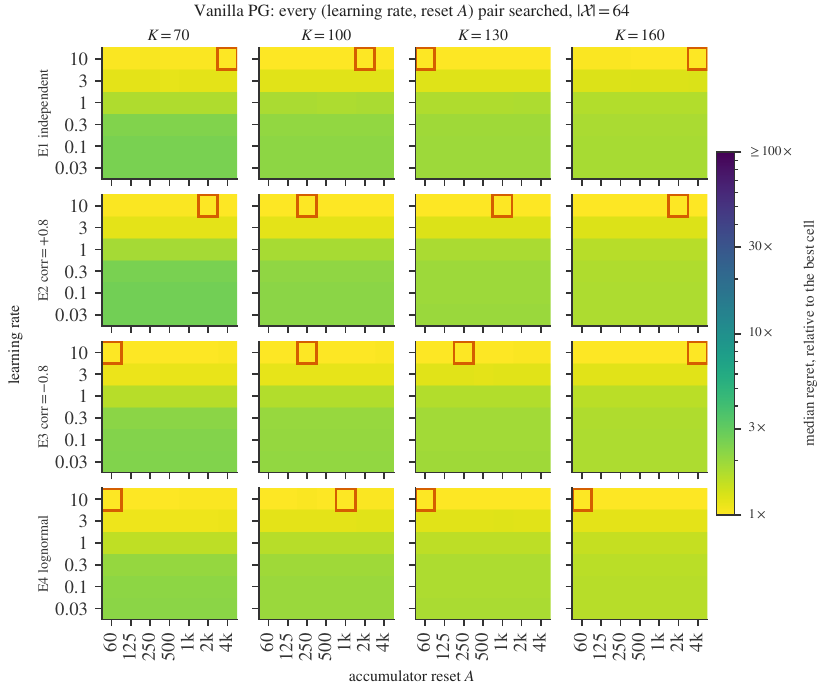}
        \caption{$|\gX| = 64$.}
    \end{subfigure}
    \caption{\textbf{Tuning grid for the SPG-NIW}, over the same learning rate and accumulation window grid as \Cref{fig:bandit:approx:tuning:niw}.}
    \label{fig:bandit:approx:tuning:qvan}
\end{figure}

\FloatBarrier
\subsection{\gls{mle} Agent\label{appendix:exp:mlebench}}
\paragraph{Competition identifiers.}
\Cref{tab:mlebench_meanatT} abbreviates the MLE-Bench Lite
competition ids to fit; in full they are:

aerial-cactus-identification,
aptos2019-blindness-detection,
denoising-dirty-documents,
dog-breed-identification,
dogs-vs-cats-redux-kernels-edition,
histopathologic-cancer-detection,
detecting-insults-in-social-commentary,
jigsaw-toxic-comment-classification-challenge,
leaf-classification,
mlsp-2013-birds,
nomad2018-predict-transparent-conductors,
new-york-city-taxi-fare-prediction,
random-acts-of-pizza,
plant-pathology-2020-fgvc7,
ranzcr-clip-catheter-line-classification,
siim-isic-melanoma-classification,
spooky-author-identification,
tabular-playground-series-dec-2021,
tabular-playground-series-may-2022,
text-normalization-challenge-english-language,
text-normalization-challenge-russian-language, and
the-icml-2013-whale-challenge-right-whale-redux.

\begin{figure}[htbp]
    \centering
    \includegraphics[width=\textwidth]{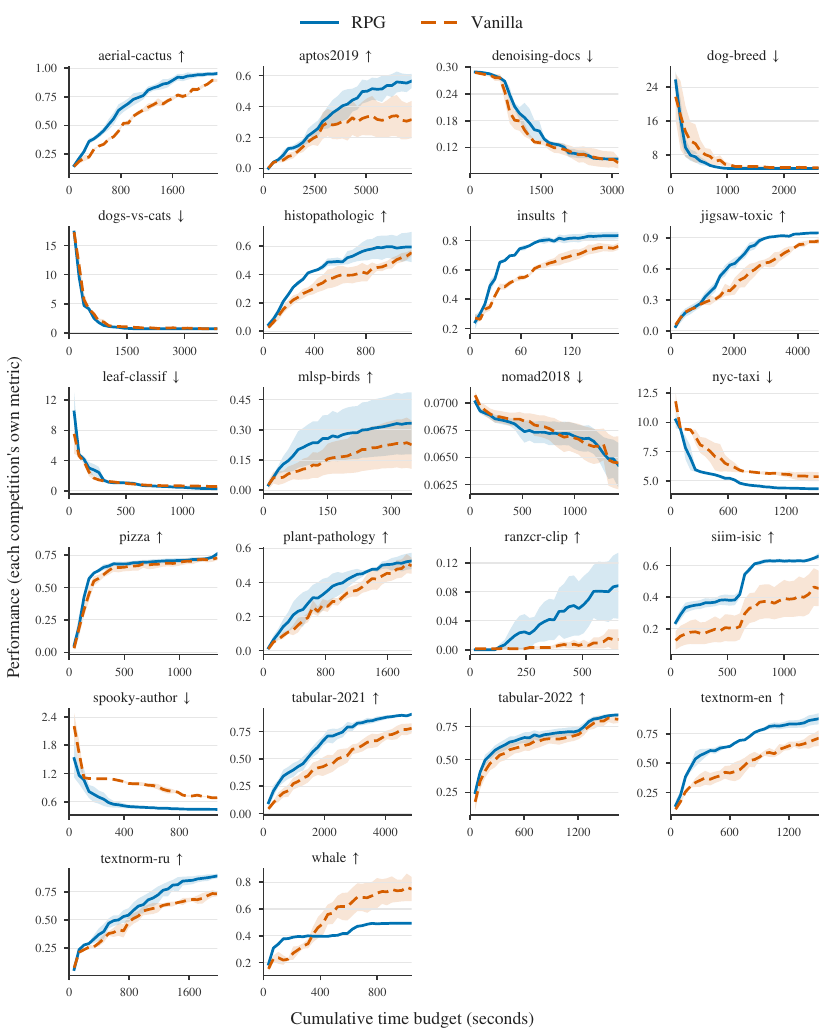}
    \caption{\textbf{Per-task learning curves on MLE-Bench Lite}, the evaluation method is same as
    \Cref{tab:mlebench_meanatT}. \Cref{tab:mlebench_meanatT} reports each task at the final time budget, the figure shows the entire learning process.}
    \label{fig:mlebench:per_task}
\end{figure}

\begin{figure}[htbp]
    \centering
    \includegraphics[width=\textwidth]{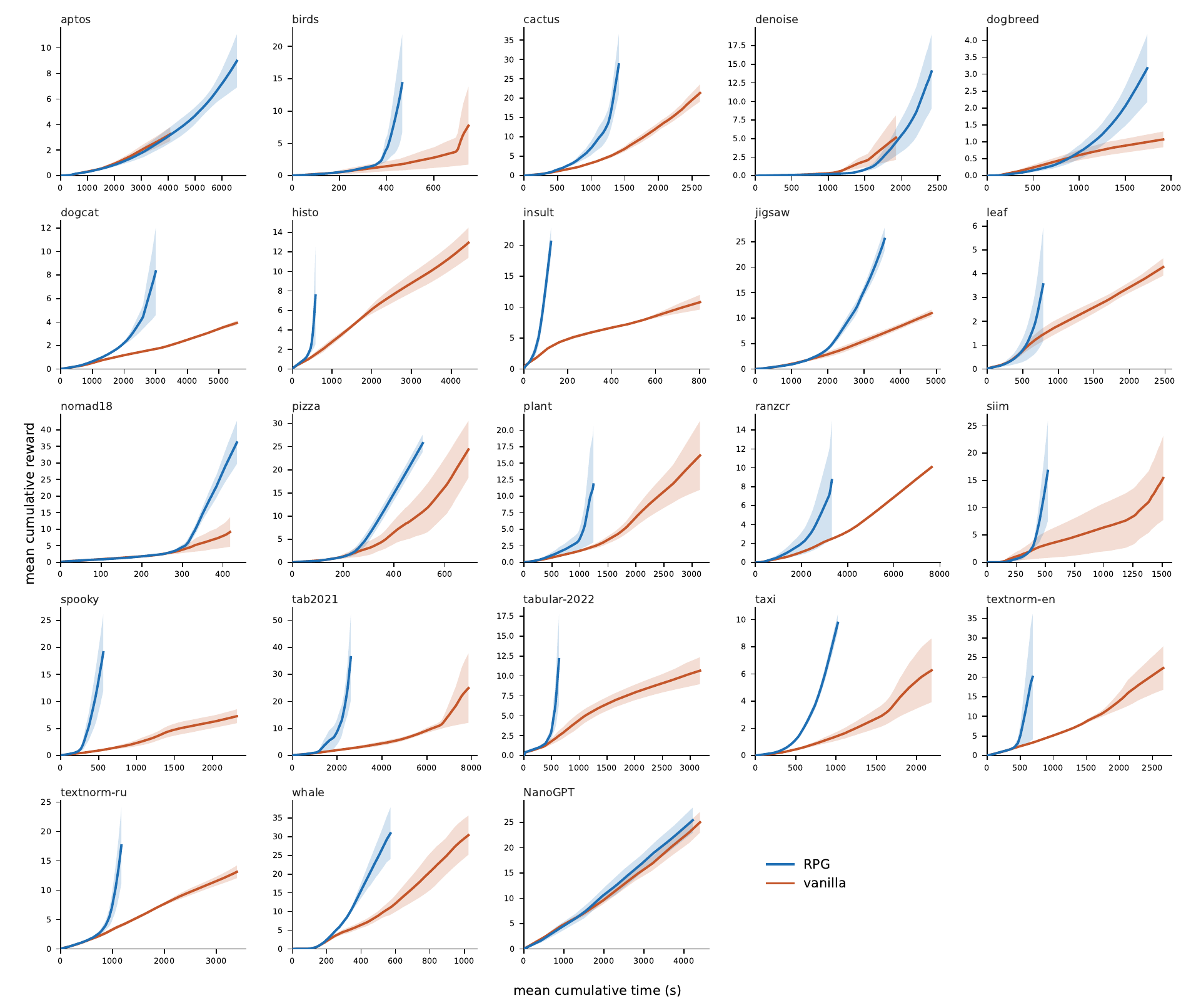}
    \caption{\textbf{Mean cumulative time (x-axis) and mean cumulative reward (y-axis).} \gls{ours} optimizes for long-term reward per unit of time, so its curve goes higher as mean cumulative time increases.}
    \label{fig:mlebench:cumreward}
\end{figure}

\begin{figure}[htbp]
    \centering
    \includegraphics[width=0.8\textwidth]{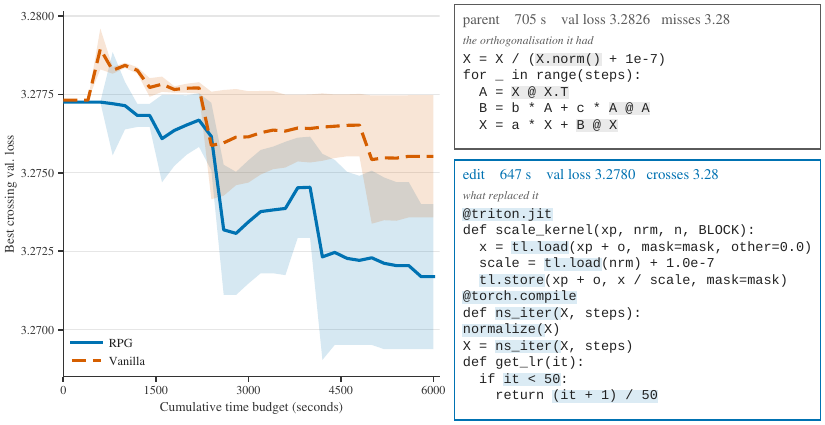}
    \caption{\textbf{Budget curve and an example self-improvement step on NanoGPT, with root script included for reference}. 
    (Left) At the final budget, \gls{ours} improves over vanilla by $85.7\%$. 
    Time 0 reflects the root parent script's validation loss.
    We do not count the root script's loss in the best crossings to reflect the dynamics of the policies.
    (Right) Example of a self-improvement edit by \gls{ours}, which rewrites Newton--Schulz in the Muon optimizer as a triton kernel with a compiled iteration and adds a learning-rate warmup. The new edit crosses the loss target using shorter time.}
    \label{fig:nanogpt:budget-root}
\end{figure}